
\documentclass{article}

\usepackage{microtype}
\usepackage{graphicx}
\usepackage{booktabs} %

\usepackage{hyperref}

\usepackage[accepted]{icml2023}

\usepackage{amsmath}
\usepackage{amssymb}
\usepackage{mathtools}
\usepackage{amsthm}

\usepackage[capitalize,noabbrev]{cleveref}

\theoremstyle{plain}
\newtheorem{theorem}{Theorem}[section]

\newtheorem{lemma}[theorem]{Lemma}
\newtheorem{corollary}[theorem]{Corollary}
\theoremstyle{definition}
\newtheorem{definition}[theorem]{Definition}

\theoremstyle{remark}

\usepackage[textsize=tiny]{todonotes}

\usepackage{caption}
\usepackage{subcaption} %
\usepackage{wrapfig}
\usepackage{lipsum}

\def\A{\mathbf{A}}
\def\B{\mathbf{B}}
\def\C{\mathbf{C}}

\def\E{\mathbf{E}}
\def\F{\mathbf{F}}

\def\H{\mathbf{H}}
\def\I{\mathbf{I}}

\def\K{\mathbf{K}}
\def\l{\mathbf{L}}

\def\P{\mathbf{P}}

\def\R{\mathbf{R}}

\def\W{\mathbf{W}}
\def\X{\mathbf{X}}
\def\Y{\mathbf{Y}}
\def\Z{\mathbf{Z}}

\def\b{\mathbf{b}}

\def\e{\mathbf{e}}

\def\h{\mathbf{h}}

\def\r{\mathbf{r}}
\def\s{\mathbf{s}}

\def\w{\mathbf{w}}
\def\x{\mathbf{x}}
\def\y{\mathbf{y}}

\def\bTheta{\boldsymbol{\Theta}}

\def\bSigma{\boldsymbol{\Sigma}}

\def\btheta{\boldsymbol{\theta}}

\def\tr{\textrm{Tr}} %

\def\argmin#1{\underset{#1}{\textrm{argmin}}}

\def\minim#1{\underset{#1}{\textrm{min}}}

\def\0{\mathbf{0}}
\def\1{\mathbf{1}}

\DeclareMathOperator{\Tr}{Tr}

\newcommand{\tomt}[1]{\textcolor{black}{#1}}
\newcommand{\tomtb}[1]{\textcolor{black}{#1}}
\newcommand{\tomtm}[1]{\textcolor{black}{#1}}
\newcommand{\tomtr}[1]{\textcolor{black}{#1}}
\newcommand{\hh}[1]{\textcolor{black}{#1}}
\newcommand{\tomtt}[1]{\textcolor{black}{#1}}

\icmltitlerunning{Perturbation Analysis of Neural Collapse}

\begin{document}

\twocolumn[
\icmltitle{Perturbation Analysis of Neural Collapse}

\icmlsetsymbol{equal}{*}

\begin{icmlauthorlist}
\icmlauthor{Tom Tirer}{equal,yyy}
\icmlauthor{Haoxiang Huang}{equal,comp}
\icmlauthor{Jonathan Niles-Weed}{comp}
\end{icmlauthorlist}

\icmlaffiliation{yyy}{Faculty of Engineering, 
Bar-Ilan University, Ramat Gan, Israel}
\icmlaffiliation{comp}{Courant Institute of Mathematical Sciences, 
New York University, NY, US}

\icmlcorrespondingauthor{Tom Tirer}{tirer.tom@gmail.com}

\icmlkeywords{Machine Learning, ICML}

\vskip 0.3in
]

\printAffiliationsAndNotice{\icmlEqualContribution} %

\begin{abstract}
Training deep neural networks for classification often includes minimizing the training loss %
beyond the zero training error point. In this phase of training, a ``neural collapse" behavior has been observed:
the variability of features (outputs of the penultimate layer) of within-class samples decreases and the %
mean features of different classes 
approach 
a certain tight frame structure. 
Recent works analyze this behavior via %
idealized 
unconstrained features models where all the minimizers exhibit %
exact collapse.
However, with %
practical networks and datasets, the features typically do not reach %
exact 
collapse, e.g., because deep layers cannot arbitrarily modify intermediate features that are far from being collapsed. %
In this paper, 
\tomtb{we propose a richer model that can capture this phenomenon by forcing} 
the features to stay in the vicinity of a predefined features matrix (e.g., intermediate features).
We explore the model in the small vicinity case via perturbation analysis and establish results that cannot be obtained by the previously studied models. For example, 
we prove %
\tomtb{reduction in the within-class variability of the optimized features compared to the predefined input features}  
(via analyzing gradient flow on the ``central-path” with minimal assumptions), %
\tomtb{analyze} 
the minimizers in the near-collapse regime, and provide insights on the effect of \tomtb{regularization} hyperparameters on the closeness to collapse. 
We support our theory with experiments in practical deep learning settings. 
\end{abstract}

\section{Introduction} %
\label{sec:INTRO}

Modern classification systems are typically based on deep neural networks (DNNs), whose parameters are optimized using a large amount of labeled training data.
The training scheme 
of these networks 
often includes minimizing the training loss %
beyond the zero training error point \citep{hoffer2017train,ma2018power,belkin2019does}.
In this terminal phase of training, a ``neural collapse" (NC) behavior has been empirically observed when using either cross-entropy (CE) loss \citep{papyan2020prevalence} or mean squared error (MSE) loss \citep{han2021neural}.

The NC behavior includes several simultaneous phenomena that evolve as the number of epochs grows.
The first phenomenon, dubbed NC1, is decrease in the variability of the features (outputs of the penultimate layer) of training samples from the same class. 
The second phenomenon, dubbed NC2, is increasing similarity of the structure of the inter-class features' means (after subtracting the global mean) to a simplex equiangular tight frame (ETF).
The third phenomenon, dubbed NC3, is alignment of the last layer's weights with the inter-class features' means.
A consequence of these phenomena is that the classifier's decision rule becomes similar to nearest class center in feature space.

Many recent works attempt to theoretically analyze the NC behavior 
\citep{mixon2020neural,lu2020neural,wojtowytsch2020emergence,fang2021layer,zhu2021geometric,graf2021dissecting,ergen2021revealing,ji2021unconstrained,galanti2021role,tirer2022extended,zhou2022optimization,thrampoulidis2022imbalance,yang2022we,zhou2022all,kothapalli2023neural}.
The mathematical frameworks are almost always based on variants of the unconstrained features model (UFM), proposed by \cite{mixon2020neural}, which treats the %
\tomtb{(deepest) features of the training samples}  
as free optimization variables (disconnected from data or intermediate/shallow features).
Typically, in these %
``idealized" 
models all the minimizers exhibit ``exact collapse" (i.e., their within-class variability is exactly 0 and an exact simplex ETF structure is demonstrated) provided that arbitrary (but nonzero) level of regularization is used.

However, the features of DNNs 
are not free optimization variables but outputs of predetermined architectures that get training samples as input and have parameters (shared by all the samples) that are hard to optimize. Thus, usually, the deepest features demonstrate reduced ``NC distance metrics" (such as within-class variability) compared to features of intermediate layers but %
do not exhibit 
convergence to an exact collapse. 
Indeed, as can be seen in any NC paper that presents empirical results, 
the decrease in the NC metrics is typically finite and stops above zero at some epoch. 
The margin depends on the dataset complexity, architecture, hyperparameter tuning, etc.
\tomtr{
Yet, due to their ``over-idealization", the previously studied theoretical models mask the effects of these factors on the closeness to collapse and cannot capture the depthwise progress of collapse.}

In this paper, this issue is taken into account by studying
a model that can force the features to stay in the vicinity of a predefined features matrix. %
\tomtb{By considering the predefined features as intermediate features of a DNN, the proposed model allows us to analyze how deep features %
progress from, or relate to, 
shallower features.} 
We explore the model in the small vicinity case via perturbation analysis and establish results that cannot be obtained by the previously studied UFMs.

\tomtt{Our main contributions include:} %
\vspace{-1.5mm}
\begin{itemize}
\itemsep -0.05em 

\item 
\tomtt{
We prove reduction in the within-class variability of the optimized features compared to the predefined input features. To the best of our knowledge, this is the first proof of depthwise reduction of an NC1 metric, as existing works have only demonstrated this behavior empirically.\footnote{\tomtt{Note that  layer-extended %
UFMs are proven to exhibit exact %
zero within-class variability %
across all their layers rather than progressive depthwise reduction, %
which suggests that they are limited 
in modeling the %
depthwise behavior of practical DNNs \cite{tirer2022extended,dang2023neural}.}}}

\item 
\tomtt{
To obtain the aforementioned result (for arbitrary input features), we prove monotonic decrease of an NC1 metric along gradient flow on the ``central-path” of an associated UFM with minimal assumptions (i.e., we drop all the assumptions and modifications of the flow that \citet{han2021neural} did to facilitate their analysis).
Moreover, we establish a separation between the behavior of the within- and between-class covariance of the features along the flow, and show that the rate of decrease of the NC1 metric is exponential in the presence of regularization.}

\item 
\tomtt{
We provide a closed-form approximation for the minimizer of the newly proposed model.
Then, focusing on the case where the input features matrix is already near collapse (e.g., the penultimate features of a well-trained DNN), we present a fine-grained analysis of our closed-form approximation,  
which provides insights on the effect of regularization hyperparameters on the closeness to collapse, 
as well as reasoning why NC1 metrics commonly plateau at lower values than metrics of other NC components (e.g., NC2).}

\item 
\tomtt{
We support our theory with experiments in practical deep learning settings.}

\end{itemize}

\section{Background} %
\label{sec:motivation_orig}

Consider a classification task with $K$ classes and $n$ training samples per class. 
Let us denote by $\y_{k} \in \mathbb{R}^K$ the one-hot vector with 1 in its $k$-th entry and by $\x_{k,i} \in \mathbb{R}^p$ the $i$-th training sample of the $k$-th class. 
DNN-based classifiers can be typically expressed as 
$$
\mathrm{DNN}_{\bTheta}(\x) = \W \h_{\btheta}(\x) + \b,
$$
where $\h_{\btheta}(\cdot):\mathbb{R}^p \xrightarrow{} \mathbb{R}^d$ (with $d\geq K$) is the feature mapping that is composed of multiple layers (with learnable parameters $\btheta$), and $\W = [\w_{1}, \ldots, \w_{K} ]^\top \in \mathbb{R}^{K \times d}$ ($\w_{k}^\top$ denotes the $k$th row of $\W$) and $\b \in \mathbb{R}^K$ are the weights and bias of the last classification layer.
The network's parameters $\bTheta=\{\W,\b,\btheta\}$ are usually learned by %
empirical risk minimization
\begin{align*}
    \minim{\bTheta} \,\,  &\frac{1}{Kn} \sum_{k=1}^K \sum_{i=1}^n \mathcal{L} \left ( \W \h_{\btheta}(\x_{k,i}) + \b , \y_k \right ) 
    + \mathcal{R} \left ( \bTheta \right ),
\end{align*}
where $\mathcal{L}(\cdot,\cdot)$ is a loss function (e.g., CE or MSE\footnote{\tomtr{\cite{hui2020evaluation} have shown that training DNN classifiers with MSE loss is a powerful strategy whose performance is similar to training with CE loss.}}) and $\mathcal{R}(\cdot)$ is a regularization term. %

\tomtr{
Let us denote the feature vector of the $i$-th training sample of the $k$-th class by $\h_{k,i}$ (i.e., $\h_{k,i} = \h_{\btheta}(\x_{k,i})$), and let $\otimes$ denote the Kronecker product. 
\citet{papyan2020prevalence} empirically demonstrated that when $\mathrm{DNN}_{\bTheta}$ is trained beyond the zero training error point, the (organized) features matrix
$\H = \left [ \h_{1,1}, \ldots, \h_{1,n}, \h_{2,1}, \ldots , \h_{K,n}  \right ] \in \mathbb{R}^{d \times Kn}$ gets closer to an exact ``collapse structure", defined as follows.}

\begin{definition}[(Neural) Collapse structure]
\label{def:nc}
\tomtr{
We say that the organized features matrix $\H \in \mathbb{R}^{d \times Kn}$ %
is collapsed, or alternatively, has 
a (neural) collapse structure,
if $\H = \overline{\H} \otimes \1_n^\top$ for some $\overline{\H} \in \mathbb{R}^{d \times K}$, and 
$$\left ( \overline{\H} - \overline{\h}_G \1_K^\top \right )^\top \left ( \overline{\H} - \overline{\h}_G \1_K^\top \right ) = \rho \left ( \I_K - \frac{1}{K} \1_K \1_K^\top \right ),$$ 
for some $\rho>0$,
where
$\overline{\h}_G = \frac{1}{K}\overline{\H}\1_K$ is the global mean.} 
\end{definition}

\tomtr{
In the above definition, $\H = \overline{\H} \otimes \1_n^\top$ reflects zero within-class variability (``exact NC1"), and the property stated for $\overline{\H} - \overline{\h}_G \1_K^\top$ (the mean-subtracted features)
is being referred to as having a simplex ETF structure (``exact NC2").}

Following 
the work of \cite{mixon2020neural},
in order to mathematically show the emergence 
of minimizers with %
NC 
structure, most of the theoretical papers have %
followed the “unconstrained features model” (UFM) approach,
where the features %
$\{ \h_{k,i} = \h_{\btheta}(\x_{k,i}) \}$ 
are treated as free optimization variables.
Namely, they study problems of the form
\begin{align*}
    \minim{\W,\b,\{\h_{k,i}\}} \,\,  & \frac{1}{Kn} \sum_{k=1}^K \sum_{i=1}^n \mathcal{L} \left ( \W \h_{k,i} + \b , \y_k \right )  \\
    & \hspace{25mm} + \mathcal{R} \left ( \W,\b,\{\h_{k,i}\} \right ).
\end{align*}

One such example is the work in \citep{tirer2022extended}, which considered a setting with regularized MSE loss: 
\begin{align}
\label{Eq_prob}
    \minim{\W,\H} \,\, %
\frac{1}{2Kn} \| \W \H - \Y \|_F^2 + \frac{\lambda_W}{2K} \|\W\|_F^2 + \frac{\lambda_H}{2Kn} \|\H\|_F^2, %
\end{align}
where 
$\H = \left [ \h_{1,1}, \ldots, \h_{1,n}, \h_{2,1}, \ldots , \h_{K,n}  \right ] \in \mathbb{R}^{d \times Kn}$ %
is the (organized) unconstrained features matrix, $\Y  = \I_K \otimes \1_{n}^\top \in \mathbb{R}^{K \times Kn}$ %
is its associated one-hot vectors matrix, and $\lambda_W$ and $\lambda_H$ are positive regularization hyperparameters. 

\tomtr{
The model in \eqref{Eq_prob} shares similarity with models in the matrix factorization literature \citep{koren2009matrix,chi2019nonconvex}, except the assumptions $d \geq K$ and the specific structure of $\Y$. (In the model studied here, we further deviate from previous models).
These crucial differences allowed the authors to show} 
that {\em all} the (global) minimizers of this bias-free UFM exhibit an orthogonal collapse, as stated in the following theorem.\footnote{Note that the results in \citep{tirer2022extended} are stated for $\lambda_W \xleftarrow{} \frac{\lambda_W}{K}$ and $\lambda_H \xleftarrow{} \frac{\lambda_H}{Kn}$ (i.e., their hyperparameters absorb the factors $1/K$ and $1/Kn$ that are used here).
Scaling the terms in the objective according to the number of samples, as done in \eqref{Eq_prob}, agrees with what is done in practice (e.g., averaging the squared errors over the minibatch samples rather than summing them). 
Our scaling also highlights the independence of the minimizers' properties on $K$ and $n$.}

\begin{theorem}[Theorem 3.1 in \citep{tirer2022extended}]
\label{thm_nc_of}
Let $d \geq K$ and define $c := \sqrt{\lambda_H \lambda_W}$. If $c \leq 1$, 
then any global minimizer $(\W^*,\H^*)$ of \eqref{Eq_prob} satisfies
\begin{align*}
    &\h_{k,1}^* = \ldots = \h_{k,n}^* =: \overline{\h}_{k}^*, \,\,\,\,\, \forall k \in [K], \\
    &\| \overline{\h}_{1}^* \|_2^2 = \ldots = \| \overline{\h}_{K}^* \|_2^2 =: \rho = (1-c) \sqrt{\frac{\lambda_W}{\lambda_H}}, \\ 
    &\left [ \overline{\h}_{1}^*, \ldots, \overline{\h}_{K}^* \right ]^\top \left [ \overline{\h}_{1}^*, \ldots, \overline{\h}_{K}^* \right ] = \rho \I_K, \\ 
    &\w_k^* = \sqrt{\lambda_H / \lambda_W} \,\, \overline{\h}_{k}^*, \,\,\,\,\, \forall k \in [K].
\end{align*}
If $c > 1$, then \eqref{Eq_prob} is minimized by $(\W^*,\H^*)=(\0,\0)$. 
\end{theorem}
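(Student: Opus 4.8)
The plan is to reduce \eqref{Eq_prob} to a convex spectral problem in the product matrix $\W\H$ and solve it in closed form via singular value thresholding, then translate the solution back into the stated collapse structure.

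First I would obtain within-class collapse essentially for free, by separability. For a fixed $\W$, the objective in \eqref{Eq_prob} decouples across the columns of $\H$: each $\h_{k,i}$ appears only in $\frac{1}{2Kn}\|\W\h_{k,i} - \y_k\|_2^2 + \frac{\lambda_H}{2Kn}\|\h_{k,i}\|_2^2$, a ridge-regression objective depending on $(k,i)$ only through the target $\y_k$. Its minimizer $\h_{k,i} = (\W^\top\W + \lambda_H\I_d)^{-1}\W^\top\y_k$ therefore depends only on $k$. Since at any global minimizer $\H^*$ must be optimal for $\W^*$, all same-class features coincide, so $\H^* = \overline{\H}^*\otimes\1_n^\top$. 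Substituting this and using $\|(\W\overline{\H}-\I_K)\otimes\1_n^\top\|_F^2 = n\|\W\overline{\H}-\I_K\|_F^2$ collapses \eqref{Eq_prob}, up to the positive factor $2K$, to
$$ \minim{\W,\overline{\H}}\ \|\W\overline{\H} - \I_K\|_F^2 + \lambda_W\|\W\|_F^2 + \lambda_H\|\overline{\H}\|_F^2, $$
with $\W\in\mathbb{R}^{K\times d}$ and $\overline{\H}\in\mathbb{R}^{d\times K}$.

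Next I would pass to the product $\M := \W\overline{\H}\in\mathbb{R}^{K\times K}$ using the weighted factorization identity
$$ \minim{\W\overline{\H}=\M}\ \lambda_W\|\W\|_F^2 + \lambda_H\|\overline{\H}\|_F^2 = 2\sqrt{\lambda_W\lambda_H}\,\|\M\|_* = 2c\,\|\M\|_*, $$
where $\|\cdot\|_*$ is the nuclear norm. This follows by aligning the factors to the singular vectors of $\M$ and, for each singular value $\sigma_i$, minimizing $\lambda_W a_i^2 + \lambda_H b_i^2$ subject to $a_i b_i = \sigma_i$ (AM--GM), which gives $2c\,\sigma_i$; the hypothesis $d\geq K$ is exactly what guarantees enough inner dimension to realize the aligned, cross-term-free factorization and attain the bound. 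Minimizing out $(\W,\overline{\H})$ leaves the strictly convex problem $\min_\M \|\M - \I_K\|_F^2 + 2c\|\M\|_*$. Because $\I_K$ has all singular values equal to $1$, singular value thresholding gives the unique solution $\M^* = \max(1-c,0)\,\I_K$: for $c\leq 1$, $\M^* = (1-c)\I_K$, and for $c>1$, $\M^* = \0$.

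Finally I would translate $\M^*$ back. For $c>1$ the minimal-norm factorization of $\M^* = \0$ is $(\W^*,\H^*) = (\0,\0)$, the unique global minimizer. For $c\leq 1$, equality in the AM--GM step forces the aligned factorization with all $a_i = (\lambda_H/\lambda_W)^{1/4}\sqrt{1-c}$ and $b_i = (\lambda_W/\lambda_H)^{1/4}\sqrt{1-c}$, giving $\|\overline{\h}_k^*\|_2^2 = (1-c)\sqrt{\lambda_W/\lambda_H} =: \rho$, the orthogonality $\overline{\H}^{*\top}\overline{\H}^* = \rho\I_K$, and $\w_k^* = (a_k/b_k)\overline{\h}_k^* = \sqrt{\lambda_H/\lambda_W}\,\overline{\h}_k^*$. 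I expect the main obstacle to be this last step: since $\M^* = (1-c)\I_K$ has a $K$-fold repeated singular value, its optimal factorizations form a continuous orbit with an $O(K)$ rotational gauge freedom, so one must check that the quantities in the theorem are precisely the gauge-invariant ones pinned down by equality in the factorization bound (the column norms and Gram matrix are invariant under the left/right orthogonal actions, and the scalar ratio $a_k/b_k=\sqrt{\lambda_H/\lambda_W}$ is common to all $k$), hence hold for every global minimizer and not just one representative. An alternative that sidesteps the orbit bookkeeping is to derive the balance relation $\lambda_W\W^{*\top}\W^* = \lambda_H\overline{\H}^*\overline{\H}^{*\top}$ directly from stationarity --- by left-multiplying the $\W$-gradient by $\W^{*\top}$, right-multiplying the $\overline{\H}$-gradient by $\overline{\H}^{*\top}$, and subtracting --- which yields the alignment immediately and reduces the remaining claims to the scalar spectral computation above.
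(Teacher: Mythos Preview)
The paper does not prove this statement: Theorem~\ref{thm_nc_of} is quoted verbatim as ``Theorem 3.1 in \citep{tirer2022extended}'' and is used as background, with the proof deferred entirely to that reference. So there is no in-paper argument to compare against.

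That said, your plan is correct and is in fact a very clean way to establish the result. The separability step gives within-class collapse immediately (since $\lambda_H>0$ makes each per-column ridge problem strictly convex), the reduction to $\min_{\M}\|\M-\I_K\|_F^2+2c\|\M\|_*$ via the variational characterization of the nuclear norm is standard and uses $d\ge K$ exactly where you say, and soft-thresholding the identity yields $\M^*=(1-c)_+\I_K$ uniquely. Your concern about the repeated singular value is legitimate but you have already identified the cleanest resolution: once you know $\W^*\overline{\H}^*=(1-c)\I_K$, the first-order condition in $\W$ for the reduced problem reads $(\W^*\overline{\H}^*-\I_K)\overline{\H}^{*\top}+\lambda_W\W^*=\0$, i.e., $-c\,\overline{\H}^{*\top}+\lambda_W\W^*=\0$, which gives $\W^*=\sqrt{\lambda_H/\lambda_W}\,\overline{\H}^{*\top}$ directly; substituting back into $\W^*\overline{\H}^*=(1-c)\I_K$ yields $\overline{\H}^{*\top}\overline{\H}^*=\rho\I_K$. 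This bypasses the orbit bookkeeping entirely and I would recommend leading with it rather than presenting it as a fallback.
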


In short, the theorem states that any minimizer $(\W^*,\H^*)$ of \eqref{Eq_prob} obeys that 
$\H^* = \overline{\H} \otimes \1_n^\top$ for some $\overline{\H} \in \mathbb{R}^{d \times K}$, and 
$
\W^{*} \overline{\H} \propto \overline{\H}^\top \overline{\H} \propto \W^*\W^{*\top} \propto  \I_K.
$
It is not hard to show that $\overline{\H}^\top \overline{\H} = \rho \I_K$ implies that 
\tomtr{
$\overline{\H} - \overline{\h}_G \1_K^\top = \overline{\H} - \frac{1}{K}\overline{\H}\1_K \1_K^\top$
is a simplex ETF (see Definition~\ref{def:nc}).}

From the structure of the problem and the theorem, we see that there are infinitely many minimizers of \eqref{Eq_prob}.
Indeed, as can be deduced from the %
proof of Theorem~\ref{thm_nc_of} in \citep{tirer2022extended}:  Taking any (partial) orthonormal matrix $\R \in \mathbb{R}^{d \times K}$ (i.e., $\R^\top\R = \I_K$), 
one can construct a minimizer for \eqref{Eq_prob} simply by $\H^* = \sqrt{\rho(\lambda_{W},\lambda_{H})}\R \otimes \1_n^\top$ and $\W^* = \sqrt{\lambda_H / \lambda_W} \sqrt{\rho(\lambda_{W},\lambda_{H})} \R^\top$.

The existing literature includes other different UFM settings where {\em all} the minimizers exhibit \tomtr{{\em exact}} NC structures
(e.g., see \citep{lu2020neural,wojtowytsch2020emergence,zhu2021geometric,fang2021layer, tirer2022extended,
thrampoulidis2022imbalance}).
However, 
as discussed in Section~\ref{sec:INTRO}, all the previously studied UFMs are idealized and their results deviate from the situation in practical DNN training, where: 1) the features do not exhibit exact collapse (e.g., since deep layers cannot arbitrarily modify intermediate features that are far from being collapsed); and 2) the setting of the hyperparameters affects the distance from %
\tomtr{exact collapse structure}.

\tomtr{
In what follows, we extend the model in \eqref{Eq_prob} to overcome the two limitations mentioned above, as well as being able to explain depthwise behavior.
As will be shown, the theoretical insights that we gain are empirically aligned also with DNNs trained with CE loss and bias (beyond the setting of \eqref{Eq_prob}). Potentially, our novel perturbation analysis approach, which exploits knowledge on exactly collapsed minimizers of
UFMs, can be also applied to models other than \eqref{Eq_prob}.}

\section{\tomtr{Problem Setup}} %
\label{sec:motivation}

\tomtr{
To gain insights on the practical NC behavior, in this paper, we consider a model capable of analyzing the real-world %
situation where exact NC structure is not reached.} 
Motivated by \eqref{Eq_prob}, we consider the following model
\begin{align}
\label{Eq_prob_h0}
    &\minim{\W,\H} \,\, f(\W,\H;\H_0) = \,\, %
\frac{1}{2Kn} \| \W \H - \Y \|_F^2 \\ \nonumber
&\hspace{8mm}+ \frac{\lambda_W}{2K} \|\W\|_F^2 + \frac{\lambda_H}{2Kn} \|\H\|_F^2 + \frac{\beta}{2Kn} \|\H - \H_0 \|_F^2, %
\end{align}
where $\H_0 \in \mathbb{R}^{d \times Kn}$ is an input features matrix, which is fixed, and $\beta$ is a positive hyperparameter that controls the distance of $\H$ from $\H_0$.

Let us discuss the motivation for studying this model. 
As before, we interpret $\W$ and $\H$ as the final weights and deepest features of the DNN, respectively. 
Clearly, for $\H_0=\0$ this model reduces to \eqref{Eq_prob} (with $\|\H\|_F^2$ regularized by $\lambda_H+\beta$).
Furthermore, when $\H_0$ is nonzero, but already %
\tomtr{a minimizer of \eqref{Eq_prob} (and thus has an exact collapse structure),} 
the following statement is straightforward.

\begin{corollary}
\label{cor_nc_of_h0}
Let $d \geq K$, $\lambda_H \lambda_W <1$, %
and let $(\W^*,\H^*)$ be a minimizer of \eqref{Eq_prob}.
Then, the minimizer of $f(\W,\H;\H_0=\H^*)$ (in \eqref{Eq_prob_h0}) is unique\footnote{
Note that in both \eqref{Eq_prob} and \eqref{Eq_prob_h0} the minimizer w.r.t.~$\W$ is a closed-form function of $\H$:
$
\W^*(\H) = \Y \H^\top ( \H \H^\top + n \lambda_{W} \I_d )^{-1}.
$ 
As such, a minimizer $\H^*$ of either objective uniquely implies the associated $\W^*=\W^*(\H^*)$.}
and it is given by $(\W^*,\H^*)$. 
\end{corollary}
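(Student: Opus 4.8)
The plan is to exploit that, once $\H_0$ equals a minimizer $\H^*$ of \eqref{Eq_prob}, the only new term in \eqref{Eq_prob_h0} is a nonnegative penalty that vanishes precisely at $\H=\H^*$. Writing $g(\W,\H)$ for the objective of \eqref{Eq_prob}, I would first record the decomposition
\[
f(\W,\H;\H^*) = g(\W,\H) + \frac{\beta}{2Kn}\|\H-\H^*\|_F^2 ,
\]
and then treat the two summands separately, each being bounded below by its value at $(\W^*,\H^*)$. For optimality, note that $g(\W,\H)\ge g(\W^*,\H^*)$ for all $(\W,\H)$ because $(\W^*,\H^*)$ is a global minimizer of \eqref{Eq_prob}, while the penalty is nonnegative; adding these gives $f(\W,\H;\H^*)\ge g(\W^*,\H^*)=f(\W^*,\H^*;\H^*)$, so $(\W^*,\H^*)$ is a global minimizer of $f(\cdot,\cdot;\H^*)$.

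The substantive step is uniqueness, and it is the part I would treat most carefully, since \eqref{Eq_prob} itself has a whole continuum of minimizers (the rotational orbit described just above the statement); the content of the corollary is that the $\beta$-penalty collapses this orbit to a single point. Let $(\W',\H')$ be any minimizer of $f(\cdot,\cdot;\H^*)$. Then equality must hold in the bound above, and since $g(\W',\H')\ge g(\W^*,\H^*)$ and the penalty is nonnegative while their sum equals $g(\W^*,\H^*)$, both summands must attain their minima individually; in particular $\frac{\beta}{2Kn}\|\H'-\H^*\|_F^2=0$, and because $\beta>0$ this forces $\H'=\H^*$. It then remains to pin down $\W'$: with $\H=\H^*$ fixed, the $\W$-dependent part of $f$ is $\frac{1}{2Kn}\|\W\H^*-\Y\|_F^2+\frac{\lambda_W}{2K}\|\W\|_F^2$, a strictly convex quadratic in $\W$ (as $\lambda_W>0$) whose unique minimizer is the closed form $\W^*(\H^*)=\Y(\H^*)^\top\big((\H^*)(\H^*)^\top+n\lambda_W\I_d\big)^{-1}$ recalled in the footnote. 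Since $(\W',\H^*)$ minimizes $f$ it must minimize over $\W$ for this fixed $\H^*$, giving $\W'=\W^*(\H^*)$; the same reasoning applied to the given minimizer gives $\W^*=\W^*(\H^*)$, so $(\W',\H')=(\W^*,\H^*)$.

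The only genuine subtlety is this uniqueness argument: optimality is immediate from nonnegativity, but uniqueness rests essentially on the strict positivity of both $\beta$ (to force $\H'=\H^*$ and thereby break the rotational degeneracy of \eqref{Eq_prob}) and $\lambda_W$ (to make the $\W$-subproblem strictly convex with a unique normal-equation solution). I would also note for clarity that the perturbation argument never uses the detailed collapse geometry of $(\W^*,\H^*)$ from \Cref{thm_nc_of}; the hypotheses $d\ge K$ and $\lambda_H\lambda_W<1$ enter only to guarantee, via that theorem, that a (nonzero, collapsed) minimizer of \eqref{Eq_prob} exists to serve as $\H_0$ in the first place.
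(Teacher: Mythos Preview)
Your proof is correct and is precisely the straightforward argument the paper has in mind (the paper does not give an explicit proof, calling the statement ``straightforward''). The decomposition $f=g+\frac{\beta}{2Kn}\|\H-\H^*\|_F^2$, the lower bound from global optimality of $(\W^*,\H^*)$ for $g$, and the uniqueness step via $\beta>0$ forcing $\H'=\H^*$ followed by strict convexity in $\W$ (from $\lambda_W>0$) are exactly right.
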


That is, \eqref{Eq_prob_h0} allows us to pick one of the minimizers of \eqref{Eq_prob} by $\H_0$ and transfer its orthogonal collapse properties, which are stated in Theorem~\ref{thm_nc_of}, to the minimizer of \eqref{Eq_prob_h0}.%

However, the %
usefulness of \eqref{Eq_prob_h0} comes from exploring cases with nonzero/non-collapsed $\H_0$. 
Indeed, while $\H$ can be interpreted as the deepest features of a DNN, here we interpret $\H_0$ 
as the features that are obtained in a shallower layer. %
In this case, $1/\beta$ can be understood as the complexity of the subnetwork %
from $\H_0$ to $\H$.
\tomtr{We are particularly interested in the 
the large $\beta$ regime, $\beta \gg 1$, where $\H_0$ expresses penultimate features (only one layer before $\H$) that significantly constrain $\H$.} %
Focusing on the large $\beta$ regime, in this paper we provide 
mathematical reasoning for the empirical NC behavior that are not captured by previously studied UFMs, such as proving that the optimized $\H$ has smaller within-class variability than $\H_0$, and analyzing how perturbations from collapse of $\H_0$ can be mitigated by the minimizer of \eqref{Eq_prob_h0}.

\tomtr{
Lastly, note that
even though the relation between $\H$ and $\H_0$ in \eqref{Eq_prob_h0}
differs from their explicit relation in DNNs, 
{\em there exist networks and settings
that motivate the assumption 
that the deepest features and the penultimate features are close to each other}. 
For example, consider the ResNet architecture from \citep{he2016identity} \tomtt{that explicitly includes identity mappings}, where (under our interpretation of $\H$ and $\H_0$) the deepest features obey $\H = \H_0 + \r(\H_{0})$, where $\r(\cdot)$ denotes a residual block. The residual term can potentially be very small if $\H_0$ already separates the classes (e.g., it has a ``near NC" structure). 
In fact, in the popular neural ODE framework \citep{chen2018neural}, which is understood as the infinite depth limit of these ResNets, we inherently have that $\H \approx \H_{0}$.
Another example where the concept $\H \approx \H_{0}$ inherently holds is deep equilibrium models (DEQ) \citep{bai2019deep}.
These practical DNN frameworks provide the rationality for analyzing our model.
Furthermore, our theoretical results %
are aligned also with the empirical behavior of DNN architectures beyond the aforementioned examples (e.g., \tomtt{the other version of ResNet \cite{he2016deep}} and plain MLPs).}

\section{Decrease in Within-Class Variability}
\label{sec:nc1}

As discussed %
above, 
while the features matrix $\H$ represents the output of a DNN's penultimate layer, the input matrix $\H_0$ can be interpreted as the features of a preceding layer.
\citet{tirer2022extended} and follow-up works \citep{galanti2022note,he2022law} 
have \tomtt{{\em empirically}} presented %
settings where 
the within-class variability of the features, %
measured by some ``NC1 metric", decreases across depth. %
The goal of this section is to {\em prove} such a phenomenon for the model stated in \eqref{Eq_prob_h0}.
The theory that we provide shows also monotonic decrease of the within-class variability (till exact collapse) along gradient flow on the “central-path” of the UFM stated in \eqref{Eq_prob}.

Let us begin with several definitions that will be used in this section.
For a given set of $n $ features for each of $K$ classes, $\{ \h_{k,i} \}$, we define the per-class and global means as
$\overline{\h}_k := \frac{1}{n}\sum_{i=1}^n \h_{k,i}$ and $\overline{\h}_G := \frac{1}{Kn}\sum_{k=1}^k \sum_{i=1}^n \h_{k,i}$, respectively,
as well as the mean features matrix
$\overline{\H}:=\left [ \overline{\h}_1, \ldots, \overline{\h}_K \right ]$.
Next, we define the within-class and between-class $d \times d$ covariance matrices
\begin{align*}
 \bSigma_W(\H) &:= \frac{1}{Kn}\sum_{k=1}^K \sum_{i=1}^n (\h_{k,i}-\overline{\h}_k)(\h_{k,i}-\overline{\h}_k)^\top,  \\ 
 \bSigma_B(\H) &:=\frac{1}{K}\sum_{k=1}^K (\overline{\h}_k-\overline{\h}_G)(\overline{\h}_k-\overline{\h}_G)^\top.
\end{align*}

The within-class variability collapse (NC1) can be expressed as $\bSigma_W(\H) \rightarrow{} \0$ while $\bSigma_B(\H) \nrightarrow{} \0$, %
where the limit takes place with increasing the training epoch, and $\bSigma_B(\H)>0$ filters degenerate cases such as $\H=\0$. 
Several papers considered in their experiments the metric $\frac{1}{K}\tr \left ( \bSigma_W(\H) \bSigma_B^\dagger(\H) \right )$, 
where $\bSigma_B^\dagger$ denotes the pseudoinverse of $\bSigma_B$ \citep{papyan2020prevalence,han2021neural,zhu2021geometric}.
Yet, %
we believe that considering the metric
\begin{align}
\label{Eq_our_nc1}
    \widetilde{NC}_{1}(\H) :=\tr\left ( \bSigma_W(\H) \right )/\tr \left ( \bSigma_B(\H) \right )
\end{align}
is more amenable for theoretical analysis while capturing the desired nondegenerate collapse behavior.\footnote{\tomtb{The metric $\frac{1}{K}\tr \left ( \bSigma_W \bSigma_B^\dagger \right )$ was considered in \citep{han2021neural}. Yet, to state a result on this metric the authors claim (in the proof of Cor.~2) that a nonzero eigenvalue of $\bSigma_W^{-1/2} \overline{\H}\overline{\H}^\top \bSigma_W^{-1/2}$ equals the reciprocal of the associated nonzero eigenvalue of $\bSigma_W^{1/2} ( \overline{\H}\overline{\H}^\top )^\dagger \bSigma_W^{1/2}$.
However, this is \tomtm{not correct in general} (due to the inherent rank deficiency of $\overline{\H}\overline{\H}^\top$). For example, for $\bSigma_W^{1/2}={\tiny \begin{bmatrix} 2 & 1 \\ 1 & 2 \end{bmatrix}}$ and $\overline{\H}\overline{\H}^\top = {\tiny \begin{bmatrix} 1 & 0 \\ 0 & 0 \end{bmatrix}}$, we have that the single nonzero eigenvalue of the former is $5/9$ while the single nonzero eigenvalue of the latter is $5$.}}
Indeed, the trace of a covariance matrix equals zero if and only if the covariance matrix is a zero matrix (this follows from $\mathrm{Cov}^2(X,Y) \leq \mathrm{Var}(X) \mathrm{Var}(Y)$).

Recall that the minimizer w.r.t.~$\W$ in \eqref{Eq_prob_h0} (and \eqref{Eq_prob}) has a closed-form expression that is a function of $\H$, which is given by
$
\W^*(\H) = \Y \H^\top ( \H \H^\top + n \lambda_{W} \I_d )^{-1}.
$ 
Thus, the optimization in \eqref{Eq_prob_h0} is equivalent to 
\begin{align*}
    \H_{{1}/{\beta}} := \argmin{\H} \,\, %
\mathcal{L}(\H) + \frac{\beta}{2Kn} \|\H - \H_0 \|_F^2
\end{align*}
where
$
    \mathcal{L}(\H) := \frac{1}{2Kn} \| \W^*(\H) \H - \Y \|_F^2 + \frac{\lambda_W}{2K} \| \W^*(\H) \|_F^2 + \frac{\lambda_H}{2Kn} \|\H\|_F^2.
$

For large $\beta$, the minimizer $\H_{{1}/{\beta}}$ can be viewed as a backward/implicit gradient descent update from $\H_0$ with respect to the loss $\mathcal{L}$. %
This follows from rewriting
the first order optimality condition as
\begin{align*}
    \frac{\H_{{1}/{\beta}}-\H_0}{1/\beta}=-Kn\nabla\mathcal{L}(\H_{{1}/{\beta}}). %
\end{align*}
Observing that for $\beta\to\infty$ we have $\H_{{1}/{\beta}} \to \H_0$ (formally shown in Appendix~\ref{app:proof5}), the above equation can be written as
    $\frac{d\H_t}{dt}\big|_{t=0} = -Kn\nabla\mathcal{L}(\H_0)$,
where we think of $t$ as $\beta^{-1}$. This naturally gives rise to the gradient flow
\begin{align}
\label{Eq_grad_flow}
    \frac{d\H_t}{dt} = -Kn\nabla\mathcal{L}(\H_t),
\end{align}
associated with the UFM in \eqref{Eq_prob}.
This means that results on this flow can be translated to results on the minimizer of \eqref{Eq_prob_h0} in the large $\beta$ regime. 
Indeed, in Theorem~\ref{nc_decrease_flow} below, we show that $\widetilde{NC}_1(\H)$ monotonically decreases along this flow, which implies
that $\widetilde{NC}_1(\H_{1/\beta}) < \widetilde{NC}_1(\H_0)$ for large enough $\beta$ (see the statement in Corollary~\ref{ncdecrease} below). 

Note that a flow for an objective that is equivalent to $\mathcal{L}(\H)$ with $\lambda_W=0$ and $\lambda_H=0$ has been studied in \citep{han2021neural}, %
who called it the 
``central path". The motivation for studying such an objective, where the optimization variable $\W$ is replaced by the optimal $\W^*(\H)$, comes from %
the empirical observation in \citep{han2021neural} 
that the gap
\tomtr{from the ``central path” measured by} 
$\| \W \H - \Y \|_F^2 - \| \W^*(\H) \H - \Y \|_F^2$ is rather small (compared to each term) during the optimization process of practical DNNs with MSE loss. 

We now state %
our result for gradient flow on the ``central path"
(which is proved in Appendix~\ref{app:proof4}).

\begin{theorem} 
\label{nc_decrease_flow}
Assume that $\lambda_W>0$, \tomtb{$\lambda_H \geq 0$}, and that $\H_0$ %
\tomtr{has nonzero within-class variability} 
\tomtb{%
(i.e., $\bSigma_W(\H_0) \neq \0$).}
Then, along the gradient flow, which is stated in \eqref{Eq_grad_flow}, we have that 
\begin{itemize}
    \item $\widetilde{NC}_1(\H_t)$ strictly decreases along the flow %
    until 
    it reaches zero. %
    \item $t\mapsto e^{2\lambda_H t}\Tr(\bSigma_W(\H_t))$ decreases along the flow. In particular, when $\lambda_H>0$, $\Tr(\bSigma_W(\H_t))$ decays exponentially.
    \item $t\mapsto e^{2\lambda_H t}\Tr(\bSigma_B(\H_t))$ strictly increases along the flow.
\end{itemize}

\end{theorem}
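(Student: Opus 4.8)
The plan is to reduce all three claims to the evolution of the two scalars $\tr(\bSigma_W(\H_t))$ and $\tr(\bSigma_B(\H_t))$ and to deduce the first bullet from the other two. First I would record the flow in a usable form: since $\W^*(\H)$ exactly minimizes the inner ($\W$) subproblem and the $\lambda_H$-term is independent of $\W$, the envelope theorem gives $\nabla\mathcal{L}(\H)=\frac{1}{Kn}\big(\W^\top(\W\H-\Y)+\lambda_H\H\big)$ with $\W:=\W^*(\H)$, so \eqref{Eq_grad_flow} reads $\dot{\H}=\W^\top(\Y-\W\H)-\lambda_H\H$. I would then write both traces as quadratic forms through fixed projectors: with $\P_W:=\I_{Kn}-\frac1n\Y^\top\Y$ one has $\tr(\bSigma_W(\H))=\frac1{Kn}\tr(\H\P_W\H^\top)$, and with $\P_K:=\I_K-\frac1K\1_K\1_K^\top$ and $\overline{\H}=\frac1n\H\Y^\top$ one has $\tr(\bSigma_B(\H))=\frac1K\tr(\overline{\H}\P_K\overline{\H}^\top)$. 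The single identity doing most of the work is $\Y\P_W=\0$.

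For the within-class part, differentiating along the flow and using $\Y\P_W=\0$ to kill the $\W^\top\Y$ contribution, together with $\H\P_W\H^\top=\widetilde{\H}\widetilde{\H}^\top$ where $\widetilde{\H}:=\H\P_W$, yields
\[\tfrac{d}{dt}\tr(\bSigma_W(\H_t))=-\tfrac2{Kn}\|\W\widetilde{\H}\|_F^2-2\lambda_H\tr(\bSigma_W(\H_t)).\]
Multiplying by the integrating factor $e^{2\lambda_H t}$ gives $\frac{d}{dt}\big(e^{2\lambda_H t}\tr(\bSigma_W)\big)=-\frac2{Kn}e^{2\lambda_H t}\|\W\widetilde{\H}\|_F^2\le0$, which is the second bullet; for $\lambda_H>0$ it forces $\tr(\bSigma_W(\H_t))\le e^{-2\lambda_H t}\tr(\bSigma_W(\H_0))$, i.e.\ exponential decay.

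For the between-class part I would push the flow onto the class means, $\dot{\overline{\H}}=\W^\top(\I_K-\W\overline{\H})-\lambda_H\overline{\H}$ (right-multiply $\dot{\H}$ by $\frac1n\Y^\top$ and use $\Y\Y^\top=n\I_K$, $\H\Y^\top=n\overline{\H}$). Differentiating $\tr(\bSigma_B)$ and again using $e^{2\lambda_H t}$, the $-\lambda_H$ term cancels the $+2\lambda_H\tr(\bSigma_B)$ exactly, leaving
\[\tfrac{d}{dt}\big(e^{2\lambda_H t}\tr(\bSigma_B(\H_t))\big)=\tfrac2K e^{2\lambda_H t}\,\tr\!\big(\W^\top(\I_K-\W\overline{\H})\P_K\overline{\H}^\top\big).\]
This is where the real work lies. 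Substituting $\W=n\overline{\H}^\top\M^{-1}$ with $\M:=\H\H^\top+n\lambda_W\I_d=n\overline{\H}\,\overline{\H}^\top+\widetilde{\H}\widetilde{\H}^\top+n\lambda_W\I_d\succ\0$, a push-through/Woodbury identity collapses the right-hand side to $\frac2K e^{2\lambda_H t}\tr\big(\G(\I_K-\G)\P_K\big)$, where $\G:=n\overline{\H}^\top\M^{-1}\overline{\H}=\S(\tfrac1n\I_K+\S)^{-1}$ with $\S:=\overline{\H}^\top\E^{-1}\overline{\H}\succeq\0$ and $\E:=\widetilde{\H}\widetilde{\H}^\top+n\lambda_W\I_d\succ\0$. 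Since $\G$ and $\I_K-\G$ are commuting functions of $\S$ with eigenvalues in $[0,1)$ and $(0,1]$, the product $\G(\I_K-\G)$ is positive semidefinite, and because $\P_K$ is a symmetric projector $\tr(\G(\I_K-\G)\P_K)=\|(\G(\I_K-\G))^{1/2}\P_K\|_F^2\ge0$, with equality iff $\overline{\H}\P_K=\0$, i.e.\ iff $\bSigma_B=\0$. Under the nondegeneracy $\bSigma_B(\H_t)\neq\0$ (needed anyway for $\widetilde{NC}_1$ to be defined) the inequality is strict, giving the third bullet.

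Finally the first bullet is essentially free: setting $a(t):=e^{2\lambda_H t}\tr(\bSigma_W(\H_t))$ and $b(t):=e^{2\lambda_H t}\tr(\bSigma_B(\H_t))$, the integrating factors cancel in $\widetilde{NC}_1(\H_t)=a(t)/b(t)$, where $a\ge0$ is nonincreasing and $b>0$ is strictly increasing; hence while $\tr(\bSigma_W)>0$ (so $a>0$) we have $\frac{d}{dt}(a/b)=(a'b-ab')/b^2<0$, i.e.\ strict decrease until $\bSigma_W$ reaches $\0$. I expect the main obstacle to be the between-class step, namely obtaining the clean Woodbury form $\G=\S(\frac1n\I_K+\S)^{-1}$ and then establishing strict positivity of the trace against $\P_K$; the within-class computation and the final ratio argument are comparatively routine.
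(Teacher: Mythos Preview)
Your proof is correct and follows a genuinely different route from the paper. The paper first simplifies $\mathcal{L}(\H)$ to a closed form in the covariance matrices $\bSigma_W,\tilde{\bSigma}_B,\tilde{\bSigma}_T$, then computes the time derivatives of all three $d\times d$ covariance matrices along the flow (stated as a lemma), and finally takes traces; positivity of $\frac{d}{dt}\big(e^{2\lambda_H t}\tr\bSigma_B\big)$ is obtained by rewriting it as $\frac{2}{K}e^{2\lambda_H t}\tr\big(\bSigma_B(\tilde{\bSigma}_T+\tfrac{\lambda_W}{K}\I)^{-1}(\bSigma_W+\tfrac{\lambda_W}{K}\I)(\tilde{\bSigma}_T+\tfrac{\lambda_W}{K}\I)^{-1}\big)$ and invoking Von Neumann's trace inequality. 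You instead use the envelope theorem to get $\nabla\mathcal{L}$ in one line, recast the two traces via the fixed projectors $\P_W$ and $\P_K$, and for the between-class term apply Woodbury to reduce everything to the $K\times K$ quantity $\tr(\G(\I_K-\G)\P_K)$ with $\G=\S(\tfrac{1}{n}\I_K+\S)^{-1}$. Your envelope-theorem shortcut bypasses the paper's page of loss simplification and matches its gradient formula exactly (one can check $\C\H-\C(\bSigma_W+\tfrac{\lambda_W}{K}\I)\C\H=\C\tilde{\bSigma}_B\C\H=K\W^{*\top}\W^*\H$); the identity $\Y\P_W=\0$ makes the within-class computation nearly trivial; and the $\G(\I_K-\G)$ form makes both nonnegativity and the equality case $\overline{\H}\P_K=\0$ transparent. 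The paper's approach, in return, yields the full matrix derivatives $\tfrac{d}{dt}\bSigma_W,\tfrac{d}{dt}\bSigma_B,\tfrac{d}{dt}\tilde{\bSigma}_T$ (not just their traces), which is extra information though not needed for the theorem. Both proofs implicitly rely on $\bSigma_B(\H_0)\neq\0$ for the strict inequality in the third bullet, exactly as you note.
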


\textbf{Remark.} 
\tomtb{Note that our gradient flow analysis has minimal assumptions. Unlike
\citep{han2021neural}, our flow does not assume
zero global mean ($\overline{\h}_G=\0$), 
$\lambda_W=\lambda_H=0$ and invertibility of $\bSigma_W$.} 
And most importantly, it does not include any \tomtb{engineered} %
renormalization \tomtb{and projection} of the gradient, %
contrary to the previous work. %
Thus, it is more similar to practical gradient descent optimization of DNNs.
Our \tomtb{unmodified flow and minimal} 
assumptions require a different, and more general, analysis with \tomtm{quite} involved computations.\footnote{{In more detail, all the assumptions in \citep{han2021neural} (including continually renormalization the gradient) lead to the fact that only the singular values (and not the singular vectors) of an ``SNR matrix" $\bSigma_W^{-1/2}(\H)\overline{\H}$ vary along their flow. However, since we do not make their assumptions, we do not have such a matrix whose singular bases are fixed along the flow and 
we need to approach the problem in a 
more general way.}}

\tomtt{Not only does Theorem~\ref{nc_decrease_flow} state a strict monotonic decrease toward 0 in the NC1 metric, it further implies {\em exponential rate of convergence} if $\lambda_H>0$.
Indeed, in this case $\Tr(\bSigma_W)$ decays exponentially with a rate faster than $e^{-2\lambda_H t}$ (as implied by the second bullet point), while $\Tr(\bSigma_B)$
cannot decay exponentially with such a rate (as implied by the third bullet point). Therefore, the NC1 metric $\Tr(\bSigma_W)/\Tr(\bSigma_B)$, decays exponentially.
Similarly, Theorem~\ref{nc_decrease_flow}} 
also provides a separation between the behavior of $\Tr(\bSigma_W)$ and $\Tr(\bSigma_B)$ along the flow. \tomtb{A strict separation is observed for $\lambda_H=0$: $\Tr(\bSigma_W)$ decreases while $\Tr(\bSigma_B)$ increases.} 

Oftentimes, gradient flow is used as a proxy for analyzing gradient descent with a small step-size \citep{elkabetz2021continuous}.
Therefore, 
if we overlook the difference between optimizing the UFM in \eqref{Eq_prob} jointly w.r.t.~$\W$ and $\H$ and restricting the optimization to the ``central path" $(\W^*(\H),\H)$, then our theory also provides a mathematical reasoning for the experiments in \citep{tirer2022extended} that show monotonic decrease in within-class variability during gradient descent iterations. %

Finally, 
with our interpretation of $t$ as $\beta^{-1}$, 
the following Corollary is a direct consequence of Theorem~\ref{nc_decrease_flow} and the continuity of $\nabla\mathcal{L}(\H)$ (see Appendix~\ref{app:proof5} for a formal proof). 

\begin{corollary} \label{ncdecrease}
Assume that $\H_0$ %
\tomtr{has nonzero within-class variability (i.e., $\bSigma_W(\H_0) \neq \0$).}
Then, there exists some constant $C=C(\H_0)>0$ such that for $\beta>C$ we have that $\widetilde{NC}_1(\H_{1/\beta}) < \widetilde{NC}_1(\H_0)$.
\end{corollary}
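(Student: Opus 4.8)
The plan is to exploit the fact, already isolated in the text, that the curve $\beta\mapsto \H_{1/\beta}$ and the gradient flow $\H_t$ of \eqref{Eq_grad_flow} share the same first-order behavior at the common initial point $\H_0$, and then to invoke the strict decrease established in Theorem~\ref{nc_decrease_flow}. Concretely, I would reparametrize by $t=\beta^{-1}$ and write $\tilde\H_t := \H_{1/\beta}$, setting $\tilde\H_0 := \H_0$ in accordance with the limit $\H_{1/\beta}\to\H_0$ as $\beta\to\infty$ (Appendix~\ref{app:proof5}). The first task is to show that $t\mapsto\tilde\H_t$ is right-differentiable at $t=0$ with derivative $-Kn\nabla\mathcal{L}(\H_0)$, i.e.\ the same initial velocity as the flow $\H_t$. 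This follows by dividing the first-order optimality condition $\tilde\H_t-\H_0 = -Kn\,t\,\nabla\mathcal{L}(\tilde\H_t)$ by $t$ and letting $t\to 0^+$: since $\tilde\H_t\to\H_0$ and $\nabla\mathcal{L}$ is continuous, the difference quotient $(\tilde\H_t-\H_0)/t$ converges to $-Kn\nabla\mathcal{L}(\H_0)$.

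Next I would observe that $\widetilde{NC}_1$ is a smooth (rational) function of $\H$ in a neighborhood of $\H_0$: its numerator $\tr(\bSigma_W(\H))$ and denominator $\tr(\bSigma_B(\H))$ are quadratic in $\H$, and the metric is well defined exactly when $\tr(\bSigma_B(\H_0))>0$, which keeps the denominator bounded away from zero nearby. Hence, by the chain rule, the one-sided derivative of $g(t):=\widetilde{NC}_1(\tilde\H_t)$ at $t=0$ equals $\langle\nabla\widetilde{NC}_1(\H_0),\,-Kn\nabla\mathcal{L}(\H_0)\rangle$ (Frobenius inner product), which is precisely the value of $\frac{d}{dt}\widetilde{NC}_1(\H_t)\big|_{t=0}$ for the flow, because the two curves start with the same velocity.

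Theorem~\ref{nc_decrease_flow} shows that $\widetilde{NC}_1(\H_t)$ strictly decreases along the flow as long as $\bSigma_W(\H_t)\neq\0$; in particular its right-derivative at $t=0$ is strictly negative, since $\bSigma_W(\H_0)\neq\0$. By the identity above, $g'(0^+)<0$, so $g$ is continuous with a strictly negative one-sided derivative at $0$, which yields some $\delta>0$ with $g(t)<g(0)$ for all $t\in(0,\delta)$. Setting $C:=1/\delta$ and undoing the reparametrization gives $\widetilde{NC}_1(\H_{1/\beta})<\widetilde{NC}_1(\H_0)$ for every $\beta>C$, as claimed.

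I expect the main obstacle to be the rigorous justification of the first-order consistency in the first step, namely that the implicit (backward-Euler-type) update $\H_{1/\beta}$ is tangent to the explicit gradient flow at $t=0$. The convergence $\H_{1/\beta}\to\H_0$ must be secured first (this is where Appendix~\ref{app:proof5} enters), and one must ensure the difference quotient genuinely converges rather than merely staying bounded; continuity of $\nabla\mathcal{L}$ settles this cleanly once the limit is in hand. A secondary point worth checking is that Theorem~\ref{nc_decrease_flow} indeed supplies a strictly negative derivative (not only monotonicity of the metric), so that the strict sign is preserved when it is transferred from the flow to the curve $g$.
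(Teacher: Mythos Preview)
Your proposal is correct and follows essentially the same approach as the paper: both first establish $\H_{1/\beta}\to\H_0$ (via a growth bound on $\nabla\mathcal{L}$), then pass to the limit in the optimality condition to identify the initial velocity with $-Kn\nabla\mathcal{L}(\H_0)$, apply the chain rule to $\widetilde{NC}_1$, and finally invoke the strictly negative derivative at $t=0$ obtained in the proof of Theorem~\ref{nc_decrease_flow}. Your flagged caveats (the need to cite the \emph{proof} of Theorem~\ref{nc_decrease_flow} for strict negativity at $t=0$, and the implicit requirement $\tr(\bSigma_B(\H_0))>0$ for $\widetilde{NC}_1$ to be smooth) are apt and match what the paper does or leaves implicit.
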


Recall 
that in the large $\beta$ regime we can interpret $\H$ as features of DNN that are deeper than $\H_0$ %
but such that 
the architecture between $\H_0$ and $\H$ is extremely simple (e.g., they are features of adjacent layers) and thus the distance between them is constrained.
Under this interpretation, Corollary~\ref{ncdecrease} implies that layer-wise optimization of DNN where each time a new layer is added (so that the previous deepest features %
$\H_{{1}/{\beta}}$
are considered as the new $\H_0$) will result in gradually depthwise decreasing NC1.
An extension of %
the model in
\eqref{Eq_prob_h0} that will include multiple levels of optimizable parameters 
may be able to
provide similar reasoning to the gradual depthwise decrease in NC1 that is observed in practical DNN training, where all the layers are optimized simultaneously.

\section{Analysis of the Near-Collapse Regime}
\label{sec:analysis}

In this section, we will explore the behavior of the minimizers of \eqref{Eq_prob_h0} in the near collapse regime.
As stated in Corollary~\ref{cor_nc_of_h0}, if %
\tomtr{$\H_0=\H^*$ is a minimizer of \eqref{Eq_prob}, and thus already exactly collapsed}, 
then the minimizer of \eqref{Eq_prob_h0} is also collapsed.
This is aligned with the rationale that if we have a DNN that already exhibits collapse at some intermediate layer, we would expect the subsequent layers to maintain this collapse.\footnote{This is also aligned with empirical observations of gradual depthwise collapse in practical DNNs and with Corollary~\ref{ncdecrease} at the limit where $\H_0$ is nearly collapsed.}
Essentially, we would like to analyze the minimizer of \eqref{Eq_prob_h0} for $\H_0$ that is not already collapsed.
Unfortunately, for general non-collapsed $\H_0$ it is not likely that the minimizer is amenable for explicit analytical characterization. 
Yet, the fact that for orthogonally collapsed $\H_0=\H^*$ we get a unique minimizer $(\W^*,\H^*)$ of \eqref{Eq_prob_h0}, which is %
still 
characterized by Theorem~\ref{thm_nc_of}, gives us a desirable setting for %
examining 
the minimizer of \eqref{Eq_prob_h0} obtained for $\tilde{\H}_0=\H^* + \delta \H_0$ (with sufficiently small $\delta \H_0$)
by exploiting our knowledge on $(\W^*,\H^*; \H_0=\H^*)$.

\tomtb{Analyzing the near-collapse setting will shed light on the way that the deviation from collapse 
in the input features is transferred to the optimized features, e.g., the amount of interaction within/between classes and the effects of hyperparameters. 
Such insights can be latter examined empirically beyond the near-collapse regime.}

Let us denote by $(\tilde{\W}^*,\tilde{\H}^*)$ the minimizer of $f(\W,\H;\tilde{\H}_0)$. We are interested in studying the dependence of $\delta \W := \tilde{\W}^* - {\W}^*$ and $\delta \H := \tilde{\H}^* - {\H}^*$ on $\delta \H_0 = \tilde{\H}_0 - \H^*$
without the requirement of computing $(\tilde{\W}^*,\tilde{\H}^*)$ (that lack analytical expressions). 
In particular, our focus is on the relation between the features $\delta \H$ and $\delta \H_0$ (rather than $\delta \W$ and $\delta \H_0$), both because a minimizer $\tilde{\H}^*$ uniquely implies the associated $\tilde{\W}^*$, and because important aspects of NC, such as within-class variability decrease (NC1) and inter-class feature structure (NC2), consider the feature mapping rather than the last layer weights.

We begin with establishing such a result in the following theorem (which is proved in Appendix~\ref{app:proof2})
for $\H_0$ that is not necessarily a collapsed features matrix.
\tomtt{For the reader's convenience, we present here a simplified version of a more general theorem that is stated \tomtt{and proved} 
in Appendix~\ref{app:proof2}. Specifically, the statement here includes only the effect of $\delta \H_0$ on $\delta \H$ and the large $\beta$ regime. 
In what follows, we use $\mathrm{vec}(\cdot)$ to denote the column-stack vectorization of a matrix.}

\begin{theorem}
\label{thm_small_error}
Let $d \geq K$, %
and set some $\H_0$ and $\delta \H_0$.
Let 
$(\hat{\W}^*,\hat{\H}^*)$ be the minimizer of $f(\W,\H; \H_0)$ (with $f$ stated in \eqref{Eq_prob_h0}). 
Let $(\tilde{\W}^*,\tilde{\H}^*)$ be the minimizer of $f(\W,\H; \tilde{\H}_0=\H_0 + \delta \H_0)$. %
Define %
$\delta \H := \tilde{\H}^* - \hat{\H}^*$.
Then, for $\beta \gg \tomtb{\mathrm{max}\{1, \lambda_H\}}$, with approximation accuracy of $O(\beta^{-2},\|\delta \H_0\|^2)$, we have that
\begin{align*}
\mathrm{vec}(\delta \H) \approx \F ~\mathrm{vec}(\delta \H_0),
\end{align*}
with 
\begin{align}
\label{Eq_small_err_dH_neuman}
\F = \I_{dnK} - \frac{\lambda_H}{\beta} \I_{dnK} - \frac{1}{\beta} \I_{nK} \otimes \hat{\W}^{*\top} \hat{\W}^{*}  + \frac{1}{\beta} \Z^*,
\end{align}
where
\begin{align*}
    &\Z^* := ( \E^{*\top} + \hat{\H}^*\otimes\hat{\W}^* )^\top ( \hat{\H}^* \hat{\H}^{*\top} \otimes \I_K + n\lambda_W \I_{dK} )^{-1} \\
    &\hspace{10mm} \times ( \E^{*\top} + \hat{\H}^*\otimes\hat{\W}^* ), %
\end{align*}
with $\E^* \in \mathbb{R}^{dnK \times Kd}$ whose $((i-1)K+k)$-th column (for $i\in [d]$ and $k \in [K]$) 
is given by
$$
\E^*[:,(i-1)K+k] = \mathrm{vec}(\e_{d,i}\e_{K,k}^\top(\hat{\W}^*\hat{\H}^*-\Y)),
$$
and 
$\e_{d,i}$ is the standard vector in $\mathbb{R}^d$ with 1 in its $i$th entry (similar definition stands for $\e_{K,k}$). 

\end{theorem}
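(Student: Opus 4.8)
My plan is to treat $\H_0$ as a parameter and run a first-order perturbation (implicit-function-theorem) argument on the stationarity conditions of \eqref{Eq_prob_h0}, and then read off $\F$ from a Neumann expansion in $\beta^{-1}$. I would work with the reduced ``central-path'' objective $\mathcal{L}(\H)+\frac{\beta}{2Kn}\|\H-\H_0\|_F^2$, whose $\H$-minimizer coincides with the $\H$-component of the minimizer of $f$ (since $\W$ is eliminated by $\W^*(\H)$). Its first-order optimality condition, already recorded above, is
\begin{align*}
Kn\,\nabla\mathcal{L}(\H)+\beta(\H-\H_0)=\0 .
\end{align*}
Writing this at the base point $(\H_0,\hat{\H}^*)$ and at the perturbed point $(\tilde{\H}_0,\tilde{\H}^*)$, subtracting, and Taylor-expanding $\nabla\mathcal{L}(\tilde{\H}^*)$ about $\hat{\H}^*$ to first order (the remainder being $O(\|\delta\H\|^2)=O(\|\delta\H_0\|^2)$ since $\delta\H=O(\delta\H_0)$), I obtain the linear system
\begin{align*}
\big(\beta\,\I_{dnK}+Kn\,\nabla^2\mathcal{L}(\hat{\H}^*)\big)\,\mathrm{vec}(\delta\H)=\beta\,\mathrm{vec}(\delta\H_0).
\end{align*}
Setting $\mathbf{G}:=Kn\,\nabla^2\mathcal{L}(\hat{\H}^*)$ and inverting gives $\mathrm{vec}(\delta\H)=(\I_{dnK}+\beta^{-1}\mathbf{G})^{-1}\mathrm{vec}(\delta\H_0)$; for $\beta\gg\max\{1,\lambda_H\}$ the spectral radius of $\beta^{-1}\mathbf{G}$ is below $1$, so the truncated Neumann series $\F=\I_{dnK}-\beta^{-1}\mathbf{G}+O(\beta^{-2})$ is valid, and it remains only to identify $\mathbf{G}$ with the matrix in \eqref{Eq_small_err_dH_neuman}.

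Next I would compute $\mathbf{G}$ as the Hessian of the value function $Kn\,\mathcal{L}(\H)=\min_\W g(\W,\H)$, where $g(\W,\H):=\tfrac12\|\W\H-\Y\|_F^2+\tfrac{n\lambda_W}{2}\|\W\|_F^2+\tfrac{\lambda_H}{2}\|\H\|_F^2$. By the standard partial-minimization (Schur-complement) formula, evaluated at $(\hat{\W}^*,\hat{\H}^*)$,
\begin{align*}
\mathbf{G}=\nabla^2_{\H\H}g-\nabla^2_{\H\W}g\,\big(\nabla^2_{\W\W}g\big)^{-1}\,\nabla^2_{\W\H}g.
\end{align*}
A direct vectorized differentiation produces the three blocks: $\nabla^2_{\H\H}g=\lambda_H\I_{dnK}+\I_{nK}\otimes\hat{\W}^{*\top}\hat{\W}^*$; the $\W$-block $\nabla^2_{\W\W}g=\hat{\H}^*\hat{\H}^{*\top}\otimes\I_K+n\lambda_W\I_{dK}$ (exactly the matrix inverted inside $\Z^*$); and the mixed block $\nabla^2_{\W\H}g=\hat{\H}^*\otimes\hat{\W}^*+\E^{*\top}$, where $\hat{\H}^*\otimes\hat{\W}^*$ comes from differentiating $\W\H\H^\top$ in its first $\H$ factor, while $\E^{*\top}$ collects the residual contribution $(\hat{\W}^*\hat{\H}^*-\Y)\,\delta\H^\top$ from the second $\H$ factor after vectorizing the transpose via the commutation matrix --- precisely the map whose columns are $\mathrm{vec}(\e_{d,i}\e_{K,k}^\top(\hat{\W}^*\hat{\H}^*-\Y))$. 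Using $\nabla^2_{\H\W}g=(\nabla^2_{\W\H}g)^\top$, the Schur correction becomes exactly $\Z^*$, so $\mathbf{G}=\lambda_H\I_{dnK}+\I_{nK}\otimes\hat{\W}^{*\top}\hat{\W}^*-\Z^*$, which inserted into $\F=\I_{dnK}-\beta^{-1}\mathbf{G}$ yields \eqref{Eq_small_err_dH_neuman}. (Equivalently, one may perturb the joint $(\W,\H)$ stationarity conditions and eliminate $\delta\W$ using invertibility of $\hat{\H}^*\hat{\H}^{*\top}+n\lambda_W\I_d$; the Schur complement is exactly this elimination and reproduces the same $\Z^*$.)

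The step I expect to be the main obstacle is the careful vectorized evaluation of the mixed block $\nabla^2_{\W\H}g$, and in particular verifying that the residual/transpose contribution equals $\E^{*\top}$ under the stated column indexing and commutation-matrix convention; keeping the Kronecker factors and index orderings consistent here is where the computation is most error-prone. Secondary care is needed to make the perturbation rigorous: I would invoke the implicit function theorem to guarantee that the minimizer depends differentiably on $\H_0$, which requires invertibility of the joint Hessian of $f$ at $(\hat{\W}^*,\hat{\H}^*)$ --- this holds because $\lambda_W>0$ makes $\nabla^2_{\W\W}g$ positive definite and, for $\beta\gg\max\{1,\lambda_H\}$, the $\beta\I$ term dominates the $\H$-block so the full Hessian is positive definite --- and to bound $\|\mathbf{G}\|$ (through $\|\hat{\W}^*\|$ and $\|\Z^*\|$, which are $O(1)$ at a fixed base point) so that both the Neumann truncation error $O(\beta^{-2})$ and the Taylor remainder $O(\|\delta\H_0\|^2)$ are controlled as claimed.
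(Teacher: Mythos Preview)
Your proposal is correct and follows essentially the same route as the paper. The only cosmetic difference is that the paper perturbs the joint $(\W,\H)$ stationarity conditions directly and then applies blockwise inversion to eliminate $\delta\W$, whereas you first pass to the value function $\mathcal{L}(\H)$ and use the Schur-complement formula for its Hessian; you yourself note this equivalence in your parenthetical, and the resulting Hessian blocks, the identification of $\E^*$ via the residual term, and the Neumann truncation are identical to the paper's computation.
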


Observe that, assuming small approximation error,  Theorem~\ref{thm_small_error} states %
the linear operation
that transforms $\delta \H_0$ to $\delta \H$.
Furthermore, due to the vectorization operation, 
observe that the linear expression $\mathrm{vec}(\delta \H) \approx \F \mathrm{vec}(\delta \H_0)$ has the following block-based representation
\begin{align}
\label{Eq_F_matrix}
    \begin{bmatrix}
    \mathrm{vec}(\delta \H^{(1)}) \\ \vdots \\ \mathrm{vec}(\delta \H^{(K)})
    \end{bmatrix}
    \approx
    \begin{bmatrix} \F_{1,1} & \ldots & \F_{1,K} \\
    & \ddots & \\
    \F_{K,1} & \ldots & \F_{K,K} \end{bmatrix}
        \begin{bmatrix}
    \mathrm{vec}(\delta \H_0^{(1)}) \\ \vdots \\ \mathrm{vec}(\delta \H_0^{(K)})
    \end{bmatrix},
\end{align}
where $\delta \H^{(k)}:= \delta\H[:,dn(k-1)+1:dnK] \in \mathbb{R}^{d \times n}$ is the sub-matrix of $\delta\H$ that is composed of the columns associated with the $k$th class (and similarly for $\delta \H_0$).
Namely, we have %
that $\F \in \mathbb{R}^{dnK \times dnK}$
is composed of blocks of size $dn \times dn$.
The diagonal blocks are the ``intra-class blocks". Each of them shows the effect of perturbation in a certain class in $\H_0$ on the features of the same class in $\H$.
The off-diagonal blocks are the ``inter-class blocks". Each of them shows the effect of perturbation in a certain class in $\H_0$ on the features of another class in $\H$.

\tomtb{Recall that for $\H_0=\H^*$ that is already exactly collapsed, %
the minimizer of $f(\cdot;\H_0)$ %
is also collapsed, so $\hat{\H}^* = \H^*$ in the above theorem. %
Importantly, in this case the matrix in \eqref{Eq_F_matrix} transforms deviation from exact collapse in the input features to deviation from exact collapse in the optimized features. 
Thus, we have that stronger attenuation behavior of}  
the blocks of $\F$ (e.g., small singular values) implies
that the minimizer $\tilde{\H}^*$ is closer to exact collapse. 
Based on specializing Theorem~\ref{thm_small_error} to the near-collapse case, we present in the following theorem (which is proved in Appendix~\ref{app:proof3}) an exact analysis of singular values of the blocks of $\F$. (The notations $\sigma_{max}(\cdot)$ and $\sigma_{min}(\cdot)$ stand for the largest and smallest singular values of a matrix, respectively).

\begin{theorem}
\label{thm_small_error_collapse}
Consider the setting of Theorem~\ref{thm_small_error}, \tomtb{$\lambda_H \lambda_W <1$ (assumed in Theorem~\ref{thm_nc_of})}, $d > K$, $\beta \gg \tomtb{\mathrm{max}\{1, \lambda_H\}}$,
and the representation of \eqref{Eq_small_err_dH_neuman} that is given in \eqref{Eq_F_matrix}. %
Let $\H_0$ be a collapse features matrix (minimizer of \eqref{Eq_prob} for the same $\lambda_H, \lambda_W$ as in \eqref{Eq_prob_h0}). %
Then, for $k,\tilde{k} \in [K]$ with $k \neq \tilde{k}$ we have
\tomt{that $\F_{k,k}$ is full rank, $\F_{k,\tilde{k}}$ is rank-1, and}
\begin{align*}
    &\sigma_{max}(\F_{k,k}) %
    = 1, \\ 
    &\sigma_{min}(\F_{k,k}) %
    = 1 - \beta^{-1} \sqrt{\lambda_H/\lambda_W}, \\    
    &\sigma_{max}(\F_{k,\tilde{k}}) %
    = 2 \beta^{-1} \lambda_H(1-\sqrt{\lambda_H\lambda_W}). %
\end{align*}

\end{theorem}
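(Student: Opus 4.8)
The plan is to reduce everything to the explicitly known collapse structure and then compute the blocks of $\F$ directly. By Corollary~\ref{cor_nc_of_h0}, since $\H_0$ is taken to be a minimizer of \eqref{Eq_prob}, the minimizer of $f(\cdot\,;\H_0)$ is $(\hat\W^*,\hat\H^*)=(\W^*,\H^*)$ with the orthogonal-collapse form of Theorem~\ref{thm_nc_of}. First I would record the consequences of that structure that feed into \eqref{Eq_small_err_dH_neuman}: writing $\overline\H^*=[\overline\h_1^*,\dots,\overline\h_K^*]$ with $\overline\H^{*\top}\overline\H^*=\rho\I_K$ and $\W^*=\sqrt{\lambda_H/\lambda_W}\,\overline\H^{*\top}$, one gets $\H^*=\overline\H^*\otimes\1_n^\top$, $\W^{*\top}\W^*=(1-c)\sqrt{\lambda_H/\lambda_W}\,\P$ with $\P$ the orthogonal projection onto $\mathrm{col}(\overline\H^*)$, and $\H^*\H^{*\top}=n\rho\,\P$. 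The single most useful simplification is the residual: a direct computation gives $\W^*\H^*-\Y=-c\,\Y$, so the defining columns of $\E^*$ collapse to $\E^*[:,(i-1)K+k]=-c\,\mathrm{vec}\big(\e_{d,i}(\e_{K,k}\otimes\1_n)^\top\big)$, a sparse rank-one pattern supported on row $i$ and the class-$k$ columns. Because singular values are invariant under a common orthogonal change of feature coordinates, I would also rotate so that $\overline\h_k^*=\sqrt{\rho}\,\e_{d,k}$, turning $\P$ into the projection onto the first $K$ coordinates and making all subsequent products diagonal or sparse; here $d>K$ guarantees that the orthogonal complement of $\mathrm{col}(\overline\H^*)$ is nonempty, which will matter below.

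With $\hat\W^*=\W^*$ fixed, the block decomposition in \eqref{Eq_F_matrix} splits cleanly. The first two terms of \eqref{Eq_small_err_dH_neuman} are block-diagonal in the class index, contributing $(1-\tfrac{\lambda_H}{\beta})\I_{dn}-\tfrac1\beta\,\I_n\otimes\W^{*\top}\W^*$ to each $\F_{k,k}$ and nothing to $\F_{k,\tilde k}$; hence the entire inter-class coupling lives in $\tfrac1\beta\Z^*$, and $\F_{k,\tilde k}=\tfrac1\beta\Z^*_{k,\tilde k}$. To handle $\Z^*$ I would first invert the middle matrix in closed form: using $\H^*\H^{*\top}\otimes\I_K+n\lambda_W\I_{dK}=(n\rho\P+n\lambda_W\I_d)\otimes\I_K$ and the spectrum of $\P$, its inverse is $\M\otimes\I_K$ with $\M=\tfrac{1}{n\lambda_W}(\I_d-\P)+\tfrac{1}{n(\rho+\lambda_W)}\P$. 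Then I would form $\G:=\E^{*\top}+\H^*\otimes\W^*$, group its columns by class into $\G_k$, and write $\Z^*_{k,\tilde k}=\G_k^\top(\M\otimes\I_K)\G_{\tilde k}$.

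The computation then factorizes along two independent index structures, which is what makes the answer clean: the sample index $\mathbb{R}^n$ splits into $\mathrm{span}(\1_n)$ (the class-mean direction) and its complement (the within-class variation directions), while the feature index $\mathbb{R}^d$ splits into $\mathrm{col}(\overline\H^*)$ and its orthogonal complement. I expect each such joint component to contribute a scalar eigenvalue, so that $\F_{k,k}$ is diagonalized over a small number of joint eigenspaces. A representative cancellation is on the (nonempty) complement feature directions paired with the $\1_n$ direction: there $\W^{*\top}\W^*$ vanishes, while the $\E^*$ contribution to $\Z^*_{k,k}$ scales as $c^2\cdot\tfrac{1}{n\lambda_W}\cdot\|\1_n\|^2=\tfrac{\lambda_H\lambda_W}{\lambda_W}=\lambda_H$, exactly cancelling the $-\lambda_H/\beta$ of the first term and giving $\sigma_{\max}(\F_{k,k})=1$; the complementary bookkeeping on $\mathrm{col}(\overline\H^*)$ yields $\sigma_{\min}(\F_{k,k})=1-\beta^{-1}\sqrt{\lambda_H/\lambda_W}$ after substituting $\rho=(1-c)\sqrt{\lambda_W/\lambda_H}$. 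For the off-diagonal blocks, the only surviving coupling between $\G_k$ and $\G_{\tilde k}$ is through the single class-mean/column-space direction (the $\1_n$ component paired with $\overline\h^*$), which forces $\F_{k,\tilde k}$ to be rank one; computing the norm of that one direction, with the $-c$ factor from $\E^*$ and the eigenvalue $\tfrac{1}{n(\rho+\lambda_W)}$ of $\M$, yields $\sigma_{\max}(\F_{k,\tilde k})=2\beta^{-1}\lambda_H(1-c)$.

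The main obstacle I anticipate is the bookkeeping inside $\Z^*$: the matrix $\G$ is the sum of the sparse $\E^{*\top}$ and the Kronecker term $\H^*\otimes\W^*$, so forming $\G_k^\top(\M\otimes\I_K)\G_{\tilde k}$ requires tracking the self-terms and the two cross-terms simultaneously across both the sample- and feature-index factorizations while the two Kronecker conventions interact, and then verifying that the cross-terms indeed vanish except in the single coupling direction that produces the rank-one structure. A secondary care point is consistency of the $O(\beta^{-1})$ truncation: \eqref{Eq_small_err_dH_neuman} is itself a first-order (Neumann) approximation valid for $\beta\gg\max\{1,\lambda_H\}$, so I must confirm that the factors of $\beta^{-1}$ produced through $\Z^*$ (and the $O(1/n)$ scale of $\M$, which is compensated by the $\|\1_n\|^2=n$ and $\H^*\H^{*\top}=n\rho\P$ factors) combine to give exactly the stated coefficients with no stray lower-order terms.
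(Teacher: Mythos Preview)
Your proposal is correct and follows essentially the same route as the paper's proof: you reduce to the explicit collapse structure via Corollary~\ref{cor_nc_of_h0}, use $\W^*\H^*-\Y=-c\Y$ to simplify $\E^*$, invert the middle factor of $\Z^*$ via the spectral decomposition of $\P=\R\R^\top$, extract the $(k,\tilde k)$ blocks, and diagonalize over the product basis splitting $\mathbb{R}^n=\mathrm{span}(\1_n)\oplus\mathrm{span}(\1_n)^\perp$ and $\mathbb{R}^d=\mathrm{col}(\overline\H^*)\oplus\mathrm{col}(\overline\H^*)^\perp$. The only cosmetic difference is that the paper keeps the general orthonormal $\R$ and handles the Kronecker reorderings with the commutation matrix $\K_{d,K}$, whereas you rotate the feature basis at the outset so that $\overline\h_k^*=\sqrt{\rho}\,\e_{d,k}$ and track indices directly; since the claimed quantities are singular values (basis-invariant), both are equally valid and lead to the identical block formulas and eigenvalue list.
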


\begin{figure}
    \centering
  \includegraphics[width=120pt]{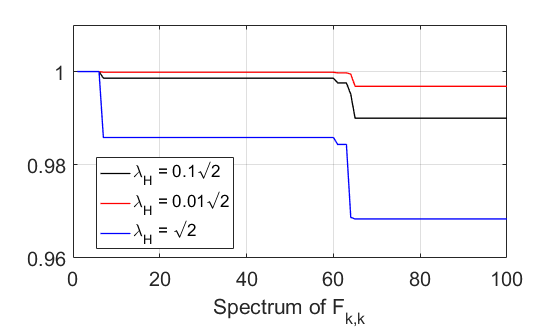}   
  \caption{\footnotesize The effect of $\lambda_H$ on the spectrum of $\F_{k,k}$.}
\label{fig:spectrum_Fkk}

\vspace{5mm}

    \centering
  \includegraphics[width=100pt]{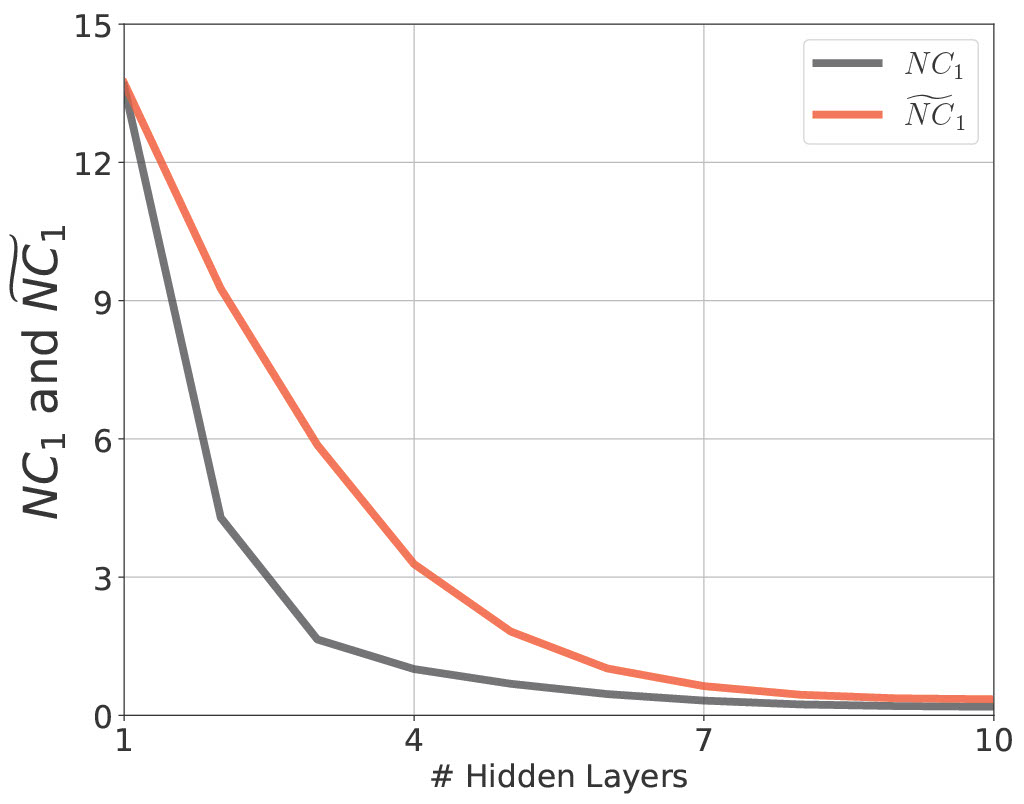}   
  \caption{\footnotesize Layer-wise training of MLP on CIFAR-10.}
\label{fig:mlp_nc1}
\end{figure}
\textbf{Remark.} 
In Appendix~\ref{app:proof3} we derive expressions for the complete 
singular value decomposition of $\F_{k,k}$ and $\F_{k,\tilde{k}}$.
Our expressions for the entire spectrum of $\F_{k,k}$ reveal its step-wise decreasing shape, %
as visualized 
in Figure~\ref{fig:spectrum_Fkk} for $\beta=100, K=4, d=10, n=10, \lambda_W=\sqrt{2}$ and various values of $\lambda_H$.
To keep the paper concise, we state in the above theorem only the results for the maximal and minimal singular values of $\F_{k,k}$, but note that,  \tomtb{similarly to $\sigma_{min}(\F_{k,k})$, %
\tomtm{almost all singular values}
decrease as $\lambda_H$ increases. 
Even though a small portion ($\frac{1-K/d}{n}$) of the singular values equal 1 (as shown in our analysis in Appendix~\ref{app:proof3}), 
we can still gain insights on the attenuation profile 
since generic perturbations are unlikely to concentrate in such an extremely low-dimensional subspace.} %
\tomtt{In fact, our analysis shows that the singular vectors associated with this subspace do not affect the within-class variability, which further implies that metrics of the NC1 component are attenuated more than those of other components in the
near collapse regime. 
Interestingly, this theoretical observation is aligned with 
an empirical observation that when exploring NC behavior of practical DNNs, NC1 metrics commonly reach lower values than metrics of other components (e.g., NC2).}

From Theorem~\ref{thm_small_error_collapse} we gain the following insights on the minimizer of \eqref{Eq_prob_h0} \tomtr{in the near-collapse and %
\tomtr{large $\beta$ regime}}.
First, observe that not only do exactly collapsed minimizers have orthogonal features for different classes, but also in the near-collapse setting an intra-class block is much more dominant than \tomtr{each} inter-class block, as follows from %
\tomt{$\F_{k,\tilde{k}}$ being rank-1 and} 
$\sigma_{max}(\F_{k,\tilde{k}}) \ll \sigma_{min}(\F_{k,k})$.
\tomtr{For generic perturbations that do not concentrate in specific low-dimensional subspaces this implies that}
also before/\tomtb{near} pure collapse, we have that the deviation from collapse in the features of a certain class is mainly due to deviation from collapse of input \tomtb{(preceding)} features of the same class and not those of the $K-1$ other classes. \tomtr{(See Appendix~\ref{app:analysis_F_discussion} for more details, and note that}
\tomtm{this also implies preservation of per-class near-collapse).} %
Second, we see that the feature mapping regularization plays the major role in approaching (near-)collapse behavior.
Indeed, increasing $\lambda_H$ decreases the spectral values of the (more dominant) intra-class blocks $\{ \F_{k,k} \}$ (contrary to increasing $\lambda_W$).
Recall that reducing the singular values of the blocks of $\F$ implies reducing the distance of the minimizer $\tilde{\H}^*$ from exact collapse.
Third, %
our result on the inter-class blocks $\{ \F_{k,\tilde{k}\neq k} \}$
hints that the regularization of the last layer's weights (determined by $\lambda_W>0$) may still have a supportive effect on reaching (near-)collapse behavior by reducing the component of the deviation from collapse that is due to ``crosstalk"/interference of features of different classes (e.g., when some classes are harder to be classified then others).
In the %
sequel,
we show that the above observations correlate with the NC behavior in practical %
settings.

\begin{figure*}
\captionsetup{belowskip=-9pt}
    \centering
  \begin{subfigure}[b]{1\linewidth}%
    \centering\includegraphics[width=96pt]{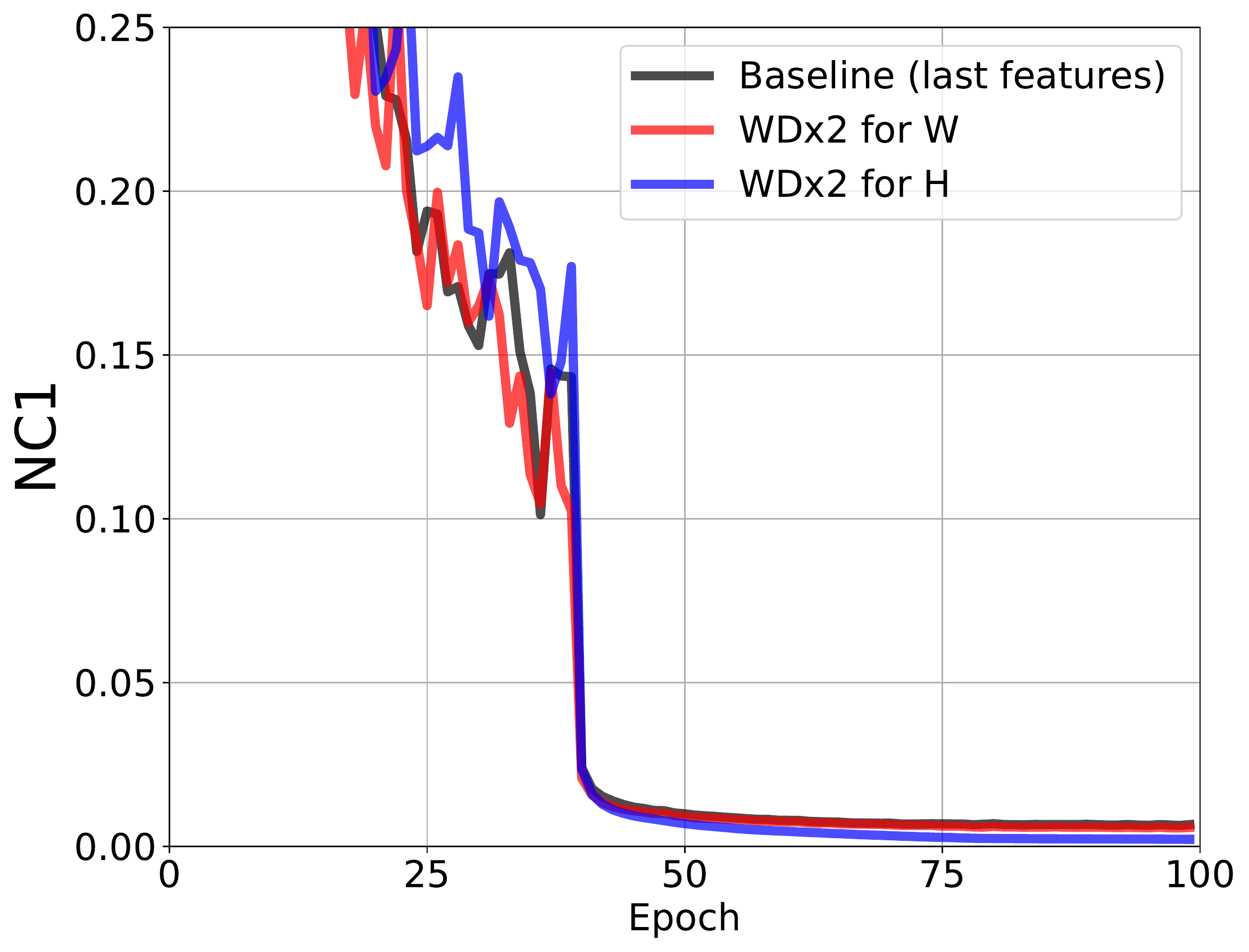} \hspace{3mm}
    \centering\includegraphics[width=96pt]{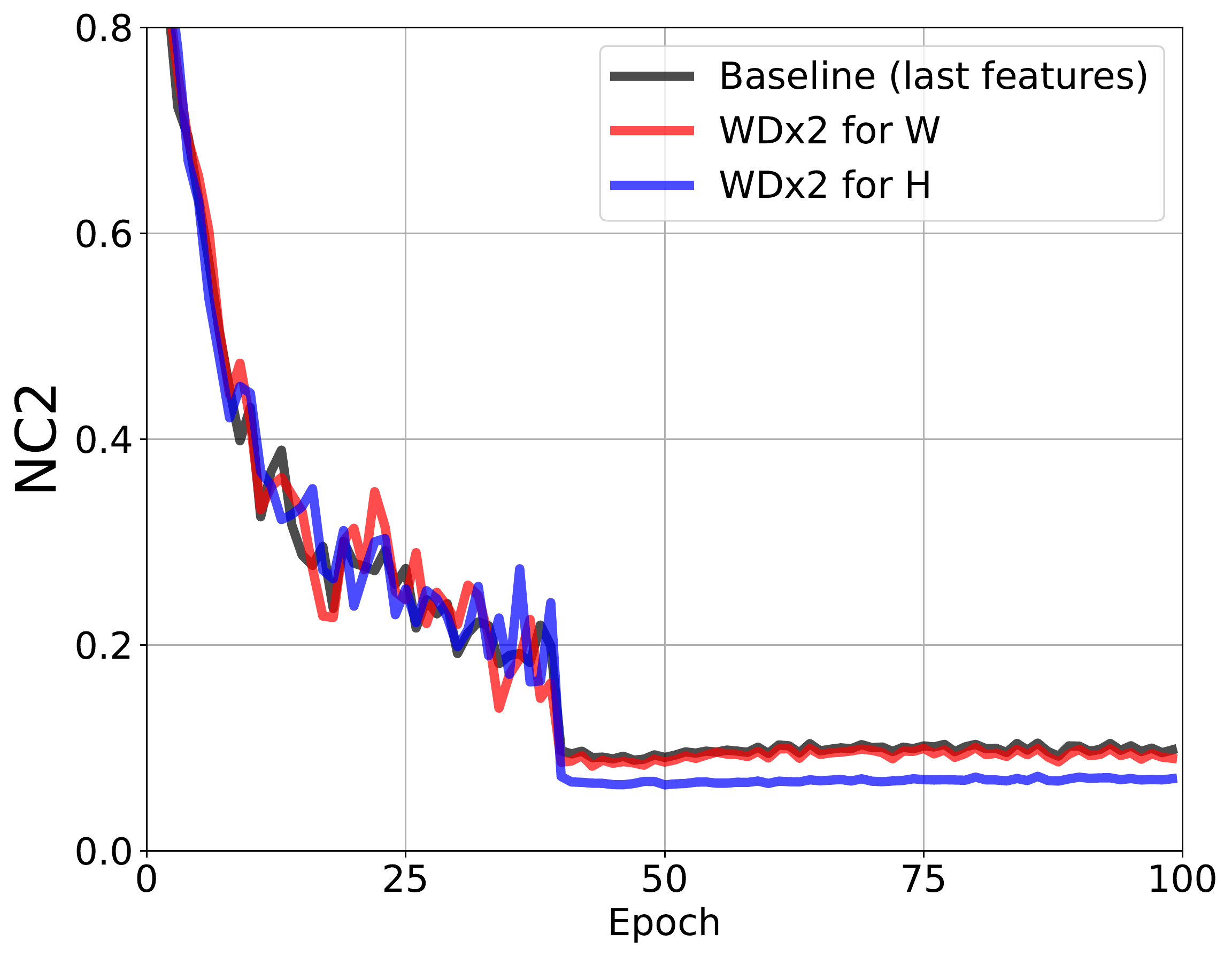} \hspace{3mm}
    \centering\includegraphics[width=96pt]{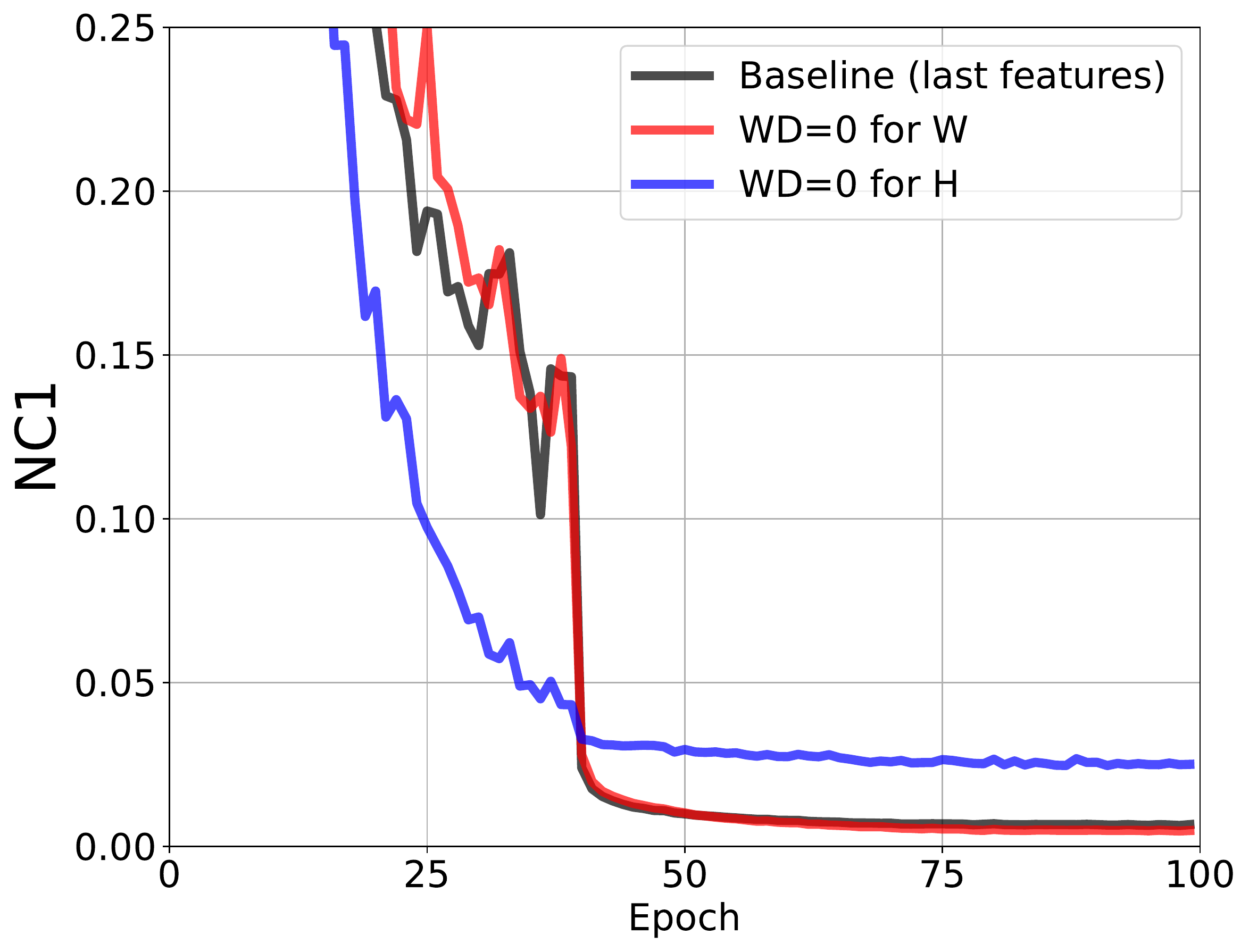} \hspace{3mm}
    \centering\includegraphics[width=96pt]{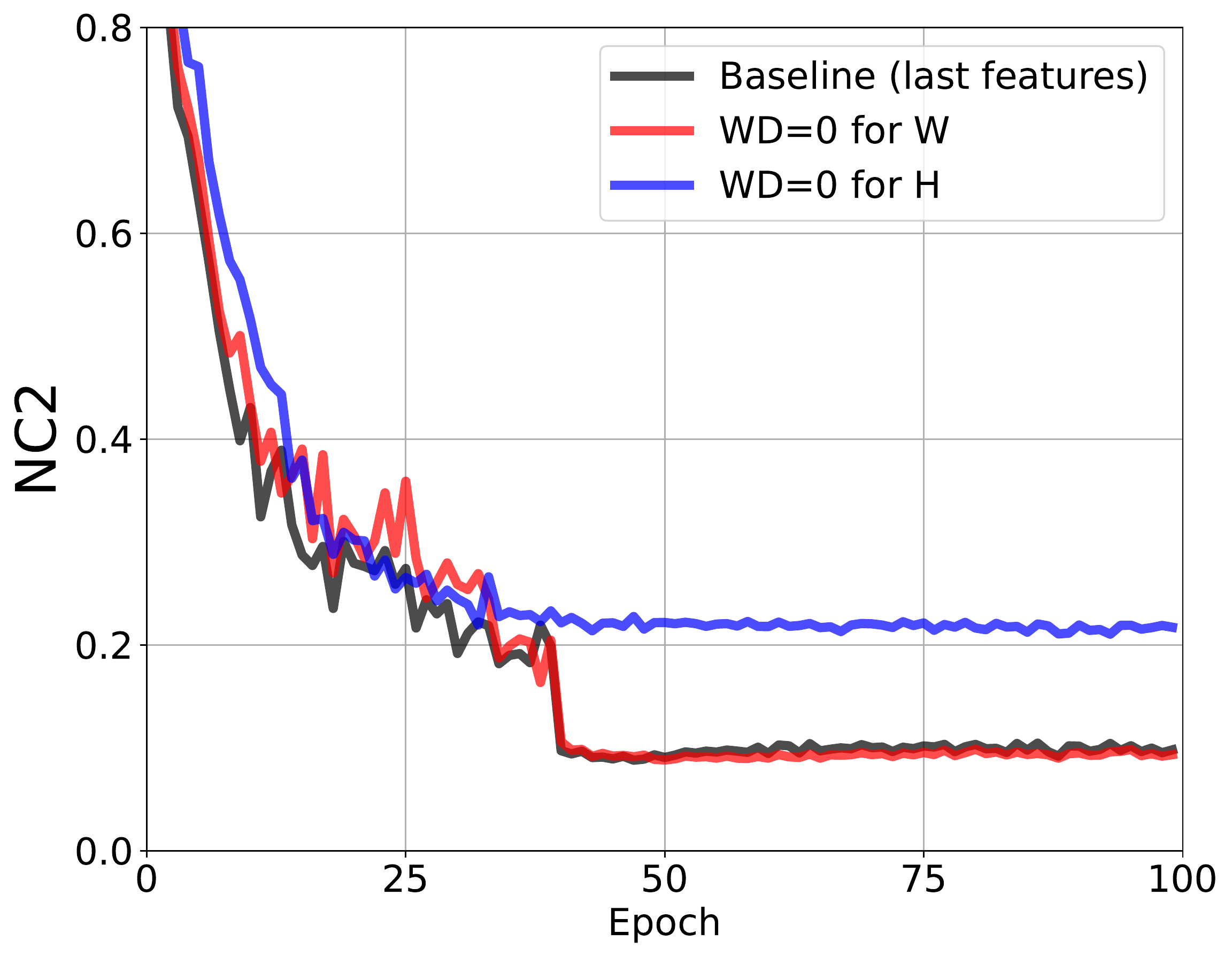} \hspace{3mm}
  \end{subfigure}%
\vspace{1mm}   
\\     
  \begin{subfigure}[b]{1\linewidth}%
    \centering\includegraphics[width=96pt]{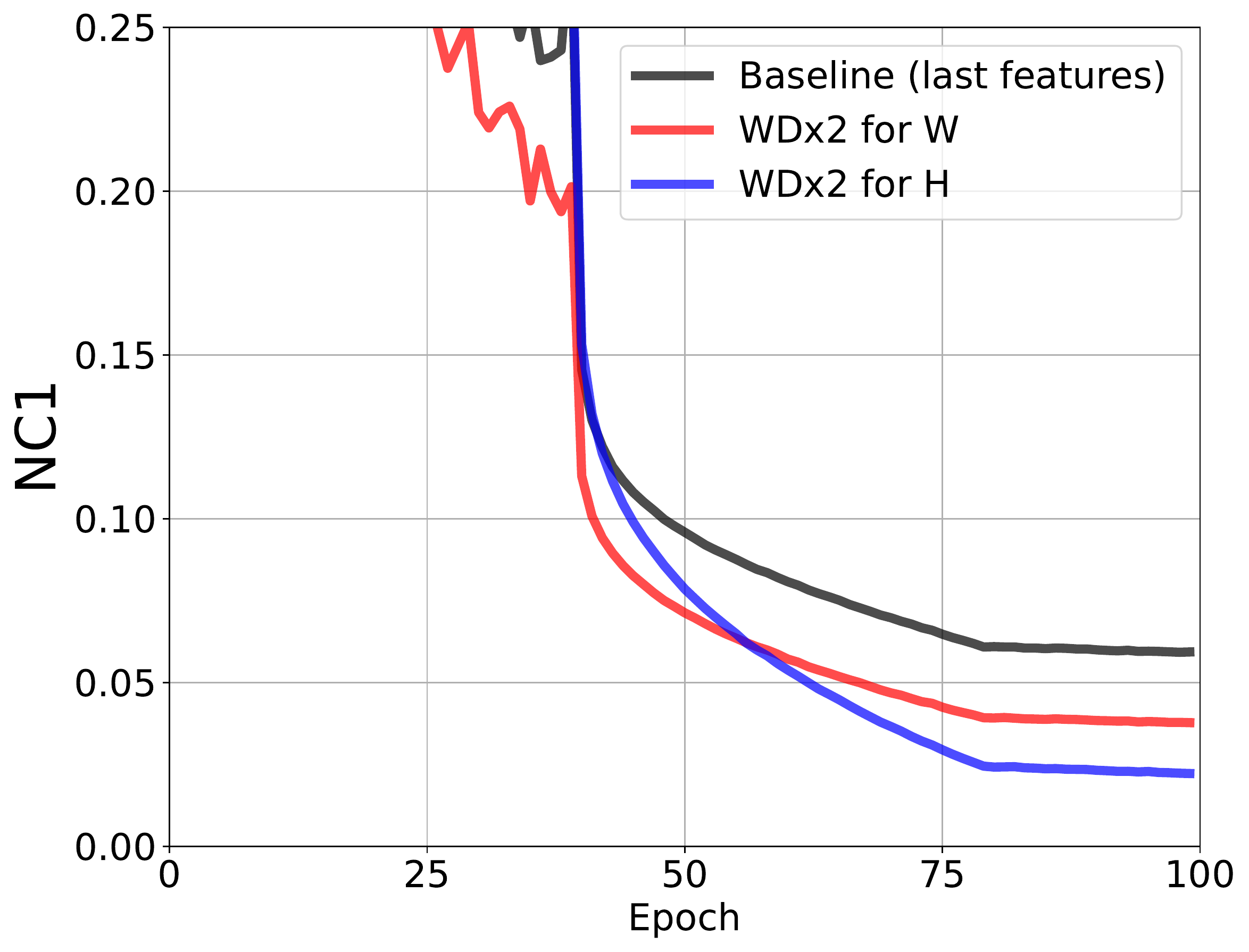} \hspace{3mm}
    \centering\includegraphics[width=96pt]{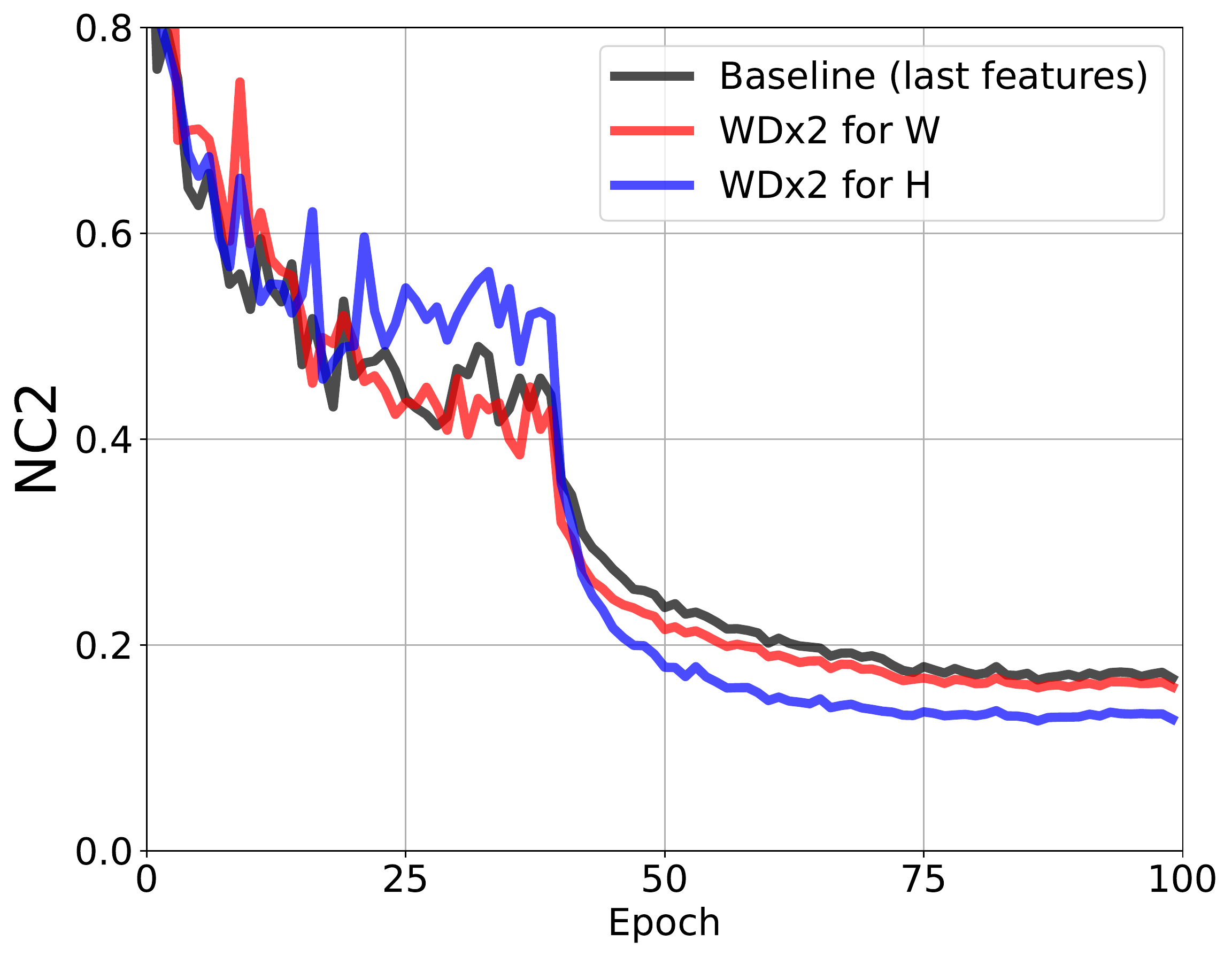} \hspace{3mm}
    \centering\includegraphics[width=96pt]{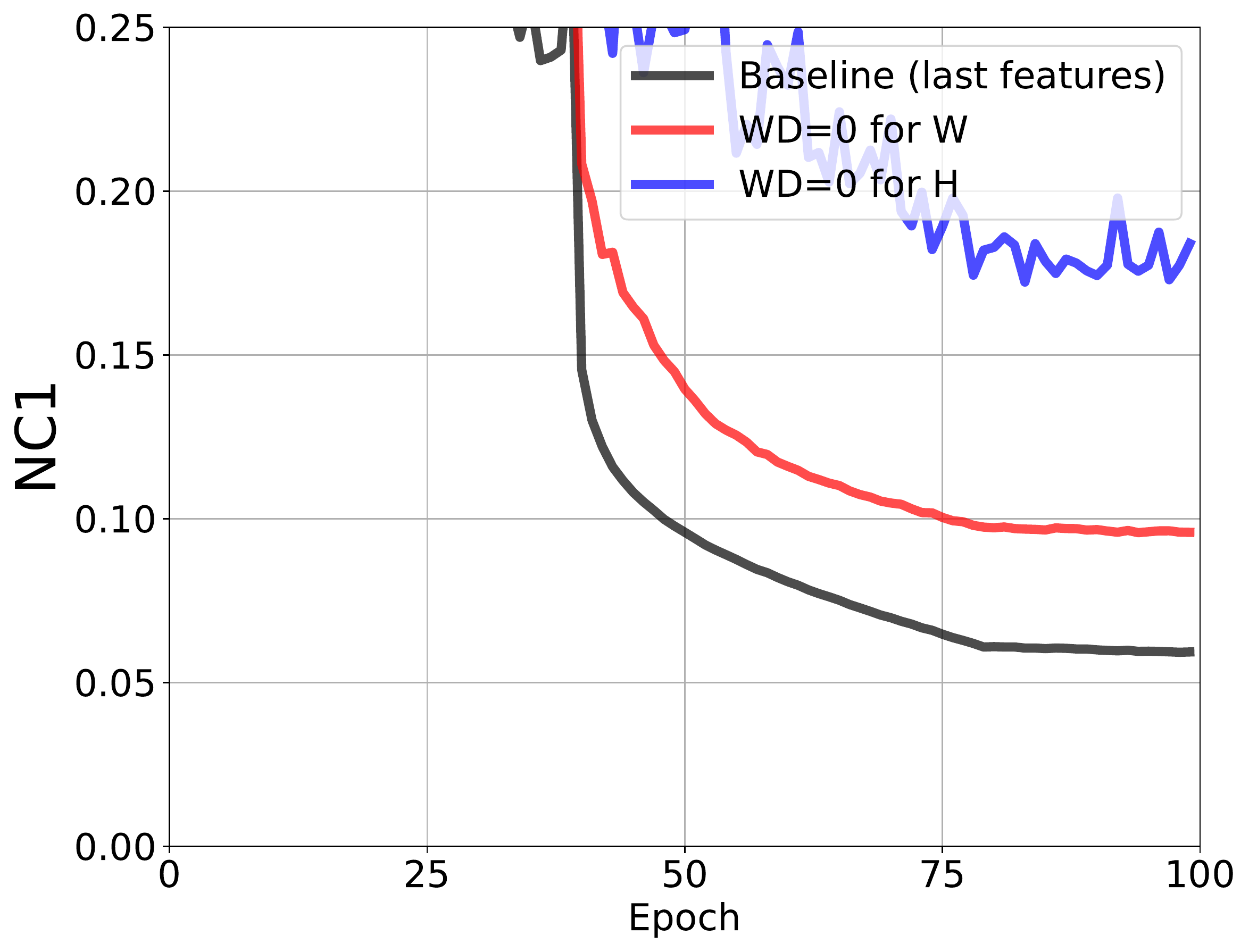} \hspace{3mm}
    \centering\includegraphics[width=96pt]{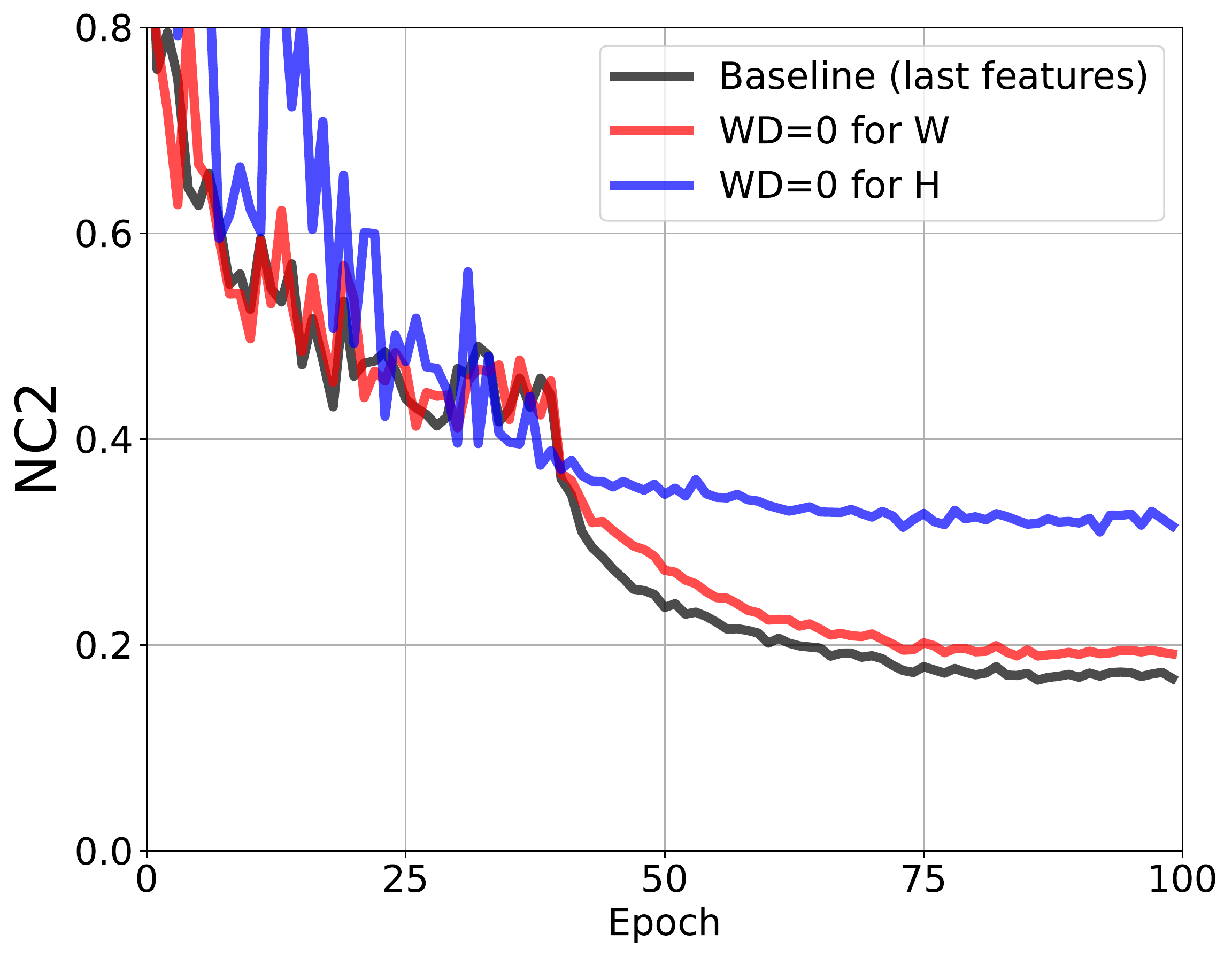} \hspace{3mm}
   \end{subfigure}%
    \caption{\footnotesize 
    The effect of modifying the weight decay (WD) on NC metrics for ResNet18 trained on CIFAR-10.
    Top: MSE loss without bias; Bottom: CE loss with bias. 
    Observe that modifying the WD in the feature mapping increases the deviation from the baseline more than modifying the WD of the last layer.}
    \label{fig:cifar_5k_diff_wd_main} 
\end{figure*}

\section{Experiments}
\label{sec:experiments}
In this section, we
translate the insights that are obtained for the model in \eqref{Eq_prob_h0} to what is observed with practical DNNs and datasets. 
We evaluate the distance of DNN's features from exact NC using metrics that have been also used in previous works. 
Despite defining the metric $\widetilde{NC}_1$ in \eqref{Eq_our_nc1}, here we mainly measure
within-class variability using
$NC_1 := \frac{1}{K}\tr \left ( \bSigma_W \bSigma_B^\dagger \right )$,
where 
we use the definitions of Section~\ref{sec:nc1}. 
(We 
use this metric due to its popularity 
even though it is less amenable for theoretical analysis).
We measure the structure of the features using
{\small
\begin{align*}
NC_2 := \left \| \frac{\overline{\overline{\H}}^\top \overline{\overline{\H}}}{\| \overline{\overline{\H}}^\top \overline{\overline{\H}} \|_F} - \frac{1}{\sqrt{K-1}}(\I_K - \frac{1}{K}\1_K\1_K^\top)  \right \|_F, 
\end{align*}}
where $\overline{\overline{\H}} := \overline{\H}-\overline{\h}_G\1_K^\top$, and 
the simplex ETF is normalized to unit Frobenius norm.

\tomtb{
The result of Section~\ref{sec:nc1} provides reasoning to justify depthwise decrease in within-class variability, %
which 
has already been empirically demonstrated for end-to-end training in several papers \citep{tirer2022extended,galanti2022note,he2022law} (we present such experiments in Appendix~\ref{app:more_exp}).
Here we show this behavior also for layer-wise training, which is better represented by our model.
We consider the CIFAR-10 dataset and train an MLP with 1 to 10 hidden layers and a final classification layer. Each time, we add and train a hidden layer on top of the previous hidden layers, which are maintained fixed. 
Then we compute the NC1 metrics for the deepest features 
(more experimental details appear in Appendix~\ref{app:layer_wise_exp}).
Figure~\ref{fig:mlp_nc1} demonstrates decrease in both $NC_1$ and $\widetilde{NC}_1$ as we add more hidden layers on top the previous, which are maintained fixed.
Note that our theory justifies such decrease for all the layers (the features are not required to 
be near collapse).}

Next, we turn
to demonstrate
correlation of practical NC behavior with 
\tomtm{the insight gained in Section~\ref{sec:analysis} that 
$\lambda_H$ plays a bigger role than $\lambda_W$ does in approaching NC.
Based on the equivalence of $L_2$-regularization with weight decay (WD) in gradient-based methods,} 
we can make the analogy of regularizing $\H$ in \eqref{Eq_prob_h0} to %
\tomtm{WD} 
of %
the weights of practical DNNs in the feature mapping layers (i.e., excluding the last layer's weights). Importantly, note that this analogy is empirically %
justified 
for plain UFMs in \citep{zhu2021geometric}. 
Under this analogy, our analysis suggests that, as long as entering the zero training error phase of training is maintained, increasing (resp.~decreasing) the WD in the feature mapping layers should decrease (resp.~increase) the distance from exact collapse more than increasing (resp.~decreasing) the WD in the classification layer.
Indeed, we empirically show this behavior below.
(More experiments are presented in Appendix~\ref{app:more_exp}). %
We
note that there exists a work that {\em empirically}\footnote{Note that the claim in \citep{rangamani2022neural} 
that NC solution cannot %
minimize 
unregularized bias-free MSE loss comes from demanding that $\H^*$ -- without subtracting the global mean -- will be a simplex ETF rather than an orthogonal frame as shown in Theorem~\ref{thm_nc_of}.}
shows that WD facilitates collapse \citep{rangamani2022neural}, however, they do not examine the WD in feature mapping and classification layers separately.

We 
\tomtb{consider the CIFAR-10 dataset and}
examine how modifying the regularization hyperparameters affects the NC behavior of the widely used ResNet18 \citep{he2016deep} compared to a baseline setting.
Specifically, as a baseline hyperparameter setting, we consider one that is used in previous works \citep{papyan2020prevalence,zhu2021geometric}: default PyTorch initialization of the weights, SGD optimizer with %
LR 
0.05 that is divided by 10 every 40 epochs, momentum of 0.9, and WD of 5e-4 for all the network's parameters.
The modifications include: 1) doubling the WD only for the last (FC) layer; 2) doubling the WD only for feature mapping (conv) layers; 3) zeroing the WD for the last layer; and 4) zeroing the WD for feature mapping layers.

Figure~\ref{fig:cifar_5k_diff_wd_main} presents the NC1 and NC2 metrics of the (deepest) features for: (Top) MSE loss with no bias in the FC layer \tomtm{(similar to the analyzed model)}; %
and (Bottom) %
CE loss with bias in the FC layer. %
In all the settings, we reach zero training error at the 40 epoch approximately. 
The 
empirical results show that modifying the WD in the feature mapping layers leads to curves with larger deviations from the baseline compared to modifying the last layer's WD, which is aligned with the theory established in Section~\ref{sec:analysis} (i.e., the important role of $\lambda_H$ in attenuating the dominant intra-class perturbations).
Reducing (zeroing) the WD in the feature mapping increases the distance from exact NC (i.e., from 0 value of the metrics), while increasing the WD decreases the gap from exact NC, as the theory predicts.
The fact that \tomtb{sometimes (e.g., with CE loss)} increasing the WD of the last layer can also decrease the gap from collapse hints that mitigating inter-class interference/correlation of features in practical deep learning settings is more significant for reaching NC than in our analysis that considers a near-collapse regime.\footnote{\tomtm{In Appendix~\ref{app:more_exp}, we demonstrate %
the role of $\lambda_W$ in mitigating inter-class interference of features, which is identified by our analysis, also empirically with practical DNNs.}} 
Yet, both the experiments and the theoretical study show that the regularization of the feature mapping has larger significance in approaching NC.

\section{Conclusion}

The features that are learned by training practical networks on real world datasets typically do not reach exact NC. 
In this paper, we addressed this issue by studying a model that can force the features to stay in the vicinity of a predefined features matrix.
We analyzed it for the small vicinity case and established results that cannot be obtained by the previously studied (idealized) UFMs. 
We proved reduction in within-class variability of the optimized features compared to the input features (via analyzing gradient flow along the ``central-path” of a UFM with %
\tomtb{minimal assumptions, unlike} 
existing literature). We also presented an analysis of the model's minimizer in the near-collapse regime that provides insights on the effect of the regularization hyperparameters on the closeness to collapse, which correlate with the behavior in practical deep learning settings. 
\tomtt{Importantly, note that our} 
perturbation analysis approach, which is based on exploiting our knowledge on exactly collapsed minimizers of UFMs for studying non-collapse cases, can \tomtt{also} be applied to models other than the one considered in this paper, such as models with different loss functions and/or multiple levels of features and/or imbalanced data. %
\bibliography{main_icml_final}
\bibliographystyle{icml2023}

\newpage
\appendix
\onecolumn

\newpage

\section{Proof of Theorem~\ref{nc_decrease_flow}}
\label{app:proof4}

To prove Theorem~\ref{nc_decrease_flow}, in addition to the within-class and between-class covariance matrices, let us define  
the total covariance matrix (across all classes) \hh{of the non-centered features}
\hh{
\begin{align*}
    \Tilde{\bSigma}_T(\H) := \frac{1}{Kn}\sum_{k=1}^K \sum_{i=1}^n \h_{k,i}\h_{k,i}^\top.
\end{align*}}
\hh{For convenience we also define the non-centered between-class covariance matrix 
\begin{align*}
    \Tilde{\bSigma}_B(\H) := \frac{1}{K}\sum_{k=1} \overline{\h}_k\overline{\h}_k^\top.
\end{align*}}

\hh{We have the decomposition $\Tilde{\bSigma}_T(\H) = \bSigma_W(\H) + \Tilde{\bSigma}_B(\H)$}.

Using $\Y \H^\top=(\I_K \otimes \1_n^\top)\H^\top=n\overline{\H}^\top$ and \hh{$\Tilde{\bSigma}_T = \frac{1}{Kn}\H\H^\top$}, we have that 
for each features matrix $\H$, the optimal weight matrix $\W^*(\H)$ is given by
\begin{align*}
    \W^*(\H)=\frac{1}{K}\overline{\H}^\top(\frac{1}{Kn}\H\H^\top+\frac{\lambda_W}{K}\I)^{-1}=\frac{1}{K}\overline{\H}^\top(\hh{\Tilde{\bSigma}}_T+\frac{\lambda_W}{K}\I)^{-1}.
\end{align*}

Next, let us simplify the terms with $\W^*(\H)$ in $\mathcal{L}(\H)$: %
$$
\mathcal{L}(\H) := \frac{1}{2Kn} \| \W^*(\H) \H - \Y \|_F^2 + \frac{\lambda_W}{2K} \| \W^*(\H) \|_F^2 + \frac{\lambda_H}{2Kn} \|\H\|_F^2.
$$

For the first term in $\mathcal{L}(\H)$, observe that
\begin{align*}
    &\frac{1}{2Kn} \| \W^*(\H) \H - \Y \|_F^2 
    = \frac{1}{2Kn}\tr \left(\W^*(\H)\H\H^\top\W^*(\H)^\top\right)-\frac{1}{Kn}\tr \left(\W^*(\H)\H\Y^\top\right) +\frac{1}{2}\\
    &\hspace{5mm}=\frac{1}{2K}\tr \left((\hh{\Tilde{\bSigma}}_T+\frac{\lambda_W}{K}\I)^{-1}\hh{\Tilde{\bSigma}}_T(\hh{\Tilde{\bSigma}}_T+\frac{\lambda_W}{K}\I)^{-1}\hh{\Tilde{\bSigma}}_B\right)-\frac{1}{K}\tr \left((\hh{\Tilde{\bSigma}}_T+\frac{\lambda_W}{K}\I)^{-1}\hh{\Tilde{\bSigma}}_B\right)+\frac{1}{2}\\
    &\hspace{5mm}=-\frac{\lambda_W}{2K^2}\tr \left((\hh{\Tilde{\bSigma}}_T+\frac{\lambda_W}{K}\I)^{-2}\hh{\Tilde{\bSigma}}_B\right)-\frac{1}{2K}\tr \left((\hh{\Tilde{\bSigma}}_T+\frac{\lambda_W}{K}\I)^{-1}\hh{\Tilde{\bSigma}}_B\right)+\frac{1}{2},
\end{align*}
where in the second equality we used $\hh{\Tilde{\bSigma}}_B = \frac{1}{K}\overline{\H}\overline{\H}^\top$ , 
and in the last equality we used $(\hh{\Tilde{\bSigma}}_T+\frac{\lambda_W}{K}\I)^{-1}\hh{\Tilde{\bSigma}}_T=\I-\frac{\lambda_W}{K}(\hh{\Tilde{\bSigma}}_T+\frac{\lambda_W}{K}\I)^{-1}$. 

For the second term in $\mathcal{L}(\H)$, observe that
\begin{align*}
    \frac{\lambda_W}{2K} \| \W^*(\H) \|_F^2&=\frac{\lambda_W}{2K}\tr \left(\W^*(\H)\W^*(\H)^\top\right)\\
        &=\frac{\lambda_W}{2K^2}\tr \left((\hh{\Tilde{\bSigma}}_T+\frac{\lambda_W}{K}\I)^{-2}\hh{\Tilde{\bSigma}}_B\right).
\end{align*}

Adding the two terms together,
\begin{align*}
    \frac{1}{2Kn} \| \W^*(\H) \H - \Y \|_F^2 + \frac{\lambda_W}{2K} \| \W^*(\H) \|_F^2=-\frac{1}{2K}\tr \left((\hh{\Tilde{\bSigma}}_T+\frac{\lambda_W}{K}\I)^{-1}\hh{\Tilde{\bSigma}}_B\right)+\frac{1}{2}\\
    =\frac{1}{2K}\tr \left((\hh{\Tilde{\bSigma}}_T+\frac{\lambda_W}{K}\I)^{-1}(\bSigma_W+\frac{\lambda_W}{K}\I)\right)-\frac{d-K}{2K},
\end{align*}
where we used $(\hh{\Tilde{\bSigma}}_T+\frac{\lambda_W}{K}\I)^{-1}\hh{\Tilde{\bSigma}}_B= \I -(\hh{\Tilde{\bSigma}}_T+\frac{\lambda_W}{K}\I)^{-1}(\bSigma_W+\frac{\lambda_W}{K}\I)$.

Finally, for the third term in $\mathcal{L}(\H)$ we have
\begin{align*}
    \frac{\lambda_H}{2Kn} \|\H\|_F^2 = \frac{\lambda_H}{2}\Tr \left ( \hh{\Tilde{\bSigma}}_T \right ).
\end{align*}
To conclude
$$
\mathcal{L}(\H)=\frac{1}{2K}\tr \left((\bSigma_W+\frac{\lambda_W}{K}\I)(\hh{\Tilde{\bSigma}}_T+\frac{\lambda_W}{K}\I)^{-1}\right) + \frac{\lambda_H}{2}\Tr \left ( \hh{\Tilde{\bSigma}}_T \right ) -\frac{d-K}{2K}.
$$

Next, we are going to analyze the traces of $\frac{d\bSigma_B}{dt}$, $\frac{d\bSigma_W}{dt}$, and $\frac{d\hh{\Tilde{\bSigma}}_T}{dt}$, along the flow that is stated in \eqref{Eq_grad_flow}, which is repeated here for the convenience of the reader: 
\begin{align*}
    \frac{d\H_t}{dt} = -Kn\nabla\mathcal{L}(\H_t).
\end{align*}
In the following lemma, we state the required derivatives.

\begin{lemma} Denote $\C_B:=\bSigma_B(\hh{\Tilde{\bSigma}}_T+\frac{\lambda_W}{K}\I)^{-1}$, \hh{$\Tilde{\C}_B:=\Tilde{\bSigma}_B(\hh{\Tilde{\bSigma}}_T+\frac{\lambda_W}{K}\I)^{-1}$} and $\C_W:=\bSigma_W(\hh{\Tilde{\bSigma}}_T+\frac{\lambda_W}{K}\I)^{-1}$. Along the gradient flow we have
\begin{align*}
    \frac{d\bSigma_B}{dt}&=\frac{1}{K}\left(\C_B(\I-\hh{\Tilde{\C}}_B)+(\I-\hh{\Tilde{\C}}_B^\top)\C_B^\top\right)\hh{-2\lambda_H\bSigma_B}\\
    \frac{d\bSigma_W}{dt}&=-\frac{1}{K}\left(\C_W\hh{\Tilde{\C}}_B+\hh{\Tilde{\C}}_B^\top\C_W^\top\right)\hh{-2\lambda_H\bSigma_W}\\ %
    \frac{d\hh{\Tilde{\bSigma}}_T}{dt}&=\frac{1}{K}\left((\I-\hh{\Tilde{\C}}_B-\C_W)\hh{\Tilde{\C}}_B+\hh{\Tilde{\C}}_B^\top(\I-\hh{\Tilde{\C}}_B^\top-\C_W^\top)\right)\hh{-2\lambda_H\hh{\Tilde{\bSigma}}_T}%
\end{align*}

\end{lemma}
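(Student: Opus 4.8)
The plan is to write every covariance that appears in the lemma as an explicit quadratic form in $\H$ and then differentiate it through the flow by the product rule. First I would introduce the selection and centering matrices $\Y = \I_K \otimes \1_n^\top$, the within-class centering projector $\P_W := \I_{Kn} - \tfrac{1}{n}(\I_K \otimes \1_n\1_n^\top)$, and the global-mean projector $\P_K := \I_K - \tfrac{1}{K}\1_K\1_K^\top$ (both idempotent and symmetric). With $\overline{\H} = \tfrac{1}{n}\H\Y^\top$ these give $\Tilde{\bSigma}_T = \tfrac{1}{Kn}\H\H^\top$, $\bSigma_W = \tfrac{1}{Kn}\H\P_W\H^\top$, $\Tilde{\bSigma}_B = \tfrac{1}{K}\overline{\H}\,\overline{\H}^\top$, and $\bSigma_B = \tfrac{1}{K}\overline{\H}\P_K\overline{\H}^\top$, all quadratic in $\H$. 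For any such form $\tfrac{1}{N}\H\A\H^\top$ with fixed symmetric $\A$, the flow $\tfrac{d\H_t}{dt} = -Kn\nabla\mathcal{L}$ gives $\tfrac{d}{dt}\tfrac{1}{N}\H\A\H^\top = \tfrac{1}{N}(\dot{\H}\A\H^\top + \H\A\dot{\H}^\top)$, so the whole computation reduces to finding $\dot{\H}$ once and substituting.

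For the gradient, rather than differentiating the closed form of $\mathcal{L}$ through the inverse $(\Tilde{\bSigma}_T + \tfrac{\lambda_W}{K}\I)^{-1}$, I would invoke the envelope theorem: since $\W^*(\H)$ minimizes the joint objective over $\W$, the $\W$-partial vanishes at $\W^*$ and the total gradient equals the $\H$-partial with $\W$ frozen, $\nabla\mathcal{L}(\H) = \tfrac{1}{Kn}\W^{*\top}(\W^*\H - \Y) + \tfrac{\lambda_H}{Kn}\H$, hence $\dot{\H} = -\W^{*\top}(\W^*\H - \Y) - \lambda_H\H$. Writing $\G := \Tilde{\bSigma}_T + \tfrac{\lambda_W}{K}\I$ and substituting $\W^* = \tfrac{1}{K}\overline{\H}^\top\G^{-1}$, and using $\Y\H^\top = n\overline{\H}^\top$, $\Tilde{\bSigma}_B = \tfrac{1}{K}\overline{\H}\,\overline{\H}^\top$ and $\Y\Y^\top = n\I_K$, I obtain the clean expressions $\dot{\H} = \tfrac{1}{K}\G^{-1}\overline{\H}\Y - \tfrac{1}{K}\G^{-1}\Tilde{\bSigma}_B\G^{-1}\H - \lambda_H\H$ and, applying $\tfrac{d\overline{\H}}{dt} = \tfrac{1}{n}\dot{\H}\Y^\top$, the reduced flow $\tfrac{d\overline{\H}}{dt} = \tfrac{1}{K}\G^{-1}(\I - \Tilde{\C}_B)\overline{\H} - \lambda_H\overline{\H}$ with $\Tilde{\C}_B = \Tilde{\bSigma}_B\G^{-1}$.

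The last step is to feed $\dot{\H}$ (for $\Tilde{\bSigma}_T$), $\dot{\H}\P_W$ (for $\bSigma_W$), and $\tfrac{d\overline{\H}}{dt}$ (for $\bSigma_B$) into the product-rule template. The decisive simplification for $\bSigma_W$ is the annihilation identity $\Y\P_W = \0$ (the within-class projector kills the class-mean expansion $\overline{\H}\Y$), so the first term of $\dot{\H}$ drops out of $\dot{\H}\P_W$, leaving $-\tfrac{1}{K}\G^{-1}\Tilde{\bSigma}_B\G^{-1}\bSigma_W$ plus its transpose and $-2\lambda_H\bSigma_W$. The final re-indexing into the $\C$-matrices is then cosmetic, via $\bSigma_\bullet\G^{-1} = \C_\bullet$ and $\G^{-1}\bSigma_\bullet = \C_\bullet^\top$ (e.g. $\G^{-1}\Tilde{\bSigma}_B\G^{-1}\bSigma_W = \Tilde{\C}_B^\top\C_W^\top$); for $\Tilde{\bSigma}_T$ the factor $(\I - \Tilde{\C}_B - \C_W)$ emerges from the split $\Tilde{\bSigma}_T = \bSigma_W + \Tilde{\bSigma}_B$ applied inside $\G^{-1}\Tilde{\bSigma}_B\G^{-1}\Tilde{\bSigma}_T = \Tilde{\C}_B^\top(\C_W^\top + \Tilde{\C}_B^\top)$, and for $\bSigma_B$ from $\tfrac{1}{K}\overline{\H}\P_K\overline{\H}^\top = \bSigma_B$. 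I expect the main obstacle to be bookkeeping: correctly carrying the non-commuting factors $\G^{-1}$, $\bSigma_W$, $\Tilde{\bSigma}_B$ through each product and matching them to $\C_B,\Tilde{\C}_B,\C_W$ and their transposes without transposition errors. The envelope-theorem shortcut for $\nabla\mathcal{L}$ and the identity $\Y\P_W = \0$ are precisely what keep this from becoming unwieldy.
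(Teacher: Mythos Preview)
Your proposal is correct and takes a genuinely different route from the paper. The paper works entry-wise: it introduces $\partial_{kjl}$ (the derivative with respect to the $l$-th coordinate of $\h_{k,j}$), differentiates the trace form $\mathcal{L}(\H)=\tfrac{1}{2K}\tr\big((\bSigma_W+\tfrac{\lambda_W}{K}\I)(\Tilde{\bSigma}_T+\tfrac{\lambda_W}{K}\I)^{-1}\big)+\tfrac{\lambda_H}{2}\tr(\Tilde{\bSigma}_T)+\text{const}$ directly---using the derivative-of-inverse identity on $(\Tilde{\bSigma}_T+\tfrac{\lambda_W}{K}\I)^{-1}$---to obtain $\partial_{kjl}\mathcal{L}=\langle \l_{kj},\e_l\rangle$, and then assembles $\tfrac{d\bSigma_\bullet}{dt}$ via a double chain-rule sum $\sum_{k,j,l}\partial_{kjl}\bSigma_\bullet\cdot(-Kn\,\partial_{kjl}\mathcal{L})$.

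Your approach replaces both of these steps by higher-level devices. First, the envelope/Danskin argument for $\nabla\mathcal{L}$ sidesteps differentiating through $\G^{-1}$ entirely; since the joint objective is strictly convex in $\W$ (Hessian $\tfrac{1}{Kn}\H\H^\top+\tfrac{\lambda_W}{K}\I\succ 0$), the minimizer $\W^*(\H)$ is unique and smooth and the theorem applies cleanly, yielding the same $\nabla\mathcal{L}$ as the paper's longer computation. Second, writing each covariance as $\H\A\H^\top$ with a fixed symmetric selector ($\I$, $\P_W$, or via $\overline{\H}=\tfrac{1}{n}\H\Y^\top$) turns the chain-rule sum into a single matrix product rule, and the annihilation $\Y\P_W=\0$ is exactly what makes the $\bSigma_W$ case collapse to $-\tfrac{1}{K}(\C_W\Tilde{\C}_B+\Tilde{\C}_B^\top\C_W^\top)-2\lambda_H\bSigma_W$. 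This buys a shorter and more transparent derivation with no coordinate indices; the paper's route, by contrast, requires no appeal to Danskin but pays for it with the explicit inverse-derivative term and more involved index bookkeeping. Your anticipated obstacle (non-commuting factors and transpose matching, e.g.\ using $\G^{-1}(\I-\Tilde{\C}_B)=(\I-\Tilde{\C}_B^\top)\G^{-1}$ from symmetry of $\G$ and $\Tilde{\bSigma}_B$) is real but routine.
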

\begin{proof}
We use the notation $\partial_{kjl}$ to denote the derivative w.r.t. the $l$th entry of $\h_{k,j}$. Then
\begin{align*}
    \partial_{kjl}\bSigma_B&=\frac{1}{Kn}(\e_l\hh{(\overline{\h}_k-\overline{\h}_G)^\top}+\hh{(\overline{\h}_k-\overline{\h}_G)}\e_l^\top),\\
    \partial_{kjl}\bSigma_W&=\frac{1}{Kn}\left(\e_l(\h_{k,j}-\overline{\h}_k)^\top+(\h_{k,j}-\overline{\h}_k)\e_l^\top\right),\\
    \partial_{kjl}\hh{\Tilde{\bSigma}}_T&=\frac{1}{Kn}(\e_l\h_{k,j}^\top+\h_{k,j}\e_l^\top),
\end{align*}
where $\e_l\in\R^{d}$ is the one-hot vector whose $l$th entry is one (i.e., a standard basis vector). By the product rule,
\begin{align*}
    &\partial_{kjl}\mathcal{L}(\H)=\frac{1}{2K}\tr \left((\partial_{kjl}\bSigma_W)(\hh{\Tilde{\bSigma}}_T+\frac{\lambda_W}{K}\I)^{-1}\right)+\frac{1}{2K}\tr \left((\bSigma_W+\frac{\lambda_W}{K}\I)\partial_{kjl}\left(\hh{\Tilde{\bSigma}}_T+\frac{\lambda_W}{K}\I\right)^{-1}\right)+\hh{\frac{\lambda_H}{Kn}\e_l^\top\h_{k,j}}\\
    &=\frac{1}{2K}\tr \left((\partial_{kjl}\bSigma_W)(\hh{\Tilde{\bSigma}}_T+\frac{\lambda_W}{K}\I)^{-1}\right)\\
    &\hspace{5mm}-\frac{1}{2K}\tr \left((\bSigma_W+\frac{\lambda_W}{K}\I)\left(\hh{\Tilde{\bSigma}}_T+\frac{\lambda_W}{K}\I\right)^{-1}\partial_{kjl}\hh{\Tilde{\bSigma}}_T\left(\hh{\Tilde{\bSigma}}_T+\frac{\lambda_W}{K}\I\right)^{-1}\right)+\hh{\frac{\lambda_H}{Kn}\e_l^\top\h_{k,j}}\\
    &=\frac{1}{K^2n}\left((\hh{\Tilde{\bSigma}}_T+\frac{\lambda_W}{K}\I)^{-1}(\h_{k,j}-\overline{\h}_k)-\left(\hh{\Tilde{\bSigma}}_T+\frac{\lambda_W}{K}\I\right)^{-1}(\bSigma_W+\frac{\lambda_W}{K}\I)\left(\hh{\Tilde{\bSigma}}_T+\frac{\lambda_W}{K}\I\right)^{-1}\h_{k,j}+\hh{\lambda_H K\h_{k,j}}\right)^\top\e_l.
\end{align*}
Therefore, the gradient of $\mathcal{L}$ is given by
\begin{align}
\label{nablaL}
    &\nabla\mathcal{L}(\H) = \\ \nonumber
    &\hspace{5mm}\frac{1}{K^2n}\left((\hh{\Tilde{\bSigma}}_T+\frac{\lambda_W}{K}\I)^{-1}(\H-\overline{\H}\otimes \1_{n}^\top)-\left(\hh{\Tilde{\bSigma}}_T+\frac{\lambda_W}{K}\I\right)^{-1}\left(\bSigma_W+\frac{\lambda_W}{K}\I\right)\left(\hh{\Tilde{\bSigma}_T}+\frac{\lambda_W}{K}\I\right)^{-1}\H+\hh{\lambda_H K\H}\right).
\end{align}

Next, we compute how each covariance matrix updates along the flow. Let $\bSigma_B(a,b)=\e_a^\top\bSigma_B\e_b$ denote the $a,b$-th entry of $\bSigma_B$. We further denote $\C:=(\hh{\Tilde{\bSigma}}_T+\frac{\lambda_W}{K}\I)^{-1}$, $\C_B:=\bSigma_B(\hh{\Tilde{\bSigma}}_T+\frac{\lambda_W}{K}\I)^{-1}$,\hh{$\Tilde{\C}_B:=\Tilde{\bSigma}_B(\hh{\Tilde{\bSigma}}_T+\frac{\lambda_W}{K}\I)^{-1}$} $\C_W:=\bSigma_W(\hh{\Tilde{\bSigma}}_T+\frac{\lambda_W}{K}\I)^{-1}$ and write $\partial_{kjl}\mathcal{L}(\H)=\langle \l_{kj},\e_l\rangle$, where
\begin{align*}
     \l_{kj} =\frac{1}{K^2n}\left(\C(\h_{k,j}-\overline{\h}_k)-(\I-\hh{\Tilde{\C}}_B^\top)\C\h_{k,j}+\hh{\lambda_H K\h_{k,j}}\right).
\end{align*}

Using the chain rule, we have that
\begin{align*}
    \frac{d\bSigma_B(a,b)}{dt}&=\sum_{k,j,l} \partial_{kjl}\bSigma_B(a,b)\frac{d\h_{k,j}[\ell]}{dt}=\sum_{k,j,l} \partial_{kjl}\bSigma_B(a,b)(-Kn\partial_{kjl}\mathcal{L}(\H))\\
    &=\sum_{k,j}\sum_l-\left(\langle\e_a,\e_l\rangle\langle\e_b,\hh{\overline{\h}_k-\overline{\h}_G}\rangle+\langle\e_a,\hh{\overline{\h}_k-\overline{\h}_G}\rangle\langle\e_l,\e_b\rangle\right)\langle \e_l, \l_{kjl}\rangle\\
    &=\sum_{k,j}-\left(\langle\e_a,\l_{kj}\rangle\langle\e_b,\hh{\overline{\h}_k-\overline{\h}_G}\rangle+\langle\e_a,\hh{\overline{\h}_k-\overline{\h}_G}\rangle\langle\l_{kj},\e_b\rangle\right)\\
    &= \e_a^T\left(\sum_{k,j}-\l_{k,j}\hh{(\overline{\h}_k-\overline{\h}_G)}^\top-\hh{(\overline{\h}_k-\overline{\h}_G)}\l_{k,j}^\top\right)\e_b\\
                            &=\frac{1}{K}\e_a^T\left(\C_B(\I-\hh{\Tilde{\C}}_B)+(\I-\hh{\Tilde{\C}}_B^\top)\C_B^\top\right)\e_b\hh{-2\lambda_H\e_a^\top\bSigma_B\e_b} %
\end{align*}
Similar computation yields
\begin{align*}
    \frac{d\bSigma_W(a,b)}{dt}&=-\frac{1}{K}\e_a^\top\left(\C_W\hh{\Tilde{\C}}_B+\hh{\Tilde{\C}}_B^\top\C_W^\top\right)\e_b\hh{-2\lambda_H\e_a^\top\bSigma_W\e_b} \\
    \frac{d\hh{\Tilde{\bSigma}}_T(a,b)}{dt}&=\frac{1}{K}\e_a^T\left((\I-\hh{\Tilde{\C}}_B-\C_W)\hh{\Tilde{\C}}_B+\hh{\Tilde{\C}}_B^\top(\I-\hh{\Tilde{\C}}_B^\top-\C_W^\top)\right)\e_b\hh{-2\lambda_H\e_a^\top\hh{\Tilde{\bSigma}}_T\e_b}
\end{align*}

\end{proof}

Let $T_B:t\mapsto e^{2\lambda_H t}\tr(\bSigma_B)$ and $T_W: t\mapsto e^{2\lambda_H t}\tr(\bSigma_W)$. The above lemma suggests that $T_B$ strictly increases along the flow, while $T_W$ decreases. Indeed,
\begin{align*}
    \frac{d\,T_W}{dt}&=e^{2\lambda_Ht}(\frac{d\,\tr(\bSigma_W)}{dt}+2\lambda_H\tr(\bSigma_W))\\
    &=-\frac{2}{K}e^{2\lambda_Ht}\tr(\C_W\hh{\Tilde{\C}}_B)\\
    &=-e^{2\lambda_Ht}\frac{2}{K}\tr(\bSigma_W(\hh{\Tilde{\bSigma}}_T+\frac{\lambda_W}{K}\I)^{-1}\hh{\Tilde{\bSigma}}_B(\hh{\Tilde{\bSigma}}_T+\frac{\lambda_W}{K}\I)^{-1})\leq0,
\end{align*}
The last inequality holds because the trace of the product of two positive semidefinite matrices is always non-negative (e.g. by Von-Neumann's trace inequality). Similarly
\begin{align*}
    \frac{dT_B}{dt}&=\frac{2}{K}e^{2\lambda_Ht}\tr(\C_B(\I-\hh{\Tilde{\C}}_B))\\
        &=\frac{2}{K}e^{2\lambda_Ht}\tr(\bSigma_B(\hh{\Tilde{\bSigma}}_T+\frac{\lambda_W}{K}\I)^{-1}(\I-\hh{\Tilde{\bSigma}}_B(\hh{\Tilde{\bSigma}}_T+\frac{\lambda_W}{K}\I)^{-1}))\\
        &=\frac{2}{K}e^{2\lambda_Ht}\tr(\bSigma_B(\hh{\Tilde{\bSigma}}_T+\frac{\lambda_W}{K}\I)^{-1}(\bSigma_W+\frac{\lambda_W}{K}\I)(\hh{\Tilde{\bSigma}}_T+\frac{\lambda_W}{K}\I)^{-1})\\
        &=\frac{2}{K}e^{2\lambda_Ht}\left(\tr(\bSigma_B(\hh{\Tilde{\bSigma}}_T+\frac{\lambda_W}{K}\I)^{-1}\bSigma_W(\hh{\Tilde{\bSigma}}_T+\frac{\lambda_W}{K}\I)^{-1})+\frac{\lambda_W}{K}\tr(\bSigma_B(\hh{\Tilde{\bSigma}}_T+\frac{\lambda_W}{K}\I)^{-2})\right)\\
        &\geq \frac{2\lambda_W}{K^2}e^{2\lambda_Ht}\tr(\bSigma_B(\hh{\Tilde{\bSigma}}_T+\frac{\lambda_W}{K}\I)^{-2})>0,
\end{align*}
where the strict inequality again comes from Von-Neumann trace inequality, which ensures that the trace of product of a positive definite matrix and a non-zero positive semidefinite matrix is positive.

Since $\widetilde{NC}_1=T_W/T_B$, the above computation also shows that $\widetilde{NC}_1$ has to strictly decrease along the flow.

\newpage

\section{Proof of Corollary~\ref{ncdecrease}}
\label{app:proof5}

Recall that the minimizer $\H_{1/\beta}$ satisfies the first order equation
\begin{align}
    \label{SE_for_Hbeta}
    \H_{1/\beta}-\H_0 = -\frac{Kn}{\beta}\nabla \mathcal{L}(\H_{1/\beta}).
\end{align}
We first show that $\H_{1/\beta}\to\H_0$ as $\beta\to\infty$. The following lemma would be helpful.
\begin{lemma} There exists a constant $M>0$ independent of $\H$, such that
\begin{align*}
    \|\nabla\mathcal{L}(\H)\|_F \leq M\|\H\|_F,
\end{align*}
for any $\H\in\mathbb{R}^{d\times Kn}$.
\end{lemma}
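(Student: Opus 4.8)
The plan is to read off the explicit gradient from \eqref{nablaL} and bound each of its three summands separately, so that the whole estimate is linear in $\|\H\|_F$ with a constant that does not depend on $\H$. Writing $\C := (\Tilde{\bSigma}_T + \frac{\lambda_W}{K}\I)^{-1}$, the first step I would take is the one uniform estimate on which everything rests: since $\Tilde{\bSigma}_T = \frac{1}{Kn}\H\H^\top \succeq \0$, every eigenvalue of $\Tilde{\bSigma}_T + \frac{\lambda_W}{K}\I$ is at least $\frac{\lambda_W}{K}$, so $\|\C\|_2 \leq \frac{K}{\lambda_W}$ regardless of $\H$. This is what keeps the constant $M$ independent of $\H$ even though $\C$ depends on $\H$.

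For the first summand $\C(\H - \overline{\H}\otimes\1_n^\top)$, I would observe that $\H \mapsto \overline{\H}\otimes\1_n^\top$ is within-class averaging, i.e. right-multiplication of each class block by the orthogonal projection $\frac{1}{n}\1_n\1_n^\top$; hence $\H \mapsto \H - \overline{\H}\otimes\1_n^\top$ is the complementary orthogonal projection and is norm-nonincreasing, giving $\|\H - \overline{\H}\otimes\1_n^\top\|_F \leq \|\H\|_F$. Together with the resolvent bound, this summand is at most $\frac{K}{\lambda_W}\|\H\|_F$. The third summand $\lambda_H K \H$ is trivially bounded by $\lambda_H K \|\H\|_F$.

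The step I expect to be the main obstacle is the middle summand $\C(\bSigma_W + \frac{\lambda_W}{K}\I)\C\H$, because $\bSigma_W$ scales like $\|\H\|_F^2$, so bounding it directly would produce an unwanted quadratic dependence. The resolution is to argue at the level of the positive-semidefinite order rather than by bounding $\bSigma_W$: from the decomposition $\Tilde{\bSigma}_T = \bSigma_W + \Tilde{\bSigma}_B$ with $\Tilde{\bSigma}_B \succeq \0$ we get $\bSigma_W + \frac{\lambda_W}{K}\I \preceq \Tilde{\bSigma}_T + \frac{\lambda_W}{K}\I = \C^{-1}$, and conjugating by the symmetric PSD matrix $\C$ preserves the order, so $\0 \preceq \C(\bSigma_W + \frac{\lambda_W}{K}\I)\C \preceq \C\C^{-1}\C = \C$. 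Consequently $\|\C(\bSigma_W + \frac{\lambda_W}{K}\I)\C\|_2 \leq \|\C\|_2 \leq \frac{K}{\lambda_W}$, and this summand is also at most $\frac{K}{\lambda_W}\|\H\|_F$.

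Collecting the three bounds and multiplying by the prefactor $\frac{1}{K^2 n}$ from \eqref{nablaL} yields $\|\nabla\mathcal{L}(\H)\|_F \leq \frac{1}{Kn}\bigl(\frac{2}{\lambda_W} + \lambda_H\bigr)\|\H\|_F$, so the lemma holds with $M = \frac{1}{Kn}\bigl(\frac{2}{\lambda_W} + \lambda_H\bigr)$, which is manifestly independent of $\H$ (it uses only $\lambda_W > 0$). I would then feed this Lipschitz-type bound into \eqref{SE_for_Hbeta} to conclude $\|\H_{1/\beta} - \H_0\|_F \leq \frac{Kn}{\beta} M \|\H_{1/\beta}\|_F$, which forces $\H_{1/\beta} \to \H_0$ as $\beta \to \infty$ after a routine rearrangement to absorb the $\|\H_{1/\beta}\|_F$ on the right.
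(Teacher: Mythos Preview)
Your proof is correct and follows essentially the same approach as the paper: bound each of the three summands in \eqref{nablaL} separately, using the uniform resolvent bound $\|\C\|_{2}\le K/\lambda_W$ and, for the middle term, the Loewner order $\bSigma_W+\frac{\lambda_W}{K}\I \preceq \C^{-1}$. Your argument is in fact slightly tidier---you use the projection property to get $\|\H-\overline{\H}\otimes\1_n^\top\|_F\le\|\H\|_F$ (the paper uses the cruder $2\|\H\|_F$) and you conjugate directly by $\C$ rather than passing through $\C^{1/2}$---yielding the constant $\frac{1}{Kn}(\frac{2}{\lambda_W}+\lambda_H)$ versus the paper's $\frac{1}{Kn}(\frac{3}{\lambda_W}+\lambda_H)$.
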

\begin{proof} 
We bound each term in the expression of $\nabla\mathcal{L}$ equation \eqref{nablaL} individually. %
For the first term we have
\begin{align*}
    \|(\hh{\Tilde{\bSigma}}_T+\frac{\lambda_W}{K}\I)^{-1}(\H-\overline{\H}\otimes \1_{n}^\top)\|_F&\leq \|(\hh{\Tilde{\bSigma}}_T+\frac{\lambda_W}{K}\I)^{-1}\|_{op}\|(\H-\overline{\H}\otimes \1_{n}^\top)\|_F\\
    &\leq \frac{K}{\lambda_W} 
    \|(\H-\overline{\H}\otimes \1_{n}^\top)\|_F\leq \frac{2K}{\lambda_W}\| \H\|_F,
\end{align*}
where 
$\|\cdot\|_{op}$ denotes the operator norm and 
the second inequality is due to the fact that each eigenvalue of $(\hh{\Tilde{\bSigma}}_T+\frac{\lambda_W}{K}\I)^{-1}$ is no bigger than $\frac{K}{\lambda_W}$. Similarly,
\begin{align*}
    &\left\| \left(\hh{\Tilde{\bSigma}}_T+\frac{\lambda_W}{K}\I\right)^{-1}\left(\bSigma_W+\frac{\lambda_W}{K}\I\right)\left(\hh{\Tilde{\bSigma}}_T+\frac{\lambda_W}{K}\I\right)^{-1}\H\right\|_F\\
    &\leq \frac{K}{\lambda_W}\left\|\left(\hh{\Tilde{\bSigma}}_T+\frac{\lambda_W}{K}\I\right)^{-\frac{1}{2}}\left(\bSigma_W+\frac{\lambda_W}{K}\I\right)\left(\hh{\Tilde{\bSigma}}_T+\frac{\lambda_W}{K}\I\right)^{-\frac{1}{2}}\right\|_{op}\|\H\|_F,
\end{align*}
where in the last inequality we used $\|(\hh{\Tilde{\bSigma}}_T+\frac{\lambda_W}{K}\I)^{-1/2}\|_{op}\leq \sqrt{K/\lambda_W}$ since every eigenvalue of $(\hh{\Tilde{\bSigma}}_T+\frac{\lambda_W}{K}\I)^{-1/2}$ is bounded by $\sqrt{K/\lambda_W}$. Denote $\A = \left(\bSigma_W+\frac{\lambda_W}{K}\I\right)$, $\B=\left(\hh{\Tilde{\bSigma}}_T+\frac{\lambda_W}{K}\I\right)$ and use $\A+\hh{\Tilde{\bSigma}}_B =\B$, we have
\begin{align*}
    \| \B^{-1/2}\A\B^{-1/2}\|_{op}&=\| (\B^{-1/2}\A^{1/2})(\B^{-1/2}\A^{1/2})^\top\|_{op}\\
    &=\|(\B^{-1/2}\A^{1/2})^\top(\B^{-1/2}\A^{1/2})\|_{op}\\
    &=\|\A^{1/2}\B^{-1}\A^{1/2}\|_{op}\\
    &=\|(\A^{-1/2}(\A+\hh{\Tilde{\bSigma}}_B)\A^{-1/2})^{-1}\|_{op}\\
    &=\|(\I+\A^{-1/2}\hh{\Tilde{\bSigma}}_B\A^{-1/2})^{-1}\|_{op}\leq 1.
\end{align*}
Combining the above bounds together, we have obtained for any $\H\in\mathbb{R}^{d\times Kn}$,
\begin{align*}
    \|\nabla\mathcal{L}(\H)\|_{F} \leq \frac{1}{Kn}\left(\frac{3}{\lambda_W}+\lambda_H\right)\|\H\|_F.
\end{align*}
\end{proof}
Next, we combine the lemma and the stationary equation \eqref{SE_for_Hbeta} to get
\begin{align*}
    \|\H_{1/\beta}-\H_0\|_F\leq \frac{nKM}{\beta}\|\H_{1/\beta}\|_F\leq \frac{nKM}{\beta} \|\H_{1/\beta}-\H_0\|_F+ \frac{nKM}{\beta} \|\H_0\|_F.
\end{align*}
Rearranging, we have the bound
\begin{align*}
     \|\H_{1/\beta}-\H_0\|_F\leq \left(\frac{\beta}{nKM}-1\right)^{-1}\|\H_0\|_F.
\end{align*}
This implies that $\H_{1/\beta}\to\H_0$ as $\beta\to\infty$. Combined with the continuity of $\nabla\mathcal{L}(\cdot)$ and the first order equation \eqref{SE_for_Hbeta}, this further implies
\begin{align*}
   \lim_{\beta\to\infty}\frac{\H_{1/\beta}-\H_0}{1/\beta}= -Kn\nabla \mathcal{L}(\H_0).
\end{align*}

Now, by chain rule,
\begin{align*}
    \lim_{\beta\to\infty}\frac{\widetilde{NC}_1(\H_{1/\beta})-\widetilde{NC}_1(\H_0)}{1/\beta}&=\langle \nabla_H\widetilde{NC}_1(\H_0),\lim_{\beta\to\infty}\frac{\H_{1/\beta}-\H_0}{1/\beta}\rangle\\
    &=\langle\nabla_H\widetilde{NC}_1(\H_0),-Kn\nabla \mathcal{L}(\H_0)\rangle\\
    &=\frac{d}{dt}\bigg|_{t=0}\widetilde{NC}_1(\H_t).
\end{align*}
In the last line, $\H_t$ denotes the gradient flow iterate defined in \eqref{Eq_grad_flow}. By (the proof of) Theorem \ref{nc_decrease_flow}, when $\H_0$ is non-collapsed, %
\begin{align*}
    \frac{d}{dt}\bigg|_{t=0}\widetilde{NC}_1(\H_t)<0
\end{align*}
must hold. This further implies that there exists some constant $C=C(\H_0)>0$ such that for $\beta>C$ we have that
    $\frac{\widetilde{NC}_1(\H_{1/\beta})-\widetilde{NC}_1(\H_0)}{1/\beta}<0$.
\newpage

\section{Proof of Theorem~\ref{thm_small_error}}
\label{app:proof2}

\tomtt{
Theorem~\ref{thm_small_error}, which is stated in the main body of the paper, is a simplified version of Theorem~\ref{thm_small_error_ext}.}

The notation in the theorem %
is as follows. We use $\mathrm{vec}(\cdot)$ to denote the column-stack vectorization of a matrix. The derivatives are w.r.t.~the vectorized matrices $\mathrm{vec}(\H)$ and $\mathrm{vec}(\W)$. 
For example, $\nabla_H f \in \mathbb{R}^{dnK \times 1}$ stands for the derivative of $f$ w.r.t.~$\mathrm{vec}(\H)$, and a second derivative w.r.t.~$\mathrm{vec}(\W)^\top$ yields $\nabla_W^\top \nabla_H f \in \mathbb{R}^{dnK \times Kd}$.

\begin{theorem}
\label{thm_small_error_ext}
Let $d \geq K$, %
and set some $\H_0$ and $\delta \H_0$.
Let 
$(\hat{\W}^*,\hat{\H}^*)$ be the minimizer of $f(\W,\H; \H_0)$ (with $f$ stated in \eqref{Eq_prob_h0}). 
Let $(\tilde{\W}^*,\tilde{\H}^*)$ be the minimizer of $f(\W,\H; \tilde{\H}_0=\H_0 + \delta \H_0)$. %
Define $\delta \W := \tilde{\W}^* - \hat{\W}^*$ and $\delta \H := \tilde{\H}^* - \hat{\H}^*$.
Then, with approximation accuracy of $O(\|\delta \H\|^2,\|\delta \W\|^2,\|\delta \H_0\|^2)$, we have that
\begin{align*}
&\mathrm{vec}(\delta \H) \approx \F ~\mathrm{vec}(\delta \H_0), \\ %
&\mathrm{vec}(\delta \W) \approx -   ( \nabla_{W}^\top \nabla_W f )^{-1} \nabla_{H}^\top \nabla_W f ~\F ~\mathrm{vec}(\delta \H_0),
\end{align*}
where 
$$
\F = \frac{\beta}{Kn} \left ( \nabla_{H}^\top \nabla_H f - \nabla_{W}^\top \nabla_H f ( \nabla_{W}^\top \nabla_W f )^{-1} \nabla_{H}^\top \nabla_W f   \right )^{-1}
$$
and all the derivatives\footnote{The derivatives are stated in the proof.} of $f$ are evaluated at the point $(\hat{\W}^*,\hat{\H}^*; \H_0)$.

In particular, for $\beta \gg \tomtb{\mathrm{max}\{1, \lambda_H\}}$, %
with approximation accuracy of $O(\beta^{-2},\|\delta \H_0\|^2)$,  we have
\begin{align*}
    \mathrm{vec}(\delta \H) \approx  
    \left ( \I_{dnK} - \frac{\lambda_H}{\beta} \I_{dnK} - \frac{1}{\beta} \I_{nK} \otimes \hat{\W}^{*\top} \hat{\W}^{*}  + \frac{1}{\beta} \Z^* \right ) \mathrm{vec}(\delta \H_0),
\end{align*}
where
\begin{align*}
    \Z^* := ( \E^{*\top} + \hat{\H}^*\otimes\hat{\W}^* )^\top ( \hat{\H}^* \hat{\H}^{*\top} \otimes \I_K + n\lambda_W \I_{dK} )^{-1} ( \E^{*\top} + \hat{\H}^*\otimes\hat{\W}^* ), %
\end{align*}
with $\E^* \in \mathbb{R}^{dnK \times Kd}$ whose $((i-1)K+k)$-th column (for $i\in [d]$ and $k \in [K]$) 
is given by
$$
\E^*[:,(i-1)K+k] = \mathrm{vec}(\e_{d,i}\e_{K,k}^\top(\hat{\W}^*\hat{\H}^*-\Y)),
$$
and 
$\e_{d,i}$ is the standard vector in $\mathbb{R}^d$ with 1 in its $i$th entry (similar definition stands for $\e_{K,k}$). 

\end{theorem}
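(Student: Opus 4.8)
The plan is to treat the statement as a sensitivity (perturbation) analysis of problem \eqref{Eq_prob_h0} and apply the implicit function theorem to its stationarity conditions. The crucial structural fact is that $\H_0$ enters $f$ only through the term $\frac{\beta}{2Kn}\|\H-\H_0\|_F^2$; hence $\nabla_W f$ is independent of $\H_0$, while $\frac{\partial}{\partial\,\mathrm{vec}(\H_0)}\nabla_H f = -\frac{\beta}{Kn}\I_{dnK}$. First I would write the first-order optimality conditions $\nabla_W f=\0$, $\nabla_H f=\0$ for $(\hat{\W}^*,\hat{\H}^*)$ at input $\H_0$ and for $(\tilde{\W}^*,\tilde{\H}^*)$ at input $\tilde{\H}_0=\H_0+\delta\H_0$, and view the latter as a perturbed version of the former.

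Next I would Taylor-expand the perturbed stationarity condition about $(\hat{\W}^*,\hat{\H}^*;\H_0)$, keeping only first-order terms in $(\delta\W,\delta\H,\delta\H_0)$. Since the unperturbed gradient vanishes, the only inhomogeneous contribution comes from the $\H_0$-dependence of $\nabla_H f$, yielding the block linear system
\begin{align*}
\begin{bmatrix} \nabla_W^\top\nabla_W f & \nabla_H^\top\nabla_W f \\ \nabla_W^\top\nabla_H f & \nabla_H^\top\nabla_H f \end{bmatrix}
\begin{bmatrix} \mathrm{vec}(\delta\W) \\ \mathrm{vec}(\delta\H) \end{bmatrix}
= \begin{bmatrix} \0 \\ \tfrac{\beta}{Kn}\,\mathrm{vec}(\delta\H_0) \end{bmatrix},
\end{align*}
with all derivatives evaluated at $(\hat{\W}^*,\hat{\H}^*;\H_0)$ and an $O(\|\delta\W\|^2,\|\delta\H\|^2,\|\delta\H_0\|^2)$ residual. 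Eliminating $\mathrm{vec}(\delta\W)$ from the first block-row and forming the Schur complement of the $\W$-block gives $\mathrm{vec}(\delta\H)\approx\F\,\mathrm{vec}(\delta\H_0)$ with $\F$ exactly as stated, and back-substitution yields the $\mathrm{vec}(\delta\W)$ formula. This settles the first (exact-$\F$) part; the only thing to justify is invertibility of $\nabla_W^\top\nabla_W f$ and of the Schur complement, which I would argue from positive definiteness of the Hessian at the minimizer, immediate for large $\beta$ since the $\H$-Hessian is then dominated by $\tfrac{\beta}{Kn}\I$.

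For the large-$\beta$ form I would compute the three Hessian blocks explicitly via the identity $\mathrm{vec}(\A\X\B)=(\B^\top\otimes\A)\mathrm{vec}(\X)$. Differentiating $\nabla_W f=\frac{1}{Kn}(\W\H-\Y)\H^\top+\frac{\lambda_W}{K}\W$ gives $\nabla_W^\top\nabla_W f=\frac{1}{Kn}(\hat{\H}^*\hat{\H}^{*\top}\otimes\I_K+n\lambda_W\I_{Kd})$, and differentiating $\nabla_H f=\frac{1}{Kn}\W^\top(\W\H-\Y)+\frac{\lambda_H+\beta}{Kn}\H-\frac{\beta}{Kn}\H_0$ gives $\nabla_H^\top\nabla_H f=\frac{1}{Kn}\big((\I_{nK}\otimes\hat{\W}^{*\top}\hat{\W}^*)+(\lambda_H+\beta)\I_{dnK}\big)$. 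The cross block is the delicate computation: varying $\H$ in $(\W\H-\Y)\H^\top$ produces a piece $\W(d\H)\H^\top$, whose vectorization is $(\hat{\H}^*\otimes\hat{\W}^*)\mathrm{vec}(d\H)$, and a piece $(\W\H-\Y)(d\H)^\top$, whose vectorization --- rewritten through the commutation matrix --- is precisely the column-wise-defined $\E^{*\top}$. Hence $\nabla_H^\top\nabla_W f=\frac{1}{Kn}(\E^{*\top}+\hat{\H}^*\otimes\hat{\W}^*)$, which is exactly the factor appearing in $\Z^*$.

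Finally I would assemble the Schur complement: the cross terms contract to $\nabla_W^\top\nabla_H f\,(\nabla_W^\top\nabla_W f)^{-1}\nabla_H^\top\nabla_W f=\frac{1}{Kn}\Z^*$, so that $\F=\beta\big[(\lambda_H+\beta)\I_{dnK}+\I_{nK}\otimes\hat{\W}^{*\top}\hat{\W}^*-\Z^*\big]^{-1}=\big[\I_{dnK}+\tfrac1\beta\G\big]^{-1}$ with $\G=\lambda_H\I_{dnK}+\I_{nK}\otimes\hat{\W}^{*\top}\hat{\W}^*-\Z^*$. Under $\beta\gg\max\{1,\lambda_H\}$ the matrix $\tfrac1\beta\G$ has small norm, so the Neumann expansion $(\I+\tfrac1\beta\G)^{-1}=\I-\tfrac1\beta\G+O(\beta^{-2})$ reproduces \eqref{Eq_small_err_dH_neuman}. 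I expect the cross-derivative/commutation-matrix bookkeeping that produces $\E^*$ to be the main obstacle, together with tracking the two error sources cleanly: the $O(\|\delta\H_0\|^2)$ from truncating the Taylor expansion (using that $\delta\W,\delta\H$ are themselves $O(\|\delta\H_0\|)$) and the $O(\beta^{-2})$ from truncating the Neumann series.
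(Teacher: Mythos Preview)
Your proposal is correct and follows essentially the same approach as the paper: Taylor-expand the stationarity conditions about $(\hat{\W}^*,\hat{\H}^*;\H_0)$, use that $\H_0$ enters only through $\nabla_H f$ with $\nabla_{H_0}^\top\nabla_H f=-\tfrac{\beta}{Kn}\I$, solve the resulting block system by Schur complement (the paper calls it ``blockwise matrix inversion''), compute the three Hessian blocks explicitly to identify $\Z^*$, and then apply a first-order Neumann expansion for large $\beta$. The only cosmetic difference is that the paper derives $\E^*$ by directly computing $\partial/\partial w_{k,i}$ of $(\I_{nK}\otimes\W^\top)\r$ column-by-column rather than invoking the commutation matrix, and it handles the $O(\|\delta\H\|^2)\Rightarrow O(\|\delta\H_0\|^2)$ reduction via a separate Lipschitz lemma for the map $\H_0\mapsto\H^*$.
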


\subsection{Proof of Theorem~\ref{thm_small_error_ext}}

Our proof is essentially a
perturbation analysis approach that
exploits the fact that each of the minimizers is a stationary point of its associated objective function. Namely, the minimizer of the perturbed problem $f(\W,\H;\tilde{\H}_0)$, i.e.,  $(\tilde{\W}^*,\tilde{\H}^*)$, obeys that
$\nabla f(\tilde{\W}^*,\tilde{\H}^*;\tilde{\H}_0) = \begin{bmatrix} \nabla_H f(\tilde{\W}^*,\tilde{\H}^*;\tilde{\H}_0) \\ \nabla_W f(\tilde{\W}^*,\tilde{\H}^*;\tilde{\H}_0) \end{bmatrix} = \0$, and the minimizer of the unperturbed problem, i.e., 
$(\W^*,\H^*)$ \tomtb{where for brevity we omit the ' $\hat{}$ ' symbol}, %
obeys that $\nabla f({\W}^*,{\H}^*;\H_0) = \begin{bmatrix} \nabla_H f({\W}^*,{\H}^*;\H_0) \\ \nabla_W f({\W}^*,{\H}^*;\H_0) \end{bmatrix} = \0$.

We use these properties in the following
first order Taylor approximation of $\nabla f(\tilde{\W}^*,\tilde{\H}^*;\tilde{\H}_0)$ around  $({\W}^*,{\H}^*;\H_0)$ 
(with accuracy of $O(\|\delta \H\|^2,\|\delta \W\|^2,\|\delta \H_0\|^2)$)
that is given by
\begin{align}
\label{Eq_small_err_gradf}
    &\begin{bmatrix} \nabla_H f(\tilde{\W}^*,\tilde{\H}^*;\tilde{\H}_0) \\ \nabla_W f(\tilde{\W}^*,\tilde{\H}^*;\tilde{\H}_0) \end{bmatrix} 
    \approx 
    \begin{bmatrix} \nabla_H f({\W}^*,{\H}^*;\H_0) \\ \nabla_W f({\W}^*,{\H}^*;\H_0) \end{bmatrix}
\\ \nonumber &    + 
    \begin{bmatrix} \nabla_{H}^\top \nabla_H f({\W}^*,{\H}^*;\H_0) & \nabla_{W}^\top \nabla_H f({\W}^*,{\H}^*;\H_0) \\ \nabla_{H}^\top \nabla_W f({\W}^*,{\H}^*;\H_0) & \nabla_{W}^\top \nabla_W f({\W}^*,{\H}^*;\H_0) \end{bmatrix}
    \begin{bmatrix}
    \mathrm{vec}(\delta \H) \\ \mathrm{vec}(\delta \W)
    \end{bmatrix}
+
    \begin{bmatrix} \nabla_{H_0}^\top \nabla_H f({\W}^*,{\H}^*;\H_0) \\ \nabla_{H_0}^\top \nabla_W f({\W}^*,{\H}^*;\H_0) \end{bmatrix}
    \mathrm{vec}(\delta \H_0).
\end{align}
Recall that $\delta \H := \tilde{\H}^* - {\H}^*$, $\delta \W := \tilde{\W}^* - {\W}^*$, and $\delta \H_0 = \tilde{\H}_0 - \H_0$.
Since the two terms in the first line of \eqref{Eq_small_err_gradf} vanish, we get that
\begin{align}
\label{Eq_small_err_gradf2}
        \begin{bmatrix}
    \mathrm{vec}(\delta \H) \\ \mathrm{vec}(\delta \W)
    \end{bmatrix} 
    \approx -
    \begin{bmatrix} \nabla_{H}^\top \nabla_H f & \nabla_{W}^\top \nabla_H f \\ \nabla_{H}^\top \nabla_W f & \nabla_{W}^\top \nabla_W f \end{bmatrix}^{-1}
    \begin{bmatrix} \nabla_{H_0}^\top \nabla_H f \\ \nabla_{H_0}^\top \nabla_W f \end{bmatrix}
    \mathrm{vec}(\delta \H_0),
\end{align}
where all the %
derivatives are evaluated
at $({\W}^*,{\H}^*;\H_0)$, which is omitted in order to simplify the presentation.
As shown below, in our setting the matrix that is inverted is indeed nonsingular.

We turn now to compute the derivatives. Let us denote $\h:=\mathrm{vec}(\H)$, $\w:=\mathrm{vec}(\W)$, and $\y:=\mathrm{vec}(\Y)$.
Observe that from well known identities on the Kronecker product and the vectorization operation we have
$$
\frac{1}{2Kn}\|\W\H-\Y\|_F^2 = \frac{1}{2Kn}\|(\I_{kn} \otimes \W) \h - \y \|_2^2
=  \frac{1}{2Kn}\|(\H^\top \otimes \I_K) \w - \y \|_2^2.
$$
Therefore, the first order derivatives are given by
\begin{align}
\label{Eq_derivative_dH}
    \nabla_H f(\W,\H;\H_0) &= \frac{1}{Kn}( \I_{kn} \otimes \W^\top )( (\I_{kn} \otimes \W) \h - \y ) + \frac{\lambda_{H}}{Kn} \h  + \frac{\beta}{Kn} (\h - \mathrm{vec}(\H_0)), \\ \nonumber
    \nabla_W f(\W,\H;\H_0) &= \frac{1}{Kn}(\H \otimes \I_K)( (\H^\top \otimes \I_K) \w - \y ) + \frac{\lambda_{W}}{K} \w.
\end{align}
Hence,
\begin{align*}
&\nabla_{H_0}^\top \nabla_H f = -\frac{\beta}{Kn} \I_{dnK}, \\
&\nabla_{H_0}^\top \nabla_W f = \0_{Kd \times dnK}. 
\end{align*}
Plugging these expressions in \eqref{Eq_small_err_gradf2} and using blockwise matrix inversion gives 
\begin{align*}
\mathrm{vec}(\delta \H) &\approx \frac{\beta}{Kn} \left ( \nabla_{H}^\top \nabla_H f - \nabla_{W}^\top \nabla_H f ( \nabla_{W}^\top \nabla_W f )^{-1} \nabla_{H}^\top \nabla_W f   \right )^{-1} \mathrm{vec}(\delta \H_0), \\ %
\mathrm{vec}(\delta \W) &\approx - \frac{\beta}{Kn} ( \nabla_{W}^\top \nabla_W f )^{-1} \nabla_{H}^\top \nabla_W f \left ( \nabla_{H}^\top \nabla_H f - \nabla_{W}^\top \nabla_H f ( \nabla_{W}^\top \nabla_W f )^{-1} \nabla_{H}^\top \nabla_W f   \right )^{-1}  \mathrm{vec}(\delta \H_0),
\end{align*}
which are stated in the theorem, 
where all the derivatives are evaluated at the point $(\W^*,\H^*; \H_0)$.

Let us state the second order derivatives that appear above.
First, one can observe that
\begin{align*}
\nabla_{H}^\top \nabla_H f(\W^*,\H^*;\H_0) &= \frac{1}{Kn} \I_{nK} \otimes \W^{*\top} \W^* + \frac{\lambda_{H}}{Kn}\I_{dnK} + \frac{\beta}{Kn} \I_{dnK}, \\ %
\nabla_{W}^\top \nabla_W f(\W^*,\H^*;\H_0) &= \frac{1}{Kn} \H \H^{*\top} \otimes \I_K + \frac{\lambda_W}{K} \I_{Kd}. %
\end{align*}
As for the mixed partial derivative, applying $\nabla_{W}^\top$ on \eqref{Eq_derivative_dH}, we get
\begin{align*}
\nabla_{W}^\top \nabla_H f &= \frac{\partial}{\partial \w} \nabla_H f = \frac{1}{Kn} \frac{\partial}{\partial \w} \left ( (\I_{kn} \otimes \W^\top)\r \right )  + \frac{1}{Kn} (\I_{kn} \otimes \W^\top) \frac{\partial}{\partial \w} ((\I_{kn} \otimes \W) \h - \y ) \\
&= \frac{1}{Kn} \frac{\partial}{\partial \w} \left ( (\I_{kn} \otimes \W^\top)\r \right )  + \frac{1}{Kn} (\I_{kn} \otimes \W^\top) \frac{\partial}{\partial \w} ( (\H^\top \otimes \I_K) \w - \y ) \\
&= \frac{1}{Kn} \E(\W,\H) + \frac{1}{Kn} (\I_{kn} \otimes \W^\top) (\H^\top \otimes \I_K) \\
&= \frac{1}{Kn} \E(\W,\H) + \frac{1}{Kn} (\H^\top \otimes \W^\top)
\end{align*}
where $\r:=\mathrm{vec}(\W\H-\Y)$ but treated as independent of $\w$ due to the product rule, and $\E(\W,\H): = \frac{\partial}{\partial \w} \left ( (\I_{kn} \otimes \W^\top)\r \right )$.
Denoting $w_{k,i}:=W[k,i]$, we have that
\begin{align*}
\frac{\partial}{\partial w_{k,i}} \left ( (\I_{kn} \otimes \W^\top)\r \right ) &=  \left ( (\I_{kn} \otimes \frac{\partial}{\partial w_{k,i}} \W^\top)\r \right ) =  \left ( (\I_{kn} \otimes \e_{d,i}\e_{K,k}^\top )\r \right ) \\
&= \mathrm{vec}( \e_{d,i}\e_{K,k}^\top (\W\H-\Y) ),
\end{align*}
where $\e_{d,i}$ is the standard vector in $\mathbb{R}^d$ with 1 in its $i$th entry (similar definition stands for $\e_{K,k}$). 

Therefore,
\begin{align*}
\nabla_{H}^\top \nabla_W f(\W^*,\H^*;\H_0) &= \frac{1}{Kn} \E^{*\top} + \frac{1}{Kn} ( \H^* \otimes \W^* ), \\ %
\nabla_{W}^\top \nabla_H f(\W^*,\H^*; \H_0) &= \frac{1}{Kn} \E^{*} + \frac{1}{Kn} ( \H^{*\top} \otimes \W^{*\top} ),
\end{align*}

where $\E^* = \E(\W^*,\H^*)$ and $\E(\W,\H) \in \mathbb{R}^{dnK \times Kd}$ is given by %
\begin{align*}
    &\E(\W,\H) := \\ \nonumber 
&\big [ \mathrm{vec}(\e_{d,1}\e_{K,1}^\top(\W\H-\Y)), ... , \mathrm{vec}(\e_{d,1}\e_{K,K}^\top(\W\H-\Y)), \mathrm{vec}(\e_{d,2}\e_{K,1}^\top(\W\H-\Y)), ... \\ \nonumber
&\hspace{100mm} ... , \mathrm{vec}(\e_{d,d}\e_{K,K}^\top(\W\H-\Y)) \big ].
\end{align*}

We focus now on the effect the deviation $\delta \H_0 = \tilde{\H}_0 - \H_0$ on the feature learning $\delta \H = \tilde{\H}^* - \H^*$. 
This requires inverting the 
$dnK \times dnK$ matrix that links $\delta \H$ and $\delta \H_0$, which is quite challenging.
Yet, from the derivatives that are stated above we observe the following 
\begin{align*}
&\mathrm{vec}(\delta \H) \approx \frac{\beta}{Kn} \left ( \nabla_{H}^\top \nabla_H f - \nabla_{W}^\top \nabla_H f ( \nabla_{W}^\top \nabla_W f )^{-1} \nabla_{H}^\top \nabla_W f   \right )^{-1} \mathrm{vec}(\delta \H_0)  \\ \nonumber
&= \left ( \I_{dnK} + \frac{\lambda_{H}}{\beta}\I_{dnK} + \frac{1}{\beta} \I_{nK} \otimes \W^{*\top} \W^{*}  - \frac{Kn}{\beta} \nabla_{W}^\top \nabla_H f ( \nabla_{W}^\top \nabla_W f )^{-1} \nabla_{H}^\top \nabla_W f   \right )^{-1} \mathrm{vec}(\delta \H_0) \\ \nonumber
&= \left ( \I_{dnK} + \frac{\lambda_{H}}{\beta}\I_{dnK} + \frac{1}{\beta} \I_{nK} \otimes \W^{*\top} \W^{*}  - \frac{1}{\beta} \Z^* \right )^{-1} \mathrm{vec}(\delta \H_0) %
\end{align*}
where
\begin{align*}
    \Z^* := ( \E^{*\top} + \H^*\otimes\W^* )^\top ( \H^* \H^{*\top} \otimes \I_K + n\lambda_W \I_{dK} )^{-1} ( \E^{*\top} + \H^*\otimes\W^* ).    
\end{align*}

Therefore, under the assumption of $\beta \gg \tomtb{\mathrm{max}\{1, \lambda_H\}}$, 
which is associated with a restrictive link between $\H_0$ and $\H$, 
we can use the first-order truncated Neumann series to approximate the matrix inversion (with accuracy of $O(\beta^{-2})$), %
and obtain the expression that us stated in \eqref{Eq_small_err_dH_neuman}:
\begin{align*}
    \mathrm{vec}(\delta \H) \approx \left ( \I_{dnK} - \frac{\lambda_{H}}{\beta}\I_{dnK} - \frac{1}{\beta} \I_{nK} \otimes \W^{*\top} \W^{*}  + \frac{1}{\beta} \Z^* \right ) \mathrm{vec}(\delta \H_0).
\end{align*}
\tomtt{Lastly, as shown in Section~\ref{app:on_the_map}, in the large $\beta$ regime we have that $O(\|\delta \H\|) = O(\|\delta \H_0\|)$. Therefore, the above approximation accuracy is $O(\beta^{-2},\|\delta \H_0\|^2)$ (namely, we can omit $O(\|\delta \H\|^2)$).} 

\newpage

\subsection{More on the Map $\H_0\mapsto \H^*$}
\label{app:on_the_map}

\tomtt{
In this section we show that the map $\H_0\mapsto \H^*$ (namely, the map from the input features matrix $\H_0$ to the minimizer $\H^*$ of problem \eqref{Eq_prob_h0}) is Lipschitz. In fact, our result shows that this map is nearly nonexpansive in the large $\beta$ regime (note that since $\mathcal{L}(\H)$ is not convex, known results of proximal mapping do not hold here).
\begin{theorem}
\label{H0toH}
Let $\H^*$ be the minimizer of problem \eqref{Eq_prob_h0} for a predefined $\H_0$. For $\beta>11\lambda_H$ and $\lambda_W\lambda_H<1$, the map $ \H_0\mapsto \H^*$ is  $(1-\frac{11\lambda_H}{\beta})^{-1}$-Lipschitz.
\end{theorem}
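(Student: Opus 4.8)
\emph{Proof strategy.} The plan is to realize $\H^*$ as the value of a resolvent (backward-Euler) map and to control that map through a weak-monotonicity estimate on $\nabla\mathcal{L}$. First I would record the stationarity condition \eqref{SE_for_Hbeta} for two inputs $\H_0^{(1)}$ and $\H_0^{(2)}$, with corresponding minimizers $\H_1^*,\H_2^*$ of \eqref{Eq_prob_h0}: each satisfies $\H_i^* + \tfrac{Kn}{\beta}\nabla\mathcal{L}(\H_i^*) = \H_0^{(i)}$. Writing $T(\H):=Kn\,\nabla\mathcal{L}(\H)$, this reads $\H_i^* = (\I+\tfrac{1}{\beta}T)^{-1}(\H_0^{(i)})$, so the claim is precisely a Lipschitz bound on this resolvent. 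Subtracting the two conditions and taking the Frobenius inner product with $\H_1^*-\H_2^*$ gives
\begin{align*}
\|\H_1^*-\H_2^*\|_F^2 + \tfrac{1}{\beta}\langle T(\H_1^*)-T(\H_2^*),\, \H_1^*-\H_2^*\rangle = \langle \H_0^{(1)}-\H_0^{(2)},\, \H_1^*-\H_2^*\rangle,
\end{align*}
and by Cauchy--Schwarz the right-hand side is at most $\|\H_0^{(1)}-\H_0^{(2)}\|_F\|\H_1^*-\H_2^*\|_F$. Hence it suffices to show that $T$ is $11\lambda_H$-weakly monotone,
\begin{align}
\label{Eq_weak_mono}
\langle T(\H_1)-T(\H_2),\, \H_1-\H_2\rangle \geq -11\lambda_H\,\|\H_1-\H_2\|_F^2,
\end{align}
for all $\H_1,\H_2$: dividing by $\|\H_1^*-\H_2^*\|_F$ and rearranging then yields the factor $(1-\tfrac{11\lambda_H}{\beta})^{-1}$, with $\beta>11\lambda_H$ guaranteeing positivity. (As a byproduct, \eqref{Eq_weak_mono} makes $\I+\tfrac{1}{\beta}T$ strongly monotone, hence injective, so $\H^*$ is unique and the map is genuinely single-valued.)

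To establish \eqref{Eq_weak_mono} I would pass to the Hessian via the integral identity
\begin{align*}
T(\H_1)-T(\H_2) = \int_0^1 Kn\,\nabla^2\mathcal{L}\big(\H_2+s(\H_1-\H_2)\big)\big[\H_1-\H_2\big]\,ds,
\end{align*}
which reduces \eqref{Eq_weak_mono} to the uniform lower bound $Kn\,\nabla^2\mathcal{L}(\H)\succeq -11\lambda_H\,\I$. I would then split $\mathcal{L}=\mathcal{L}_1+\tfrac{\lambda_H}{2Kn}\|\H\|_F^2$, where $\mathcal{L}_1$ gathers the data-fit and $\W$-regularization terms and admits the closed form $\tfrac{1}{2K}\tr\!\big((\bSigma_W+\tfrac{\lambda_W}{K}\I)(\widetilde{\bSigma}_T+\tfrac{\lambda_W}{K}\I)^{-1}\big)$ up to a constant, computed in Appendix~\ref{app:proof4}. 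The $\lambda_H$-term contributes exactly $+\lambda_H\,\I$ to $Kn\,\nabla^2\mathcal{L}$, so it remains to prove $Kn\,\nabla^2\mathcal{L}_1(\H)\succeq -12\lambda_H\,\I$.

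The main obstacle is this curvature bound, since $\mathcal{L}_1$ is a genuinely non-convex rational function of $\H$ and does exhibit negative curvature. The route I would take is to differentiate $Kn\,\nabla\mathcal{L}_1$ (the expression \eqref{nablaL} with the $\lambda_H$-term deleted) once more and bound each resulting term in operator norm, using the resolvent estimate $\|(\widetilde{\bSigma}_T+\tfrac{\lambda_W}{K}\I)^{-1}\|_{op}\leq K/\lambda_W$ from Appendix~\ref{app:proof5} together with the self-bounding relation $(\widetilde{\bSigma}_T+\tfrac{\lambda_W}{K}\I)^{-1}\widetilde{\bSigma}_T\preceq \I$. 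The delicate point is that the naive estimates are phrased through $\lambda_W$ and $\|\H\|_F$, whereas the target modulus is the clean quantity $11\lambda_H$; converting between the two is exactly where the hypothesis $\lambda_W\lambda_H<1$ enters, letting one trade inverse powers of $\lambda_W$ against powers of $\lambda_H$, and where the asymptotic scale-invariance of $\mathcal{L}_1$ (whose Hessian decays like $\|\H\|_F^{-2}$ for large $\|\H\|_F$, so that the worst curvature occurs at moderate $\|\H\|_F$) is needed to make the bound uniform over all $\H$ rather than only near the minimizers. Assembling these estimates gives the constant $12\lambda_H$, hence \eqref{Eq_weak_mono}, and the theorem follows.
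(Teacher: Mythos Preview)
Your approach is correct and, at its technical core, the same as the paper's: both reduce to a uniform bound on the Hessian $Kn\,\nabla^2\mathcal{L}$, obtained by differentiating the gradient expression \eqref{nablaL} along a line segment and controlling each of the (roughly ten) resulting terms in operator norm. The paper isolates as a preliminary lemma the key self-bounding estimates $\|(\widetilde{\bSigma}_T+\tfrac{\lambda_W}{K}\I)^{-1/2}\H\|_{op}\leq\sqrt{Kn}$ and $\|(\widetilde{\bSigma}_T+\tfrac{\lambda_W}{K}\I)^{-1/2}\bSigma_W(\widetilde{\bSigma}_T+\tfrac{\lambda_W}{K}\I)^{-1/2}\|_{op}\leq 1$, which absorb all $\H$-dependence exactly and make the bound uniform without further argument; your appeal to asymptotic scale-invariance of $\mathcal{L}_1$ is the right intuition behind these identities but is not needed once they are stated.

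The only genuine difference is the wrapper. The paper proves that $\nabla\mathcal{L}$ is $\tfrac{11\lambda_H}{Kn}$-Lipschitz (a two-sided Hessian bound) and closes with the reverse triangle inequality applied to the two stationarity equations $\H_0^{(i)}=\H_i^*+\tfrac{Kn}{\beta}\nabla\mathcal{L}(\H_i^*)$, whereas you take the inner product with $\H_1^*-\H_2^*$ and invoke weak monotonicity plus Cauchy--Schwarz. Your route formally requires only the one-sided lower bound $Kn\,\nabla^2\mathcal{L}\succeq -11\lambda_H\,\I$, but since your proposed term-by-term operator-norm argument yields the full two-sided bound anyway, the saving is cosmetic; the paper's triangle-inequality step is a line shorter, while your framing gives uniqueness of $\H^*$ (via strong monotonicity of $\I+\tfrac{1}{\beta}T$) as a byproduct.
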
}

Recall the first order %
optimality condition
\begin{align}
    \label{first_order_eq}
    \H^*-\H_0 = -\frac{Kn}{\beta}\nabla \mathcal{L}(\H^*),
\end{align}
where
$\mathcal{L}(\H) := \frac{1}{2Kn} \| \W^*(\H) \H - \Y \|_F^2 + \frac{\lambda_W}{2K} \| \W^*(\H) \|_F^2 + \frac{\lambda_H}{2Kn} \|\H\|_F^2$. 
We first show that $\nabla  \mathcal{L}$ is $\frac{11\lambda_H}{Kn}$-\textit{Lipschitz}. Theorem~\ref{H0toH} then follows immediately by triangular inequality.

The following linear algebra lemma will be useful.
\begin{lemma} Let $\A\in\R^{n\times n}$ be a positive definite matrix and $\B\in\R^{n\times n}$ be a positive semi-definite matrix with $\B\prec\A$. ( Here $\B\prec\A$ means $\A-\B$ is a positive definite matrix.) Then $\|\A^{-1/2}\B\A^{-1/2}\|_{op} \leq 1$.
\end{lemma}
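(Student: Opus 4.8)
The plan is to reduce the operator-norm bound to an eigenvalue statement obtained by a single congruence transformation. First I would set $\M := \A^{-1/2}\B\A^{-1/2}$, where $\A^{-1/2}$ denotes the (symmetric, positive definite) principal square root of $\A^{-1}$. Since $\A^{-1/2}$ is symmetric and $\B$ is symmetric positive semi-definite, $\M$ is itself symmetric and positive semi-definite: for every $\v$ we have $\v^\top \M \v = (\A^{-1/2}\v)^\top \B (\A^{-1/2}\v) \ge 0$. For a symmetric positive semi-definite matrix the operator norm coincides with the largest eigenvalue, so $\|\M\|_{op} = \lambda_{\max}(\M)$, and it suffices to show $\lambda_{\max}(\M) \le 1$.

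The key step is to apply the same congruence to the hypothesis $\A - \B \succ 0$. Because $\A^{-1/2}$ is invertible, congruence preserves positive definiteness: for every $\v \ne \0$, writing $\u := \A^{-1/2}\v \ne \0$, we have $\v^\top \A^{-1/2}(\A-\B)\A^{-1/2}\v = \u^\top(\A-\B)\u > 0$. Hence $\A^{-1/2}(\A-\B)\A^{-1/2} = \I - \M \succ 0$, which forces every eigenvalue of $\M$ to be strictly less than $1$, and in particular $\lambda_{\max}(\M) < 1 \le 1$. Combining with the previous paragraph gives $\|\M\|_{op} < 1$, which is in fact stronger than the claimed bound.

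I do not anticipate a genuine obstacle here; the argument is elementary. The only points requiring a line of justification are the two facts invoked above---that congruence by the invertible matrix $\A^{-1/2}$ maps a positive definite matrix to a positive definite matrix, and that $\|\M\|_{op} = \lambda_{\max}(\M)$ for symmetric positive semi-definite $\M$---both of which are immediate from the definitions. It is worth noting that the strict relation $\B \prec \A$ yields the strict bound $\|\M\|_{op} < 1$; if one instead only assumes the non-strict relation $\B \preceq \A$, the identical computation delivers the non-strict conclusion $\|\M\|_{op} \le 1$ exactly as stated.
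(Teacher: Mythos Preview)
Your argument is correct and in fact cleaner than the paper's. The paper proceeds differently: it first assumes $\B$ is positive definite, writes $\|\A^{-1/2}\B\A^{-1/2}\|_{op}=\|(\A^{-1/2}\B^{1/2})(\A^{-1/2}\B^{1/2})^\top\|_{op}=\|\B^{1/2}\A^{-1}\B^{1/2}\|_{op}$, then inverts to obtain $\|(\I+\B^{-1/2}(\A-\B)\B^{-1/2})^{-1}\|_{op}\le 1$; the merely positive semi-definite case is then handled by a limiting argument $\B\leftarrow \B+\delta\I$ with $\delta\to 0$. Your route---applying the congruence by $\A^{-1/2}$ directly to the hypothesis $\A-\B\succ 0$ to get $\I-\M\succ 0$---avoids the need for $\B^{-1/2}$ altogether and therefore dispenses with the case split and the limit. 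It also immediately yields the strict inequality $\|\M\|_{op}<1$ under the stated strict hypothesis, which the paper's limiting step does not preserve.
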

\begin{proof} If $\B$ is positive definite, and thus invertible, we have
\begin{align*}
    \|\A^{-1/2}\B\A^{-1/2}\|_{op}&=\|(\A^{-1/2}\B^{1/2})(\A^{-1/2}\B^{1/2})^\top\|_{op}\\
                                 &=\|(\A^{-1/2}\B^{1/2})^\top(\A^{-1/2}\B^{1/2})\|_{op}\\
                                 &=\|\B^{1/2}\A^{-1}\B^{1/2}\|_{op}\\
                                 &=\|(\B^{-1/2}(\B+\A-\B)\B^{-1/2})^{-1}\|_{op}\\
                                 &=\|(\I+\B^{-1/2}(\A-\B)\B^{-1/2})^{-1}\|_{op}\leq 1
\end{align*}
If $\B$ is positive semi-definite, since $\B\prec\A$, for sufficiently small $\delta>0$, we have $\B+\delta\I$ is positive definite, and it still holds that $\B+\delta\I\prec\A$. From the previous argument it holds that $\|\A^{-1/2}(\B+\delta\I)\A^{-1/2}\|_{op}\leq 1$. The result follows by taking $\delta\to 0$.

\end{proof}

The following lemma is an immediate application of the previous lemma, and will be useful to bound the Lipschitz norm of $\nabla \mathcal{L}$. 
\begin{lemma} 
\label{list_of_bounds}
For any features matrix $\H$, we have the following bound
\begin{align}
    &\|(\Tilde{\bSigma}_T+\frac{\lambda_W}{K}\I)^{-1/2}\Tilde{\bSigma}_T(\Tilde{\bSigma}_T+\frac{\lambda_W}{K}\I)^{-1/2}\|_{op}\leq 1\\
    &\|(\Tilde{\bSigma}_T+\frac{\lambda_W}{K}\I)^{-1/2}\bSigma_W(\Tilde{\bSigma}_T+\frac{\lambda_W}{K}\I)^{-1/2}\|_{op}\leq 1\\
    &\|(\Tilde{\bSigma}_T+\frac{\lambda_W}{K}\I)^{-1/2}\H\|_{op}\leq \sqrt{Kn} \\
    &\|(\Tilde{\bSigma}_T+\frac{\lambda_W}{K}\I)^{-1/2}(\H-\overline{\H}\otimes \1_{n}^\top)\|_{op}\leq \sqrt{Kn}
\end{align}
\end{lemma}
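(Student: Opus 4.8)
The plan is to obtain all four bounds as immediate consequences of the preceding linear-algebra lemma, after recording two elementary facts. Throughout I would abbreviate $\A := \Tilde{\bSigma}_T + \frac{\lambda_W}{K}\I$, which is positive definite since $\Tilde{\bSigma}_T \succeq 0$ and $\lambda_W > 0$ (so that $\A^{-1/2}$ is well defined). The two facts I would record first are: (i) the decomposition $\Tilde{\bSigma}_T = \bSigma_W + \Tilde{\bSigma}_B$ with $\bSigma_W \succeq 0$ and $\Tilde{\bSigma}_B \succeq 0$, already noted after the definition of $\Tilde{\bSigma}_T$; and (ii) the Gram identities $\H\H^\top = Kn\,\Tilde{\bSigma}_T$ and $(\H - \overline{\H}\otimes\1_n^\top)(\H - \overline{\H}\otimes\1_n^\top)^\top = Kn\,\bSigma_W$, where the second identity holds because the column of $\H - \overline{\H}\otimes\1_n^\top$ indexed by $(k,i)$ is exactly the centered feature $\h_{k,i} - \overline{\h}_k$.

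For the first bound I would apply the preceding lemma with $\B = \Tilde{\bSigma}_T$: since $\A - \Tilde{\bSigma}_T = \frac{\lambda_W}{K}\I \succ 0$ we have $\Tilde{\bSigma}_T \prec \A$, and the lemma gives $\|\A^{-1/2}\Tilde{\bSigma}_T\A^{-1/2}\|_{op} \leq 1$. For the second bound I would instead take $\B = \bSigma_W$; using fact (i), $\A - \bSigma_W = \Tilde{\bSigma}_B + \frac{\lambda_W}{K}\I \succ 0$, so $\bSigma_W \prec \A$ and the lemma yields $\|\A^{-1/2}\bSigma_W\A^{-1/2}\|_{op} \leq 1$.

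For the third and fourth bounds I would use the identity $\|\M\|_{op}^2 = \|\M\M^\top\|_{op}$, valid for any real matrix $\M$ since both sides equal $\sigma_{\max}(\M)^2$. Taking $\M = \A^{-1/2}\H$ and invoking fact (ii) gives $\M\M^\top = \A^{-1/2}\H\H^\top\A^{-1/2} = Kn\,\A^{-1/2}\Tilde{\bSigma}_T\A^{-1/2}$, so the first bound forces $\|\A^{-1/2}\H\|_{op}^2 \leq Kn$, which is the third bound after taking square roots. Identically, taking $\M = \A^{-1/2}(\H - \overline{\H}\otimes\1_n^\top)$ and using the second Gram identity together with the second bound gives $\|\M\|_{op}^2 = Kn\,\|\A^{-1/2}\bSigma_W\A^{-1/2}\|_{op} \leq Kn$, which is the fourth bound.

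I do not expect a genuine obstacle here: once the preceding lemma is available, the argument reduces to bookkeeping. The only two points that require care are verifying the strict definiteness $\Tilde{\bSigma}_B + \frac{\lambda_W}{K}\I \succ 0$ (needed to place $\bSigma_W$ strictly below $\A$ so that the lemma, stated for $\B \prec \A$, applies even when $\bSigma_W$ is only semidefinite), and confirming the column-wise identification that gives $(\H - \overline{\H}\otimes\1_n^\top)(\H - \overline{\H}\otimes\1_n^\top)^\top = Kn\,\bSigma_W$; both are immediate from the definitions of $\Tilde{\bSigma}_T$, $\Tilde{\bSigma}_B$, and $\bSigma_W$.
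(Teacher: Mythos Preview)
Your proposal is correct and follows essentially the same approach as the paper: the first two bounds come directly from the preceding lemma via $\bSigma_W \preccurlyeq \Tilde{\bSigma}_T \prec \A$, and the last two are obtained by squaring to reduce to Gram matrices $\H\H^\top = Kn\,\Tilde{\bSigma}_T$ and $(\H-\overline{\H}\otimes\1_n^\top)(\H-\overline{\H}\otimes\1_n^\top)^\top = Kn\,\bSigma_W$, then invoking the first two bounds. The only cosmetic difference is that you verify $\bSigma_W \prec \A$ directly from $\A-\bSigma_W = \Tilde{\bSigma}_B + \frac{\lambda_W}{K}\I \succ 0$, whereas the paper chains through $\bSigma_W \preccurlyeq \Tilde{\bSigma}_T$.
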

\begin{proof}
The first two inequalities are direct applications of the previous lemma since $\bSigma_W\preccurlyeq\Tilde{\bSigma}_T\prec(\Tilde{\bSigma}_T+\frac{\lambda_W}{K}\I)$. For the latter two inequalities,
\begin{align*}
    \|(\Tilde{\bSigma}_T+\frac{\lambda_W}{K}\I)^{-1/2}\H\|_{op} &= \|(\Tilde{\bSigma}_T+\frac{\lambda_W}{K}\I)^{-1/2}\H\H^T(\Tilde{\bSigma}_T+\frac{\lambda_W}{K}\I)^{-1/2}\|_{op}^{1/2}\\
    &=\sqrt{Kn}\|(\Tilde{\bSigma}_T+\frac{\lambda_W}{K}\I)^{-1/2}\Tilde{\bSigma}_T(\Tilde{\bSigma}_T+\frac{\lambda_W}{K}\I)^{-1/2}\|_{op}^{1/2}\leq \sqrt{Kn}
\end{align*}
\begin{align*}
    \|(\Tilde{\bSigma}_T+\frac{\lambda_W}{K}\I)^{-1/2}(\H-\overline{\H}\otimes \1_{n}^\top)\|_{op}&=\|(\Tilde{\bSigma}_T+\frac{\lambda_W}{K}\I)^{-1/2}(\H-\overline{\H}\otimes \1_{n}^\top)(\H-\overline{\H}\otimes \1_{n}^\top)^\top(\Tilde{\bSigma}_T+\frac{\lambda_W}{K}\I)^{-1/2}\|_{op}^{1/2}\\
    &=\sqrt{Kn}\|(\Tilde{\bSigma}_T+\frac{\lambda_W}{K}\I)^{-1/2}\bSigma_W(\Tilde{\bSigma}_T+\frac{\lambda_W}{K}\I)^{-1/2}\|_{op}^{1/2}\leq \sqrt{Kn}
\end{align*}
\end{proof}

\begin{proof}[Proof of Theorem \ref{H0toH}] First, we show that $\nabla \mathcal{L}$ is $(\frac{11\lambda_H}{Kn})$-\textit{Lipschitz}. For any increment $\Delta\H$, let $\H_t=\H+t\Delta\H$ for $0\leq t\leq 1$. By fundamental theorem of calculus, it's enough to show that $\|\frac{d}{dt}\nabla \mathcal{L}(\H_t)\|_F\leq \frac{11\lambda_H}{Kn}\|\Delta\H\|_F$ for any $0\leq t\leq 1$.

In the following, for simplicity we write $\A_t=\hh{\Tilde{\bSigma}}_T+\frac{\lambda_W}{K}\I$, $\B_t=\bSigma_W+\frac{\lambda_W}{K}\I$. (Note that although we have omitted the indices, all the covariance matrices depend on $t$). We also denote $\P(\H)=\H-\overline{\H}\otimes \1_{n}^\top$, where the operator $\P$ is indeed an orthogonal projection. 

Taking derivative in \eqref{nablaL}, we get
\begin{align*}
    &\frac{d}{dt}\nabla\mathcal{L}(\H_t) = \\ \nonumber
    &\hspace{5mm}\frac{1}{K^2n}\Bigg(\A_t^{-1}\P(\Delta\H)-\frac{1}{Kn}\A_t^{-1}\Delta\H\H_t^\top\A_t^{-1}\P(\H_t)-\frac{1}{Kn}\A_t^{-1}\H_t(\Delta\H)^\top\A_t^{-1}\P(\H_t)-\A_t^{-1}\B_t\A_t^{-1}\Delta\H\\
    &\hspace{5mm}+\frac{1}{Kn}\A_t^{-1}\Delta\H\H_t^\top\A_t^{-1}\B_t\A_t^{-1}\H_t+\frac{1}{Kn}\A_t^{-1}\H_t(\Delta\H)^\top\A_t^{-1}\B_t\A_t^{-1}\H_t-\frac{1}{Kn}\A_t^{-1}\P(\Delta\H)\P(\H_t)^\top\A_t^{-1}\H_t\\
    &\hspace{5mm}-\frac{1}{Kn}\A_t^{-1}\P(\H_t)\P(\Delta\H)^\top\A_t^{-1}\H_t+\frac{1}{Kn}\A_t^{-1}\B_t\A_t^{-1}\Delta\H\H_t^\top\A_t^{-1}\H_t+\frac{1}{Kn}\A_t^{-1}\B_t\A_t^{-1}\H_t(\Delta\H)^\top\A_t^{-1}\H_t \\
    &\hspace{5mm}+\lambda_H K\Delta\H\Bigg),
\end{align*}
where we used $\frac{d}{dt}\A_t=\frac{1}{Kn}(\Delta\H\H_t^\top+\H_t(\Delta\H)^\top)$, $\frac{d}{dt}\B_t=\frac{1}{Kn}(\P(\Delta\H)\P(\H_t)^\top+\P(\H_t)\P(\Delta\H)^\top)$, and $\frac{d}{dt}\A_t^{-1}=-\A_t^{-1}(\frac{d}{dt}\A_t)\A_t^{-1}$.
Next, we bound all the terms appeared above individually. 
\begin{align*}
    \|\A_t^{-1}\P(\Delta\H)\|_F\leq\|\A_t^{-1}\|_{op}\|\P(\Delta\H)\|_F\leq\frac{K}{\lambda_W}\|\Delta\H\|_F.
\end{align*}
\begin{align*}
    \|\frac{1}{Kn}\A_t^{-1}\Delta\H\H_t^\top\A_t^{-1}\P(\H_t)\|_F&\leq\frac{1}{Kn}\|\A_t^{-1}\|_{op}\|\Delta\H\|_{F}\|\H_t^\top\A_t^{-1/2}\|_{op}\|\A_t^{-1/2}\P(\H_t)\|_{op}\\
    &\leq\frac{1}{Kn}\frac{K}{\lambda_W}\|\Delta\H\|_{F}\sqrt{Kn}\sqrt{Kn}=\frac{K}{\lambda_W}\|\Delta\H\|_{F}.
\end{align*}
where we applied lemma~\ref{list_of_bounds} to bound $\|\A_t^{-1/2}\P(\H_t)\|_{op}$ and $\|\H_t^\top\A_t^{-1/2}\|_{op}$.
\begin{align*}
    &\|\frac{1}{Kn}\A_t^{-1}\H_t(\Delta\H)^\top\A_t^{-1}\B_t\A_t^{-1}\H_t\|_F \\
    &\leq\frac{1}{Kn}\|\A_t^{-1/2}\|_{op}\|\A_t^{-1/2}\H_t\|_{op}\|\Delta\H\|_F\|\A_t^{-1/2}\|_{op}\|\A_t^{-1/2}\B_t\A_t^{-1/2}\|_{op}\|\A_t^{-1/2}\H_t\|_{op}\\
    &\leq\frac{K}{\lambda_W}\|\Delta\H\|_F,
\end{align*}
where we applied lemma~\ref{list_of_bounds} again to bound $\|\A_t^{-1/2}\B_t\A_t^{-1/2}\|_{op}\leq 1$. All the other terms can be bounded in similar ways: by evoking submultiplicativity of matrix norms and decomposing each term into products that involve $\|\Delta\H\|_F$, $\|\A_t^{-1}\|_{op}$, $\|\A_t^{-1/2}\|_{op}$ and those in lemma~\ref{list_of_bounds}. By combining all the bounds together we have
\begin{align*}
    \|\frac{d}{dt}\nabla\mathcal{L}(\H_t)\|_F\leq \frac{1}{K^2n}(\frac{10K}{\lambda_W}+\lambda_HK)\|\Delta\H\|_F\leq \frac{11\lambda_H}{Kn}\|\Delta\H\|_F,
\end{align*}
where we used $\lambda_H\lambda_W\leq 1$ in the last inequality. By fundamental theorem of calculus, we can conclude now that $\nabla \mathcal{L}$ is $\frac{11\lambda_H}{Kn}$-\textit{Lipschitz}. Finally, from the first order equation \eqref{first_order_eq}, we have
\begin{align*}
    \H_0=\H^*+\frac{Kn}{\beta}\nabla \mathcal{L}(\H^*).
\end{align*}
Consider another input features matrix $\tilde{\H}_0$ and denote by $\tilde{\H}^*$ the associated minimizer of \eqref{Eq_prob_h0}.
The first order optimality condition gives
\begin{align*}
    \tilde{\H}_0=\tilde{\H}^*+\frac{Kn}{\beta}\nabla \mathcal{L}(\tilde{\H}^*).
\end{align*}
By triangle inequality,
\begin{align*}
    \|\H_0-\tilde{\H}_0\|_F\geq \|\H^*- \tilde{\H}^*\|_F-\frac{Kn}{\beta}\|\nabla \mathcal{L}(\H^*)-\nabla \mathcal{L}(\tilde{\H}^*)\|_F\geq (1-\frac{11\lambda_H}{\beta})\|\H^*- \tilde{\H}^*\|_F.
\end{align*}
It follows that the map $\H_0\to\H^*$ is $(1-\frac{11\lambda_H}{\beta})^{-1}$-\textit{Lipschitz}.
\end{proof}

\newpage

\section{Proof of Theorem~\ref{thm_small_error_collapse}}
\label{app:proof3}

In this section we compute the entire spectrum (singular values) for 
the diagonal blocks (``intra-class blocks") and the off-diagonal blocks (``inter-class blocks") of the block matrix in \eqref{Eq_F_matrix}.
To keep the main body of the paper concise, we present in the statement of Theorem~\ref{thm_small_error_collapse} only the results for
\tomt{$\sigma_{max}(\F_{k,k})$ and $\sigma_{min}(\F_{k,k})$ of the full rank matrix $\F_{k,k}$, as well as $\sigma_{max}(\F_{k,\tilde{k}})$ of the rank-1 matrix $\F_{k,\tilde{k}}$ ($\tilde{k} \neq k$).}

Recall that we consider the (non-degenerate) setting  $c:=\sqrt{\lambda_H\lambda_W}<1$. 
Therefore, when $\H_0=\H^*$ is a minimizer of \eqref{Eq_prob} (associated with $\W^*$), from %
Corollary~\ref{cor_nc_of_h0} 
we have that $(\W^*,\H^*)$ the minimizer of $f(\W,\H;\H_0)$ is also orthogonally collapsed and characterized by Theorem~\ref{thm_nc_of} with $\lambda_W$ and %
$\lambda_H$ 
(independent of $K, n, d$).
That is, $\H^* = \overline{\H} \otimes \1_n^\top$
and
$\W^{*} \overline{\H} \propto \overline{\H}^\top \overline{\H} \propto \W^*\W^{*\top} \propto  \I_K$. 
We also have the following results for 
the spectral norm of $\overline{\H}$ and $\W^{*}$, that we denote by $\sigma_{\overline{H}}$ and $\sigma_{W}$ respectively:
\begin{align*}
    \sigma_{\overline{H}}^2 &= (1 - c) \sqrt{\frac{\lambda_W}{{\lambda}_H}} = \sqrt{\frac{\lambda_W}{{\lambda}_H}} - \lambda_W, \\ %
    \sigma_{W}^2 &= (1 - c) \sqrt{\frac{{\lambda}_H}{\lambda_W}} = \sqrt{\frac{{\lambda}_H}{\lambda_W}} - {\lambda}_H.
\end{align*}
Observe that these expressions do not depend on the number of samples $K,n,d$.
Note also that $\sigma_{\overline{H}}^2 \sigma_{W}^2 = (1 - \sqrt{{\lambda}_H \lambda_W})^2 = (1-c)^2 <1$.

We remind the reader that $\mathrm{vec}(\delta \H) \approx \F \mathrm{vec}(\delta \H_0)$, for
\begin{align*}
    \F= \I_{dnK} - \frac{\lambda_{H}}{\beta}\I_{dnK} - \frac{1}{\beta} \I_{nK} \otimes \W^{*\top} \W^{*}  + \frac{1}{\beta} \Z^*,
\end{align*}
where
\begin{align*}
    \Z^* := ( \E^{*\top} + \H^*\otimes\W^* )^\top ( \H \H^{*\top} \otimes \I_K + n\lambda_W \I_{dK} )^{-1} ( \E^{*\top} + \H^*\otimes\W^* ),    
\end{align*}
and $\E^* = \E(\W^*,\H^*)$ and $\E(\W,\H) \in \mathbb{R}^{dnK \times Kd}$ is defined as
\begin{align*}
    &\E(\W,\H) := \\ \nonumber 
&\big [ \mathrm{vec}(\e_{d,1}\e_{K,1}^\top(\W\H-\Y)), ... , \mathrm{vec}(\e_{d,1}\e_{K,K}^\top(\W\H-\Y)), \mathrm{vec}(\e_{d,2}\e_{K,1}^\top(\W\H-\Y)), ... \\ \nonumber
&\hspace{100mm} ... , \mathrm{vec}(\e_{d,d}\e_{K,K}^\top(\W\H-\Y)) \big ],
\end{align*}
where $\e_{d,i}$ is the standard vector in $\mathbb{R}^d$ with 1 in its $i$th entry (similar definition stands for $\e_{K,k}$). 

For the collapsed minimizer $(\W^*,\H^*)$, 
we know that $\H^* = \sigma_{\overline{H}}\R \otimes \1_n^\top$ and $\W^* = \sigma_{W} \R^\top$ for some (partial) orthonormal matrix $\R \in \mathbb{R}^{d \times K}$ (i.e., $\R^\top\R = \I_K$).

Therefore, we have that 
$\W^*\H^*-\Y = -c\I_K\otimes  \1_n^\top\otimes \I_d$, and that $\H^* \otimes \W^* = (1-c) \R\otimes \1_n^\top\otimes\R^\top$.
Observe that the alignment of the former expression with the latter (where the locations of the dimensions $d$ and $K$ are swapped) is done using the matrices $\{ \e_{d,i}\e_{K,k}^\top \}$.
Indeed, we can write $\E^{*\top}=\K_{d,K}(-c\I_K\otimes  \1_n^\top\otimes \I_d)$, where $\K_{d,K} \in \mathbb{R}^{Kd \times dK}$ is the permutation matrix that satisfies 
\begin{align*}
    \K_{d,K}^\top(\X_1\otimes\X_2)\K_{d,K}=\X_2\otimes \X_1
\end{align*}
for any $\X_1\in\mathbb{R}^{d\times d}$ and $\X_2\in\mathbb{R}^{K\times K}$. Such a matrix $\K_{d,k}$ is also known as \textit{commutation matrix} in the matrix theory literature. Another useful property of the commutation matrix that we will frequently use is that
\begin{align}
\label{swapkro}
    \K_{d,K}(\x\otimes \Y) = \Y\otimes \x
\end{align}
for any $\x\in\mathbb{R}^{K\times 1}$ and $\Y\in\mathbb{R}^{d\times m}$. 

Let us extract the $k,\tilde{k}$-th block $\Z^*_{k,\tilde{k}} \in \mathbb{R}^{dn \times dn}$ of $\Z^*$. First, observe that
\begin{align*}
    \Z^* &= (\E^{*\top} + \H^* \otimes \W^*)^\top ( \H^* \H^{*\top} \otimes \I_K + n\lambda_W \I_{dK} )^{-1} (\E^{*\top} + \H^* \otimes \W^*) \\
    &= \frac{1}{n}\B^{\top} (\A\otimes \I_K)\B,
\end{align*}
where %
\begin{align*}
    \A &= ( \sigma_{\overline{H}}^2 \R\R^\top + \lambda_W \I_{d} )^{-1}\\
    \B &= -c\K_{d,K}(\I_K\otimes  \1_n^\top\otimes \I_d) + (1-c)(\R\otimes \1_n^\top\otimes\R^\top).
\end{align*}
Denote by $\{\e_k\}$ the standard basis vectors in $\mathbb{R}^K$. 
To extract the $k,\tilde{k}$-th block of $\Z^*$, we compute
\begin{align*}
    \Z^*_{k,\tilde{k}}&=(\e_k\otimes \I_{dn})^\top\Z^*(\e_{\tilde{k}}\otimes \I_{dn})\\
    &=\frac{1}{n}\left(\B(\e_k\otimes \I_{dn})\right)^\top(\A\otimes \I_K)\left(\B(\e_k\otimes \I_{dn})\right),
\end{align*}
with
\begin{align*}
    \B(\e_k\otimes \I_{dn})&=-c\K_{d,K}(\e_k\otimes  \1_n^\top\otimes \I_d)+(1-c)(\r_k\otimes\1_n^\top\otimes\R^\top)\\
                            &=-c(\1_n^\top\otimes \I_d\otimes \e_k)+(1-c)(\r_k\1_n^\top\otimes\R^\top),
\end{align*}
where in the last line, we have used property \eqref{swapkro} to swap the Kronecker product. Then,
\begin{align*}
    \Z^*_{k,\tilde{k}}&=\frac{1}{n}(-c(\1_n^\top\otimes \I_d\otimes \e_k)+(1-c)(\r_k\1_n^\top\otimes\R^\top))^\top(\A\otimes\I_K)(-c(\1_n^\top\otimes \I_d\otimes \e_{\tilde{k}})+(1-c)(\r_{\tilde{k}}\1_n^\top\otimes\R^\top))\\
    &=c^2(\e_k^\top\e_{\tilde{k}})\left(\frac{1}{n}(\1_n\1_n^\top)\otimes\A\right)+(1-c)^2(\r_k^\top\A\r_{\tilde{k}})\left(\frac{1}{n}(\1_n\1_n^\top)\otimes\R\R^\top\right)\\
    &\quad -c(1-c)\left(\frac{1}{n}(\1_n\1_n^\top)\otimes(\A\r_{\tilde{k}}\r_k^\top+\r_{\tilde{k}}\r_k^\top\A)\right).
\end{align*}
Let us write $\R=[\r_1\,\r_2\,...\r_K]\in\mathbb{R}^{d\times K}$ and let $\r_{K+1},...,\r_d$ be the orthonormal vectors such that $\{\r_i\}_{i=1}^d$ forms an orthonormal basis. We know that 
\begin{align*}
    \A=( \sigma_{\overline{H}}^2 \R\R^\top + \lambda_W \I_{d} )^{-1}=\sum_{i=1}^K\frac{1}{\sigma_{\overline{H}}^2+\lambda_W}\r_i\r_i^\top+\sum_{j=K+1}^d\frac{1}{\lambda_W}\r_j\r_j^\top.
\end{align*}
Therefore,
\begin{align*}
    \r_k^\top\A\r_{\tilde{k}}&=\frac{\delta_{k\tilde{k}}}{\sigma_{\overline{H}}^2+\lambda_W}\\
    \A\r_{\tilde{k}}\r_k^\top&=\r_{\tilde{k}}\r_k^\top\A=\frac{1}{\sigma_{\overline{H}}^2+\lambda_W}\r_{\tilde{k}}\r_k^\top.
\end{align*}
We can thus conclude that
\begin{align}
\label{Zblock}
     \Z^*_{k,\tilde{k}}&=\frac{1}{n}(\1_n\1_n^\top)\otimes\left(\delta_{k\tilde{k}}c^2\A+\frac{\delta_{k\tilde{k}}(1-c)^2}{\sigma_{\overline{H}}^2+\lambda_W}\R\R^\top-\frac{2c(1-c)}{\sigma_{\overline{H}}^2+\lambda_W}\r_{\tilde{k}}\r_k^\top\right).
\end{align}

When $k\neq\tilde{k}$, the off-diagonal block of $\Z^*$ is given by
\begin{align*}
    \Z^*_{k,\tilde{k}}=\frac{1}{n}(\1_n\1_n^\top)\otimes\left(-\frac{2c(1-c)}{\sigma_{\overline{H}}^2+\lambda_W}\r_{\tilde{k}}\r_k^\top\right),
\end{align*}
which is a rank-1 matrix. 
Since other matrices in $\F$ do not contribute to the inter-class block, we know that $\F_{k,\tilde{k}}= \frac{1}{\beta}\Z^*_{k,\tilde{k}}$. 
It is well-known that the eigenvalues of Kronecker product of two matrices are given by the products of their eigenvalues. We know that $\frac{1}{n}(\1_n\1_n^\top)$ has exactly one non-zero eigenvalue, which equals to $1$. This implies that 
\begin{align*}
    \sigma_{max}(\F_{k,\tilde{k}})=\frac{2c(1-c)}{\beta(\sigma_{\overline{H}}^2+\lambda_W)}=\frac{2\lambda_H(1-\sqrt{\lambda_H\lambda_W})}{\beta}.
\end{align*}

Next, let us compute the intra-class block. Setting $k=\tilde{k}$ in equation \eqref{Zblock}, we get
\begin{align*}
    \Z^*_{k,k}&=\frac{1}{n}(\1_n\1_n^\top)\otimes\left(\sum_{i=1}^d\mu_i\r_i\r_i^\top\right)\\
            &=\sum_{i=1}^d\mu_i(\frac{1}{n}\1_n\1_n^\top)\otimes(\r_i\r_i^\top)
\end{align*}
where
\begin{align*}
    \mu_k&=\frac{c^2}{\sigma_{\overline{H}}^2+\lambda_W}+\frac{(1-c)^2}{\sigma_{\overline{H}}^2+\lambda_W}-\frac{2c(1-c)}{\sigma_{\overline{H}}^2+\lambda_W}=(2c-1)^2\sqrt{\frac{\lambda_H}{\lambda_W}},\\
    \mu_i&=\frac{c^2}{\sigma_{\overline{H}}^2+\lambda_W}+\frac{(1-c)^2}{\sigma_{\overline{H}}^2+\lambda_W}=(c^2+(1-c)^2)\sqrt{\frac{\lambda_H}{\lambda_W}},\quad\text{for }1\leq i\leq K\text{ and } i\neq k,\\
    \mu_j&=\frac{c^2}{\lambda_W}=\lambda_H,\quad\text{for }K<j\leq d
\end{align*}
The intra-class block is therefore given by
\begin{align*}
    \F_{k,k}&=(1-\frac{\lambda_H}{\beta})\I_{nd}-\frac{\sigma_W^2}{\beta}\I_n\otimes(\R\R^\top)+\frac{1}{\beta}\sum_{i=1}^d\mu_i(\frac{1}{n}\1_n\1_n^\top)\otimes(\r_i\r_i^\top)\\
    &=(1-\frac{\lambda_H}{\beta})\sum_{i=1}^d\I_n\otimes(\r_i\r_i^\top)-\frac{\sigma_W^2}{\beta}\sum_{i=1}^K\I_n\otimes(\r_i\r_i^\top)+\frac{1}{\beta}\sum_{i=1}^d\mu_i(\frac{1}{n}\1_n\1_n^\top)\otimes(\r_i\r_i^\top)\\
    &=\sum_{i=1}^d\lambda_i\I_n\otimes(\r_i\r_i^\top)+\frac{1}{\beta}\sum_{i=1}^d\mu_i(\frac{1}{n}\1_n\1_n^\top)\otimes(\r_i\r_i^\top),
\end{align*}
where
\begin{align}
\label{Eq_eig1_opt1}
    \lambda_i&=1-\frac{\lambda_H}{\beta}-\frac{\sigma_W^2}{\beta} = 1-\frac{1}{\beta}\sqrt{\frac{\lambda_H}{\lambda_W}},\quad\text{for }1\leq i\leq K\\
\label{Eq_eig1_opt2}    
    \lambda_i&=1-\frac{\lambda_H}{\beta},\quad\text{for }K<i\leq d
\end{align}
Let $\s_1=\frac{1}{\sqrt{n}}\1_n$ and $\{\s_i\}_{i=1}^n$ be a set of orthonormal basis of $\mathbb{R}^n$. Then, we can further write
\begin{align}
     \F_{k,k}&=\sum_{i=1}^d(\sum_{j=1}^n\lambda_i\s_j\s_j^\top)\otimes(\r_i\r_i^\top)+\frac{1}{\beta}\sum_{i=1}^d\mu_i(\s_1\s_1^\top)\otimes(\r_i\r_i^\top) \nonumber \\ 
     &=\sum_{i=1}^d\sum_{j=1}^n\lambda_i(\s_j\otimes\r_i)(\s_j\otimes\r_i)^\top+\frac{1}{\beta}\sum_{i=1}^d\mu_i(\s_1\otimes\r_i)(\s_1\otimes\r_i)^\top \label{intrablocked}
\end{align}
One can easily verify that $\{\s_j\otimes \r_i\}_{1\leq j\leq n,1\leq i\leq d}$ is an orthonormal basis of $\mathbb{R}^{nd}$.
So, \eqref{intrablocked} gives us the eigendecomposition of $\F_{k,k}$. The spectral norm of $\F_{k,k}$ is therefore given by
\begin{align*}
    \sigma_{max}(\F_{k,k})=\max_{1\leq i\leq d}\max\{|\lambda_i|,|\lambda_i+\frac{1}{\beta}\mu_i|\}.
\end{align*}
As we consider the large $\beta$ regime, the expressions in both \eqref{Eq_eig1_opt1} and \eqref{Eq_eig1_opt2} are positive.
Observe that for $K < i \leq d$ (associated with the over-parameterization of the model) we have \tomtb{that the eigenvalue associated with the eigenvector $(\s_1 \otimes \r_i)$ is given by}
\begin{align*}
    \lambda_i+\frac{1}{\beta}\mu_i = 1-\frac{\lambda_H}{\beta} + \frac{\lambda_H}{\beta} = 1.
\end{align*}
\tomtb{Note, though, that due to the Kronecker product with $\s_1=\frac{1}{\sqrt{n}}\1_n$, perturbation in the direction of this eigenvector does not affect the variability in the $k$th class at all. 
Furthermore, generic/practical perturbations are likely to correlate with, or have their power spectrum spread over, many components of the $dn$ dimensional eigenbasis of $\F_{k,k}$ and not concentrate in an extremely low dimensional $d-K$ subspace (composed only of $\s_1 \otimes \r_i$ with $K<i<d$). %
Thus, we expect these eigenvectors to have small correlation with generic perturbations.}

Showing 
that $\sigma_{max}(\F_{k,k})=1$ reduces 
now 
to eliminating the option of eigenvalues larger than 1 for $1 \leq i \leq K$. This is equivalent to having that
\begin{align*}
    &\frac{1}{\beta}\sqrt{\frac{\lambda_H}{\lambda_W}}\left( -1 + (2c-1)^2 \right ) < 0, \\
    &\frac{1}{\beta}\sqrt{\frac{\lambda_H}{\lambda_W}}\left( -1 + (c^2+(1-c)^2) \right ) < 0,
\end{align*}
and both are ensured under our assumption  $c:=\sqrt{\lambda_H\lambda_W}<1$ (the non-degenerate case of the model).

Finally, observing that \eqref{Eq_eig1_opt1} is smaller than \eqref{Eq_eig1_opt2}, and that the second term in \eqref{intrablocked} does not include eigenvectors $(\s_j \otimes \r_i)$ for $j>1$, we conclude that 
\begin{align*}
    \sigma_{min}(\F_{k,k}) = 1-\frac{1}{\beta}\sqrt{\frac{\lambda_H}{\lambda_W}}.
\end{align*}

\subsection{Additional Discussion on the Results of the Theorem}
\label{app:analysis_F_discussion}

\tomtr{
Theorem~\ref{thm_small_error_collapse} has no restricting assumptions on the number of classes $K$.
The only assumption on $K$, which is common in theoretical NC papers and is also what is done in practice, is that $d>K$, i.e., that the dimension of the features is larger than the number of classes.
This means that, regardless of the number of classes, the inter-class (off-diagonal) blocks have rank 1, while the intra-class (diagonal) blocks have full rank (recall that each block is of size $dn \times dn$).
}

\tomtr{
Considering the conclusions from Theorem~\ref{thm_small_error_collapse}, which are stated in Section~\ref{sec:analysis}, if we sum up the maximal contribution of each of the $K-1$ inter-class blocks of a certain class, i.e., $(K-1) \sigma_{max}(\F_{k,\tilde{k}})$, then for guaranteeing that this sum is smaller than the minimal contribution of the intra-class block, i.e., $\sigma_{min}(\F_{k,k})$, we may need to assume that $\beta \gg K$. Note that this is a reasonable assumption under our large $\beta$ setting. 
Yet, we believe that the rank difference between the two types of blocks is a more important indicator for the dominance of the intra-class blocks, and this property is independent of the number of classes $K$. 
Specifically, since $dn>K$ (all the more so, in practice we even have $n \gg K$), then for generic perturbations (that uniformly span the entire $dnK$ dimensional space) the rank-1 inter-class blocks nullify much of the perturbation contrary to the intra-class block (which has full rank).
This strengthen our conclusion that the deviation from collapse of each class of the minimizer $\H$ is dominated by the deviation from collapse of the same class in $\H_0$ rather than by the deviations of other classes.
One thing that should be reminded here is that we analyse the ``near-NC" regime, so we assume that the system is already not far from exact NC. Reaching this point in general might become harder when the number of classes grows.
}

\tomtr{
Another point that can be raised regarding the results of Theorem~\ref{thm_small_error_collapse}, is that we do not analyze the full matrix $\F$ but rather its blocks. In fact, we believe that our analysis, which includes the complete spectral analysis for each block ($\F_{k,k}, \F_{k,\tilde{k}}$) separately, is much more informative than any attempt to analyze the spectrum of the full matrix $\F$, as it clearly distinguishes between properties of intra- and inter-class blocks and provides insights on the roles of the regularization hyperparameters that are aligned with practical DNN training.
In contrast, in the large $\beta$ regime we have that $\F$ is full rank, which masks the rank-1 property of the inter-class (off-diagonal) blocks.}
\tomtt{Indeed, using numerical examinations of the configurations that have been used in Figure~\ref{fig:spectrum_Fkk}, we observed that the spectrum of the full matrix $\F$ resembles a stretched version of the spectrum of the diagonal blocks and completely masks the intriguing properties of the off-diagonal blocks.}

\newpage

\section{Additional Experiments and Experimental Details}

\subsection{Experimental Details for The Layer-Wise Experiment}
\label{app:layer_wise_exp}

\tomtb{
In this section, we provide the experimental details for the layer-wise training experiment that is presented 
in Figure~\ref{fig:mlp_nc1} 
in the main body of the paper.}

\tomtb{
We train an MLP with 10 hidden layers on CIFAR-10 dataset, where each sample is flattened to a 3072x1 vector. Each hidden layer includes 3072 fully connected neurons with default PyTorch initialization of the weights, batchnorm, and ReLU nonlinearity. We start with one hidden layer and train the MLP with 3 epochs of Adam with mini-batch size of 256, learning rate of 1e-4, and CE loss. Then, we compute NC1 metrics for the deepest features.
At this point, the first ``outer iteration" of the procedure is finished.
We fix the parameters in the existing hidden layers, %
insert a new hidden layer before the final classification layer, and repeat the procedure. 
Namely, at each outer iteration of the procedure 
we optimize only the deepest hidden layer, which has just been inserted with default PyTorch initialization of the weights, and the final classification layer, which is ``initialized" with its weights from the previous outer iteration.} 

\tomtr{
Let us provide more details that has led to the implementation decisions that are stated above. 
We have found that layer-wise training of DNNs (on a practical dataset, e.g., CIFAR-10 that we use here) is significantly harder than end-to-end training in terms of reaching a small training loss value. (Presumably, this is the reason that DNNs are typically trained in an end-to-end fashion).
Careful configuration of the training procedure was required for reaching considerable low loss (though, still not zero training error) and low NC1 metrics as presented in Figure~\ref{fig:mlp_nc1}.
From our efforts in layer-wise training the 10-layer MLP we observed the following: Adam optimizer worked better than SGD (which is harder to tune); Layer-wise minimization with CE loss (rather than MSE loss) %
has led to lower NC1 metrics; Using no more than 3 epochs per ``outer iteration" allowed reaching lower values for the loss and the NC1 metrics at the deeper layers. 
Regarding the latter (i.e., more epochs per outer iteration lead to worse optimization results), when there are only one or two hidden layers then the decrease in the loss and the decrease in the NC1 metrics are larger when more epochs are being used. However, when we add in that case more hidden layers, %
the optimization appears to get stuck at some local minima with higher loss and NC1 metrics compared to what we get with only 3 epochs per outer iteration.
As far as we understand, this behavior follows from the (extreme) nonconvexity of the problem. 
Importantly, note that even though we prove depthwise decrease in within-class variability  for MSE loss (to allow rigorous mathematical analysis), it is beneficial to demonstrate alignment with the behavior for CE loss, as this means that we are not revealing peculiar features of MSE that do not appear in other settings.}

\subsection{More Experiments on the Effect of the Regularization Hyperparameters}
\label{app:more_exp}

In this section, we present more experiments that
examine how modifying the regularization hyperparameters affects the NC behavior of a practical DNN -- ResNet18 \citep{he2016deep} -- compared to a baseline setting. 
Specifically, as a baseline hyperparameter setting, we consider one that is used in previous works \citep{papyan2020prevalence,zhu2021geometric}: default PyTorch initialization of the weights, SGD optimizer with mini-batch size of 256, learning rate of 0.05 that is divided by 10 every 40 epochs, momentum of 0.9, and weight decay ($L_2$ regularization) of 5e-4 for all the network's parameters.

The first set of experiments is similar to the experiments in Section~\ref{sec:experiments}.
\tomtm{These experiments support the insight gained in Section~\ref{sec:analysis} that 
$\lambda_H$ (the regularization of the feature mapping) plays a bigger role than $\lambda_W$ (the regularization of the classification layer) does in approaching NC.} 
We compare the NC1 and NC2 metrics (defined in Section~\ref{sec:experiments}) of the baseline setting and the following modified settings: 1) doubling the weight decay only for the last (FC) layer; 2) doubling the weight decay only for feature mapping (conv) layers; 3) zeroing the weight decay for the last layer; and 4) zeroing the weight decay for feature mapping layers.

In Figure~\ref{fig:mnist_3k_diff_wd} we consider the MNIST dataset with 3K training samples per class.
Figure~\ref{fig:mnist_3k_diff_wd_mse} presents the NC1 and NC2 metrics of the deepest features for MSE loss and no bias in the FC layer.
Figures~\ref{fig:mnist_3k_diff_wd_ce} and \ref{fig:mnist_3k_diff_wd_ce_inner} present the NC1 and NC2 metrics of the deepest and intermediate (output of 3 out of the 4 ResBlock) features, respectively, when for CE loss with bias in the FC layer. 
In all the settings, we reach zero training error at the 40 epoch approximately. 
In Figure~\ref{fig:mnist_5k_diff_wd} we repeat the experiments with 5K training samples per class. 
\tomt{Furthermore, repeating the experiments with 3 different random seeds for initializing the DNN's parameters yields similar curves that demonstrate the same trends. In Table~\ref{table:nc_metrics} we report the mean and the standard deviation (SD) for the NC metrics computed for the deepest features at the 100 epoch (which is already after the NC metrics reach plateaus) for both the CIFAR-10 and the MNIST datasets.}

Similar to previous works \tomtt{(\cite{tirer2022extended} and follow-ups)}, from comparing Figures~\ref{fig:mnist_3k_diff_wd_ce} and \ref{fig:mnist_3k_diff_wd_ce_inner} (as well as Figures~\ref{fig:mnist_5k_diff_wd_ce} and \ref{fig:mnist_5k_diff_wd_ce_inner}) we see that the NC distance metrics are larger in the intermediate features, which correlates with the results for our model in Section~\ref{sec:nc1}. 
Examining all the settings of Figures~\ref{fig:mnist_3k_diff_wd} and \ref{fig:mnist_5k_diff_wd}, \tomt{as well as Table~\ref{table:nc_metrics},}
the 
experiments %
show the important role of the regularization of the feature mapping layers in approaching NC. %
\tomtm{%
Namely, modifying the regularization of the feature mapping layers leads to curves with larger deviations from the baseline compared to modifying the last layer's regularization. This is aligned with the theory established in Section~\ref{sec:analysis} that links increasing $\lambda_H$ to reducing the dominant component of the distance from collapse of a class, which is the deviation from collapse of its own features in preceding layers.}

\begin{table*}
\scriptsize
\centering
    \caption{\tomt{The effect of modifying the weight decay (WD) on NC metrics for ResNet18 trained on CIFAR-10 and MNIST datasets -- mean and SD are computed for 3 random seeds. %
    Observe that modifying the WD in the feature mapping increases
the deviation from the baseline more than modifying the WD of the last layer.}}
    \label{table:nc_metrics}
    \begin{tabular}{ | c | c | c | c | c | c | c | c | c |}
    \hline
    & \multicolumn{2}{|c|}{CIFAR-10, MSE loss}
    & \multicolumn{2}{|c|}{CIFAR-10, CE loss} 
    & \multicolumn{2}{|c|}{MNIST, MSE loss}
    & \multicolumn{2}{|c|}{MNIST, CE loss}     
    \\
    \hline
    & NC1 & NC2 & NC1 & NC2 
    & NC1 & NC2 & NC1 & NC2 
    \\
    \hline \hline
    Baseline & 0.0061 $\pm$ 4e-4 & 0.111 $\pm$ 1e-2 & 0.062 $\pm$ 5e-3 & 0.173 $\pm$ 1e-2 
    & 8e-4 $\pm$ 5e-5 & 0.072 $\pm$ 1e-2 & 0.004 $\pm$ 3e-4 & 0.115 $\pm$ 5e-3
    \\ 
    \hline \hline
    WDx2 for W & 0.0055 $\pm$ 3e-4 & 0.101 $\pm$ 8e-3 & 0.040 $\pm$ 2e-3 & 0.161 $\pm$ 7e-3 
    & 5e-4 $\pm$ 5e-5 & 0.055 $\pm$ 1e-2 & 0.003 $\pm$ 1e-4 & 0.102 $\pm$ 3e-3
    \\
    \hline
    WDx2 for H & 0.0022 $\pm$ 8e-5 & 0.070 $\pm$ 6e-3 & 0.024 $\pm$ 4e-3 & 0.131 $\pm$ 9e-3 
    & 4e-4 $\pm$ 2e-5 & 0.048 $\pm$ 5e-3 &  0.002 $\pm$ 7e-5 & 0.101 $\pm$ 3e-3
    \\
    \hline \hline
    WD=0 for W & 0.0048 $\pm$ 2e-4 & 0.101 $\pm$ 6e-3 & 0.104 $\pm$ 8e-3 & 0.195 $\pm$ 9e-3 
    & 1.7e-3 $\pm$ 1e-4 & 0.108 $\pm$ 2e-2 & 0.009 $\pm$ 4e-4 & 0.147 $\pm$ 5e-3
    \\
    \hline
    WD=0 for H & 0.0280 $\pm$ 3e-3 & 0.226 $\pm$ 6e-3 & 0.174 $\pm$ 7e-3 & 0.331 $\pm$ 1e-2 
    & 41e-3 $\pm$ 2e-3 & 0.303 $\pm$ 2e-2 & 0.031 $\pm$ 4e-4 & 0.198 $\pm$ 8e-3
    \\
    \hline
    \end{tabular}
\end{table*}

The second set of experiments shows the %
\tomtm{role}
of $\lambda_W$ in 
\tomtm{mitigating}
the interferences between the features of different classes %
\tomtm{(such interferences can hinder approaching NC).}
To visualize such behavior we use a ``per-class NC1" metric, defined as
$$
NC_1^{(k)} := \frac{1}{K} \tr \left ( \frac{1}{n} \sum_{i=1}^n (\h_{k,i}-\overline{\h}_k)(\h_{k,i}-\overline{\h}_k)^\top \bSigma_B^\dagger \right ).
$$
Note that the NC1 metric, which is defined in Section~\ref{sec:experiments}, can be written as
$$
NC_1 = \frac{1}{K} \frac{1}{K} \sum_{k=1}^K \tr \left ( \frac{1}{n} \sum_{i=1}^n (\h_{k,i}-\overline{\h}_k)(\h_{k,i}-\overline{\h}_k)^\top \bSigma_B^\dagger \right ) = \frac{1}{K} \sum_{k=1}^K NC_1^{(k)}.
$$
We also use the following metric to measure the alignment of the mean features and the last layer's weights
\begin{align*}
NC_3 := \left \| \frac{\W (\overline{\H}-\overline{\h}_G\1_K^\top)}{\| \W (\overline{\H}-\overline{\h}_G\1_K^\top) \|_F} - \frac{1}{\sqrt{K-1}}(\I_K - \frac{1}{K}\1_K\1_K^\top)  \right \|_F, 
\end{align*}
where the simplex ETF is normalized to unit Frobenius norm.

In
Figure~\ref{fig:mnist_3k_blurred_diff_wd_W__a}
we present the NC metrics of the deepest %
features 
of the baseline training scheme
on the MNIST dataset with 3K samples per class. 
The other lines in Figure~\ref{fig:mnist_3k_blurred_diff_wd_W}
show the NC metrics for a modified training set, where the samples of classes (digits) 4 and 9 are degraded by a uniform blur (blur kernel of size $9 \times 9$) that hardens the distinction between them.
Each line corresponds to a different value of weight decay for the last layer's parameters. Yet, in all of the settings we reached zero training error at the 40 epoch approximately. 
The empirical results show that large $\lambda_W$ facilitates reaching reduced NC metrics (closeness to NC structure) by reducing the effect (``interference") of the features of the degraded samples on the features of the other classes. This is aligned with %
the theory that is established for our model in Section~\ref{sec:analysis}.

\begin{figure}[t]
    \centering
  \begin{subfigure}[b]{1\linewidth}%
    \centering\includegraphics[width=96pt]{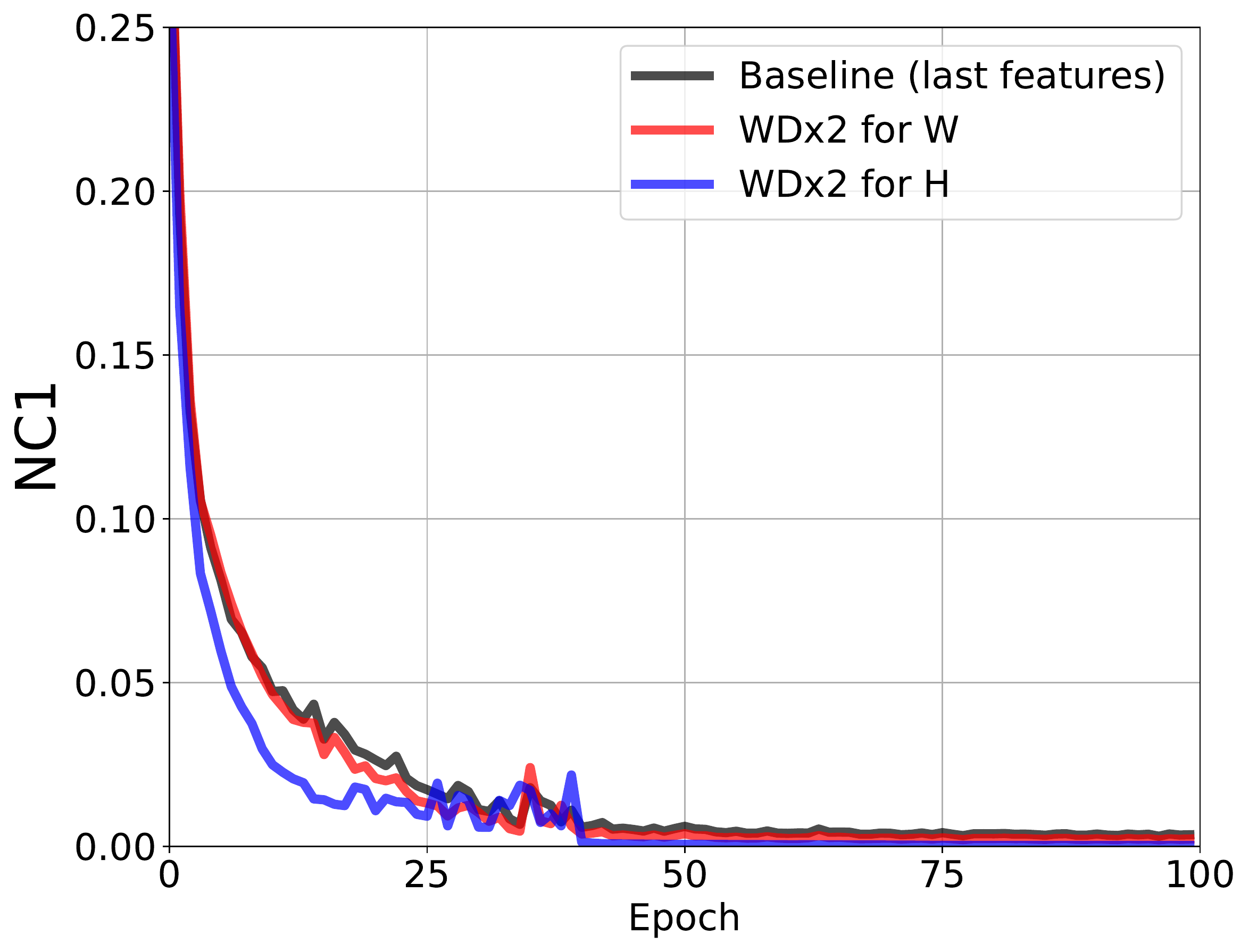} \hspace{3mm}
    \centering\includegraphics[width=96pt]{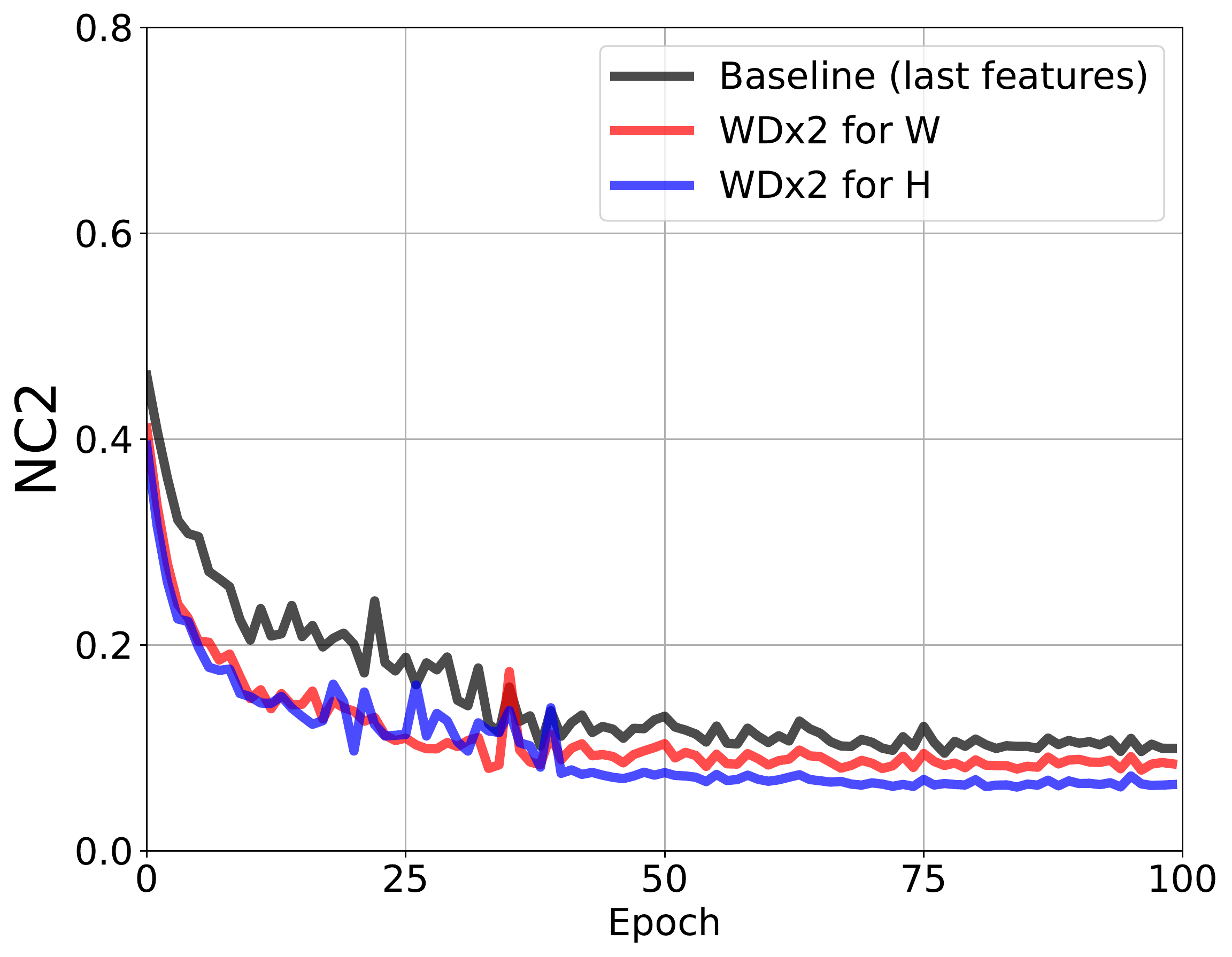} \hspace{3mm}
    \centering\includegraphics[width=96pt]{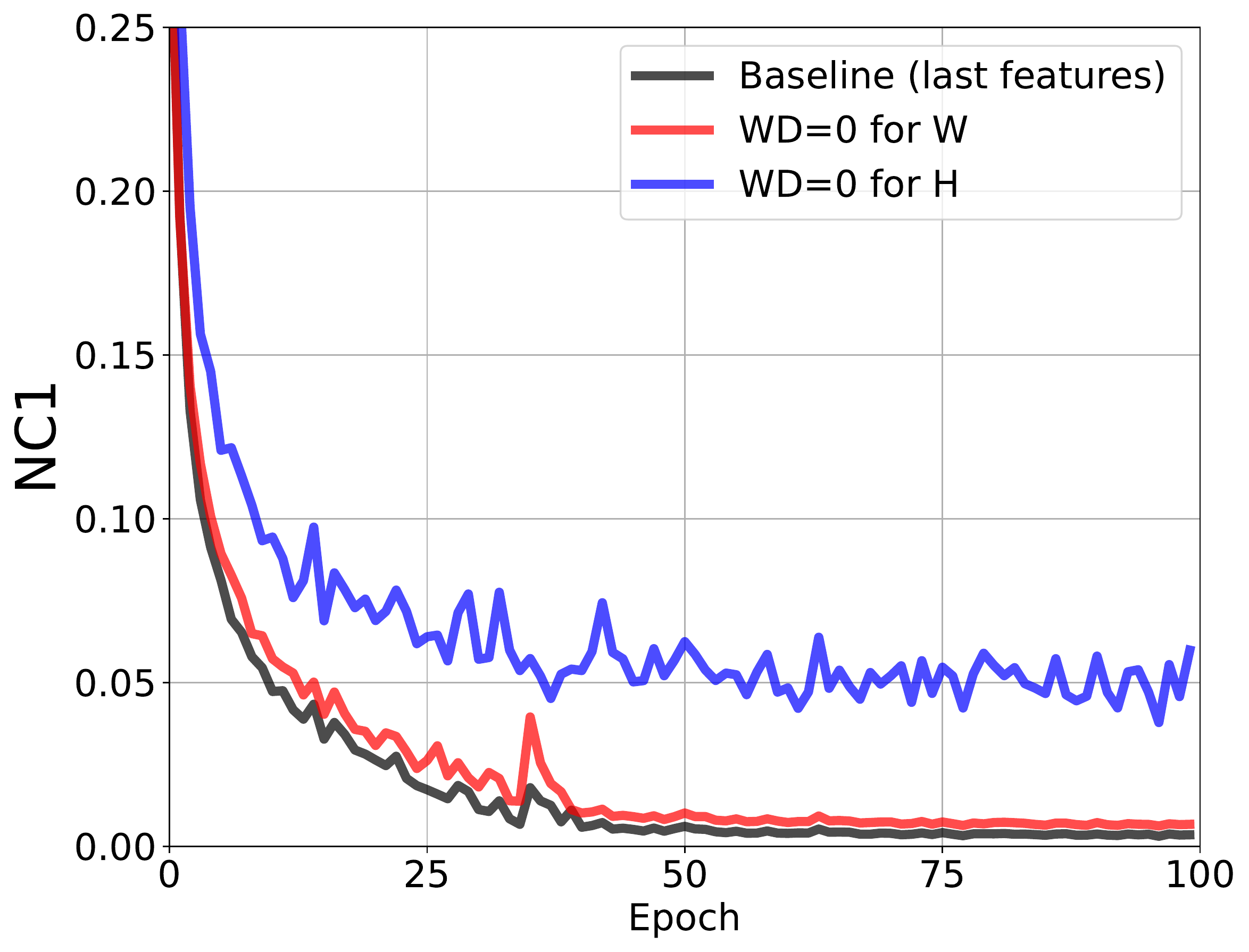} \hspace{3mm}
    \centering\includegraphics[width=96pt]{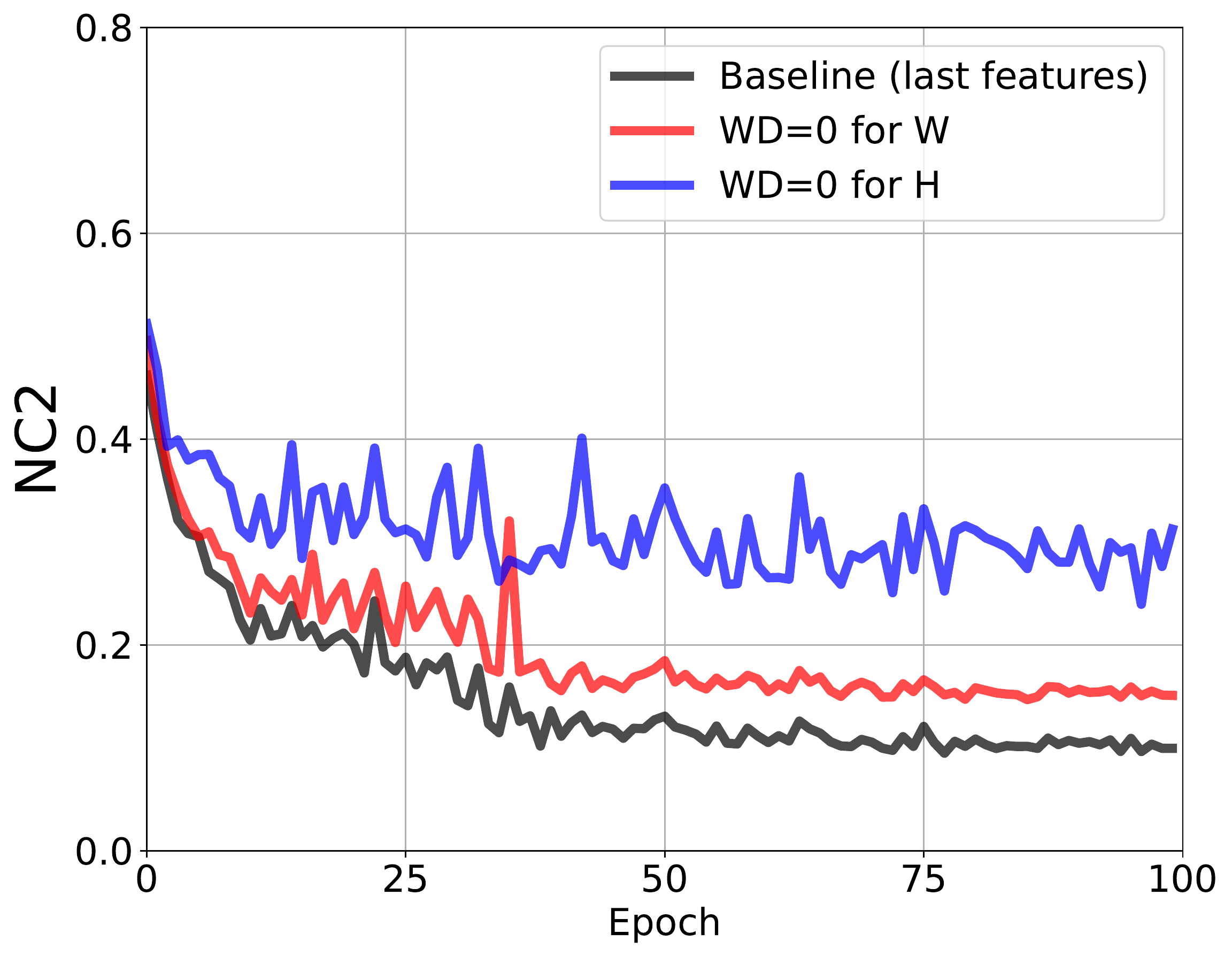} \hspace{3mm}
  \caption{MSE loss without bias. Deepest features.}
  \label{fig:mnist_3k_diff_wd_mse} 
   \end{subfigure}%
\\     
  \begin{subfigure}[b]{1\linewidth}%
    \centering\includegraphics[width=96pt]{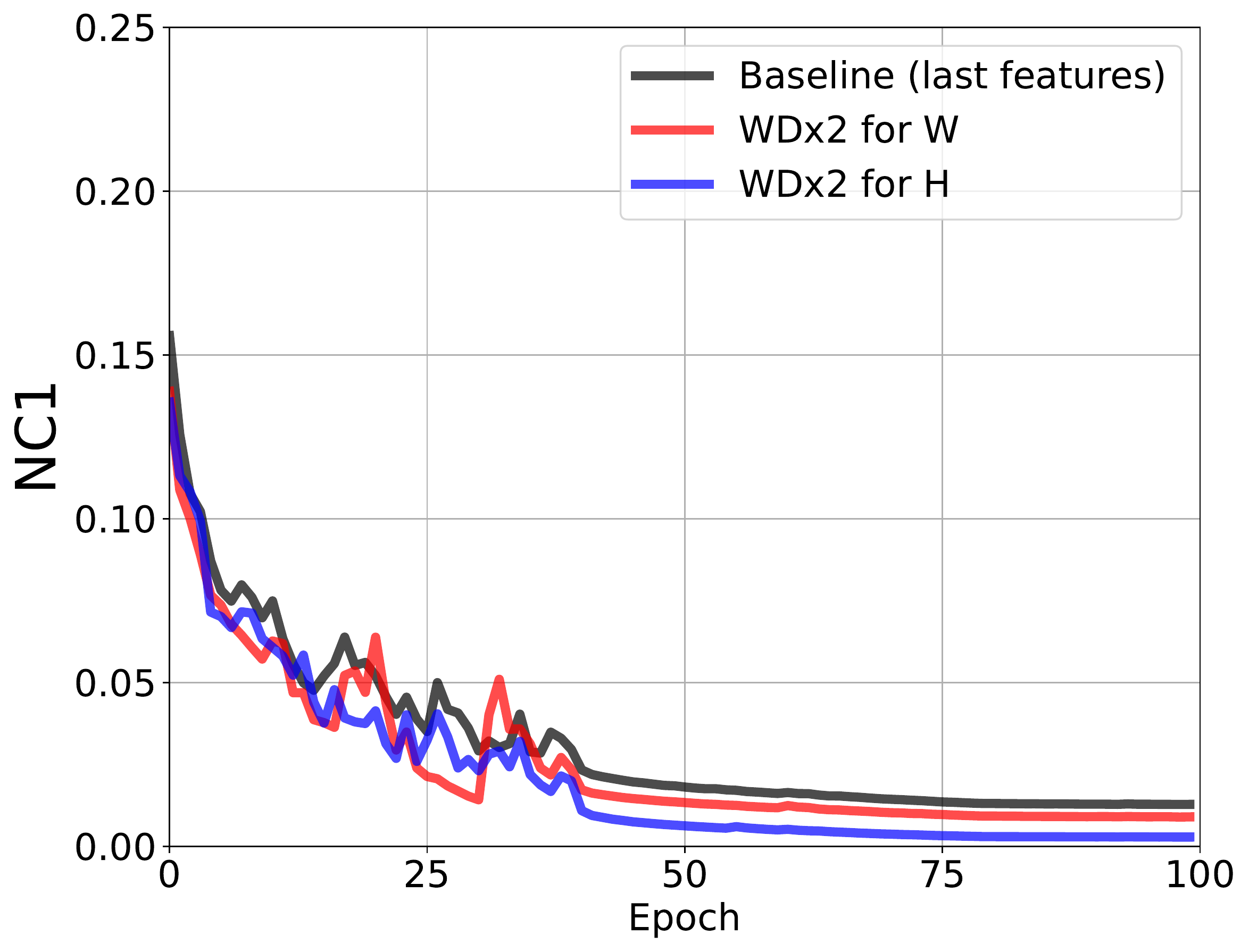} \hspace{3mm}
    \centering\includegraphics[width=96pt]{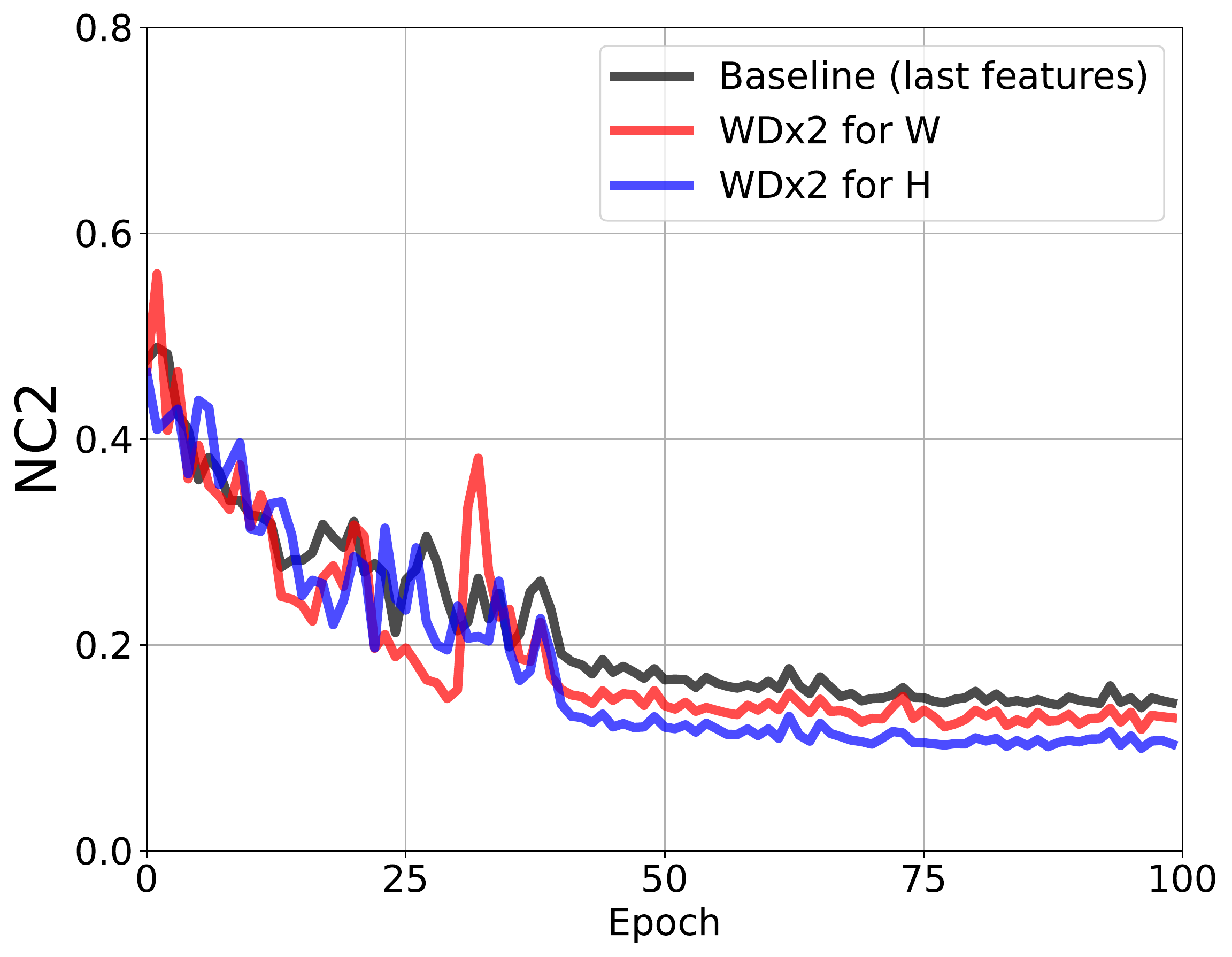} \hspace{3mm}
    \centering\includegraphics[width=96pt]{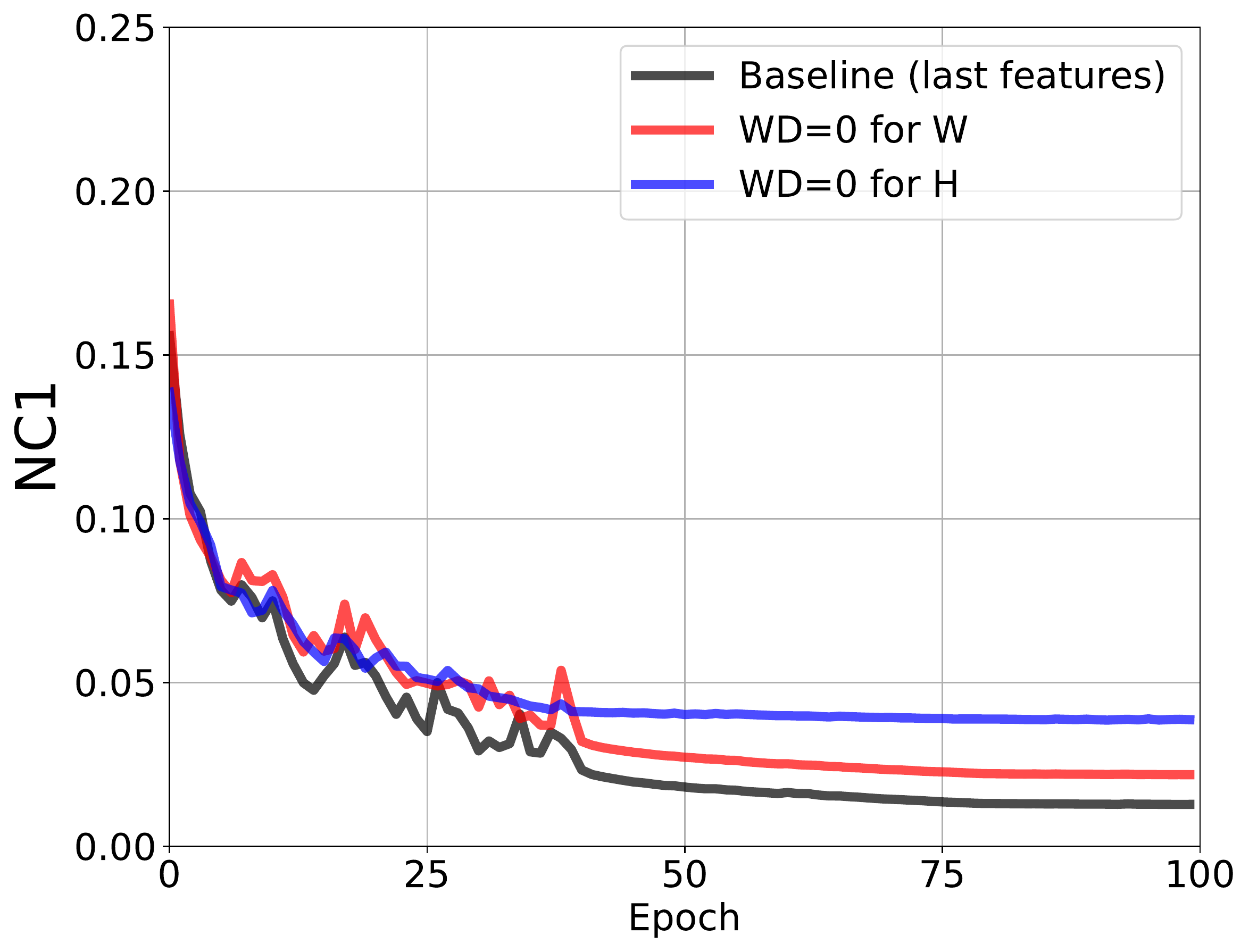} \hspace{3mm}
    \centering\includegraphics[width=96pt]{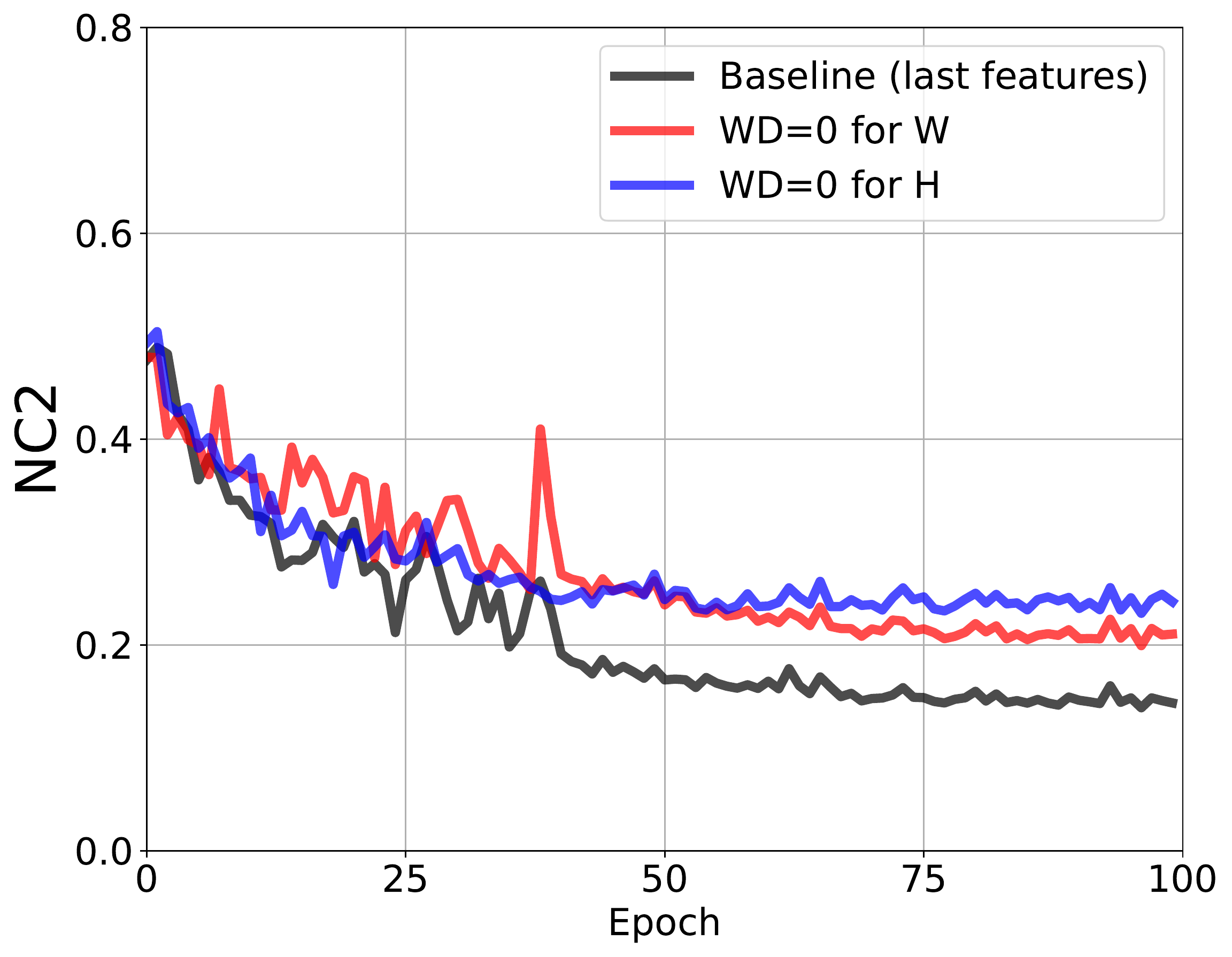} \hspace{3mm}
    \caption{CE loss with bias. Deepest features.}
   \label{fig:mnist_3k_diff_wd_ce} 
   \end{subfigure}%
\\  
  \begin{subfigure}[b]{1\linewidth}%
    \centering\includegraphics[width=96pt]{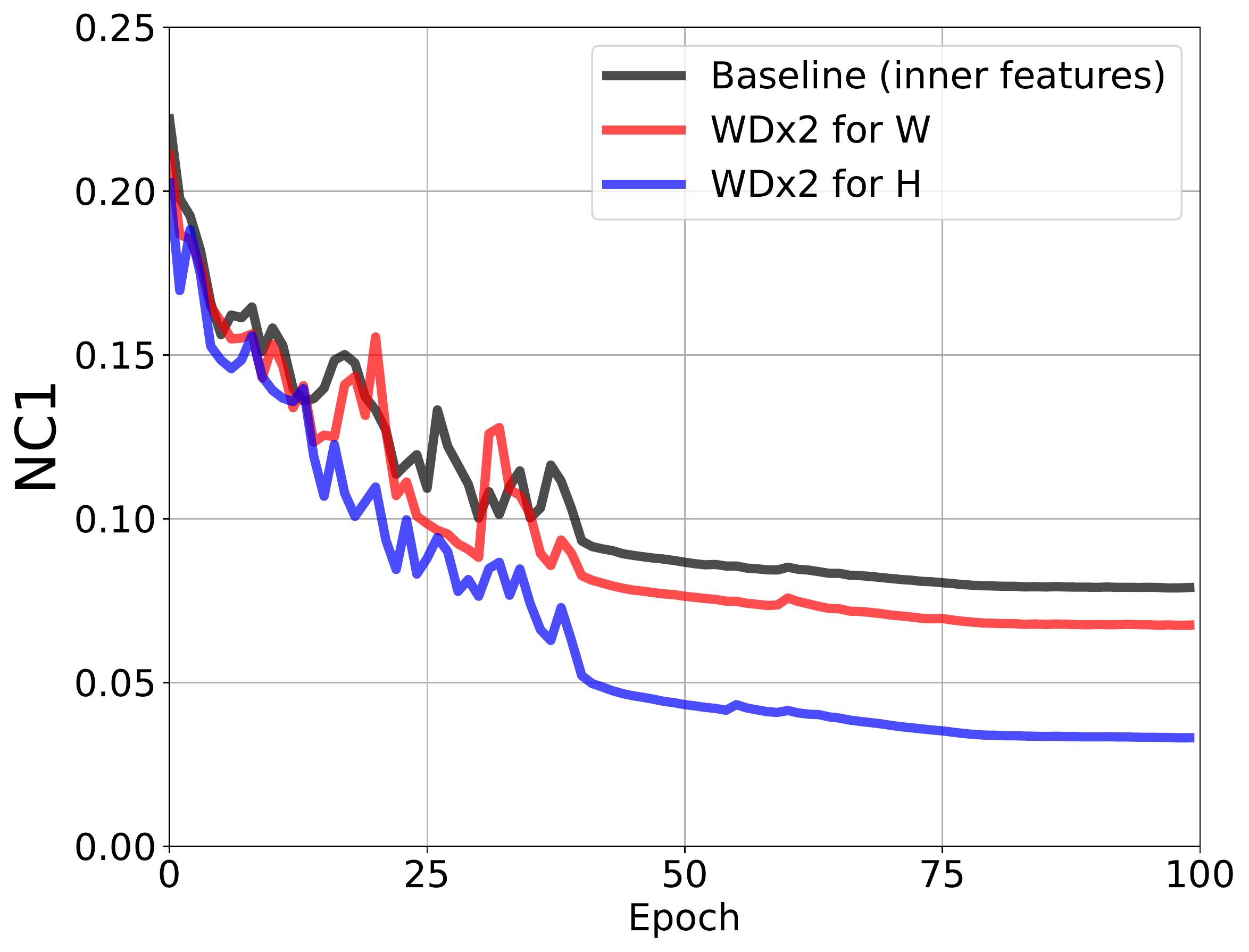} \hspace{3mm}
    \centering\includegraphics[width=96pt]{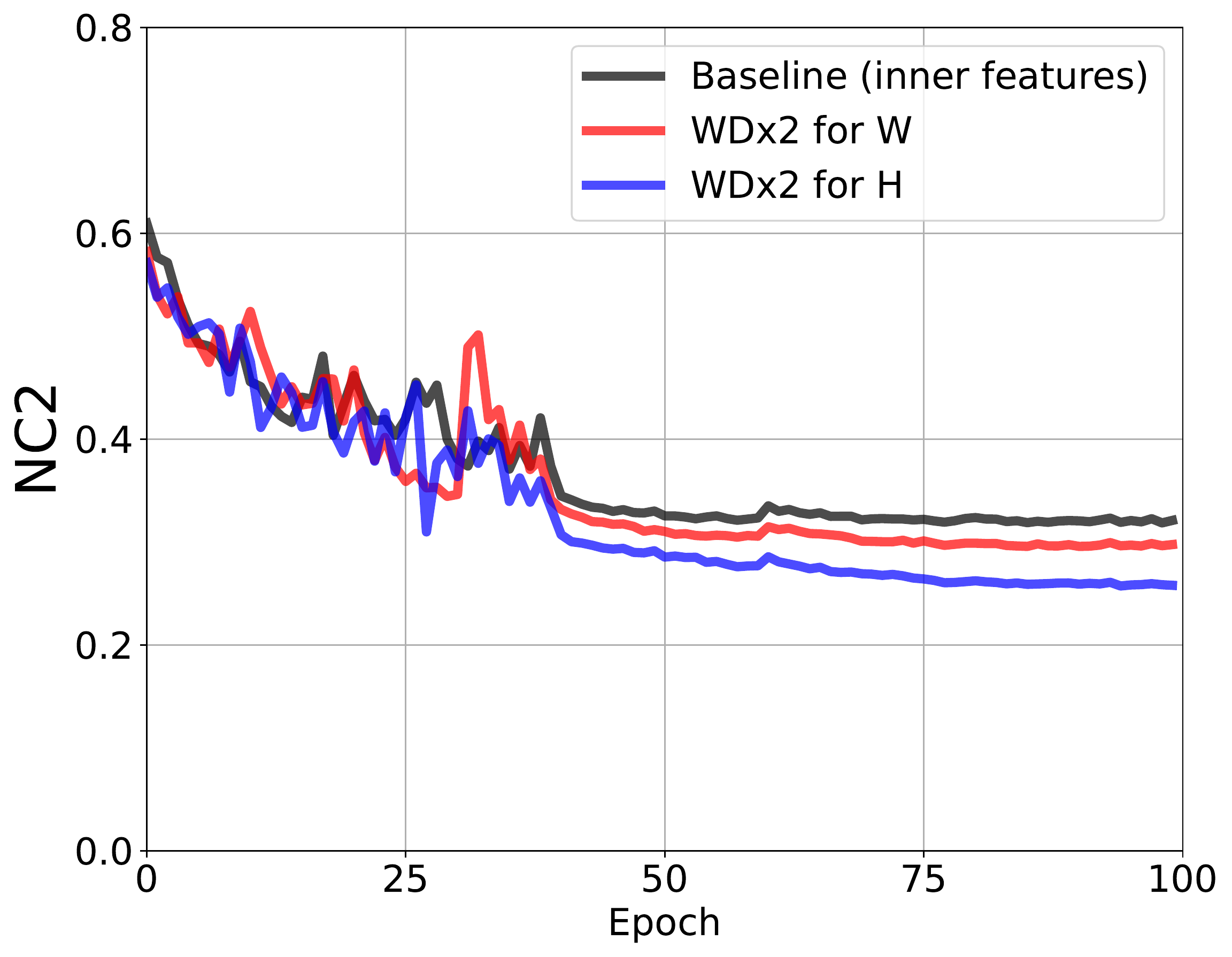} \hspace{3mm}
    \centering\includegraphics[width=96pt]{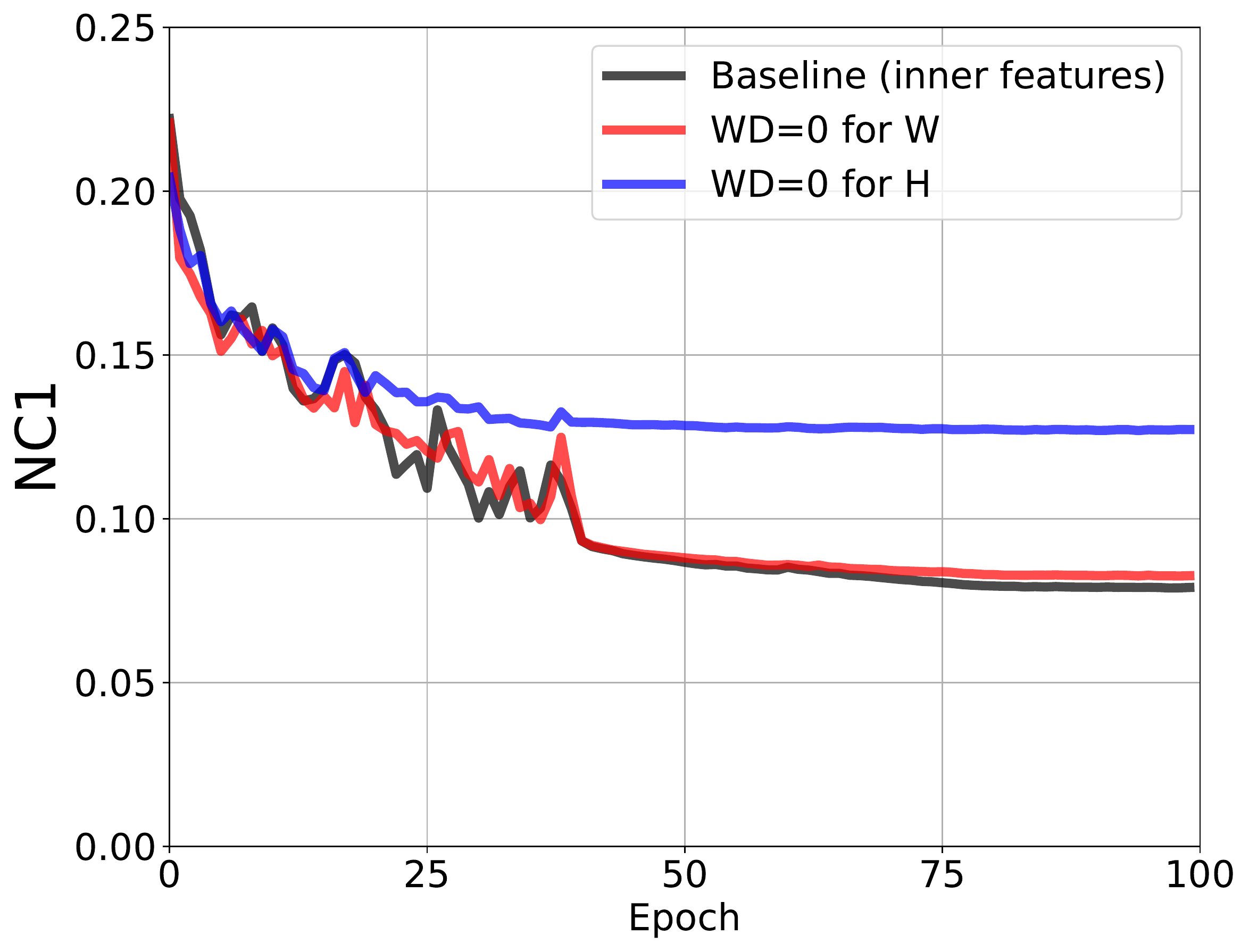} \hspace{3mm}
    \centering\includegraphics[width=96pt]{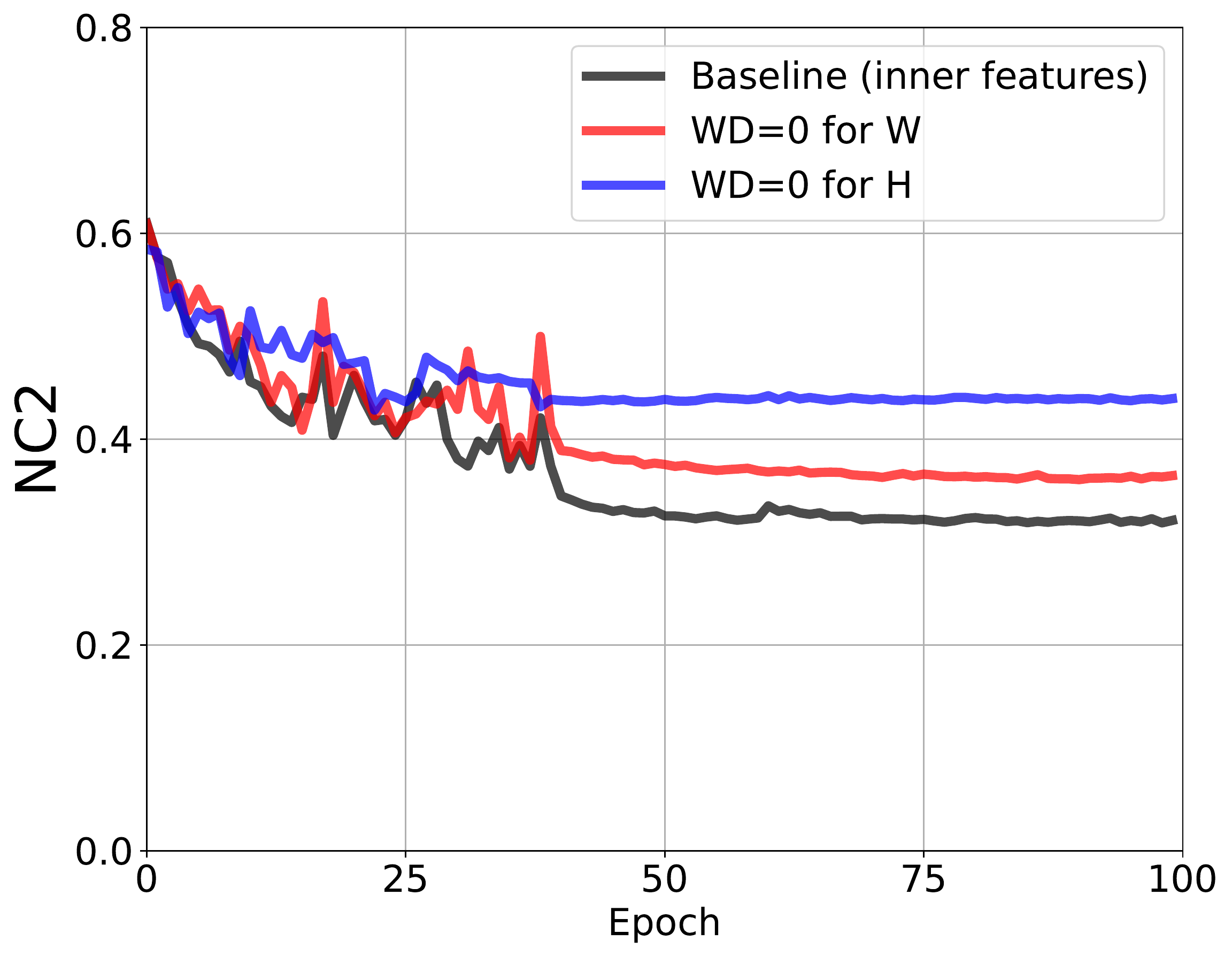} \hspace{3mm}
    \caption{CE loss with bias. Intermediate features.}
   \label{fig:mnist_3k_diff_wd_ce_inner}
   \end{subfigure}%
    \caption{
    The effect of modifying the weight decay (WD) on NC metrics for ResNet18 trained on MNIST with 3K samples per class.
    Observe that modifying the WD in the feature mapping increases the deviation from the baseline more than modifying the WD of the last layer.}
    \label{fig:mnist_3k_diff_wd} 

\vspace{3mm}

    \centering
  \begin{subfigure}[b]{1\linewidth}%
    \centering\includegraphics[width=96pt]{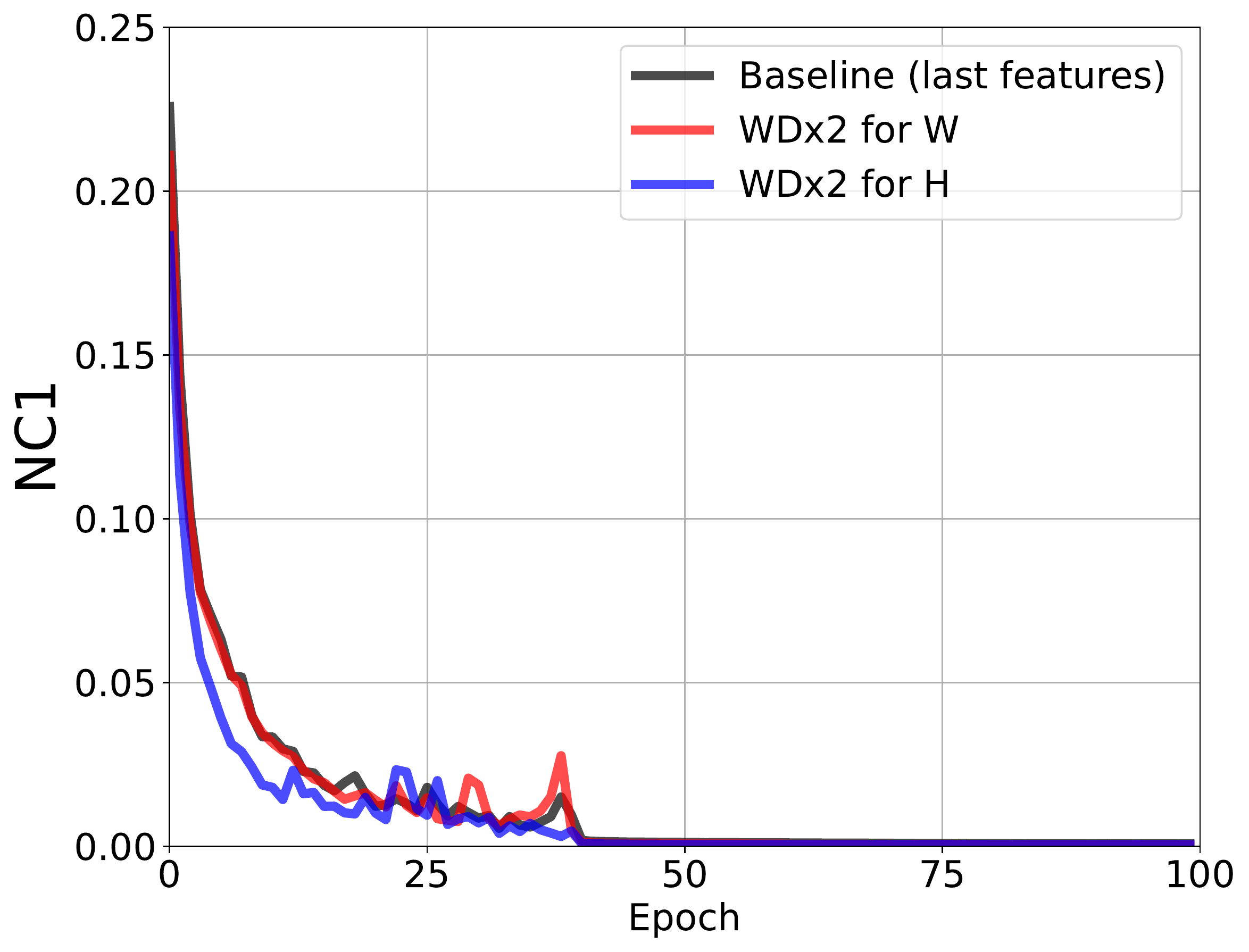} \hspace{3mm}
    \centering\includegraphics[width=96pt]{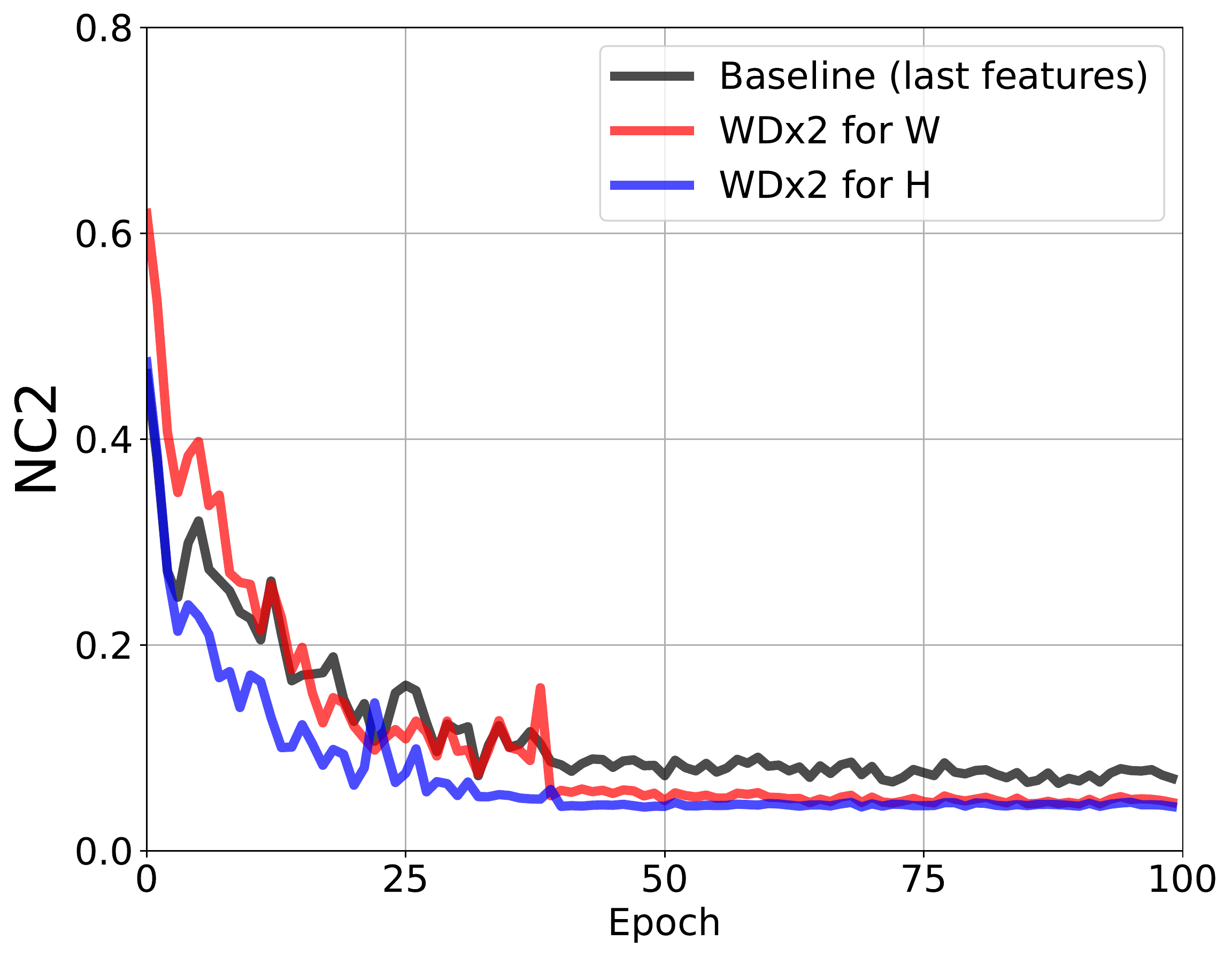} \hspace{3mm}
    \centering\includegraphics[width=96pt]{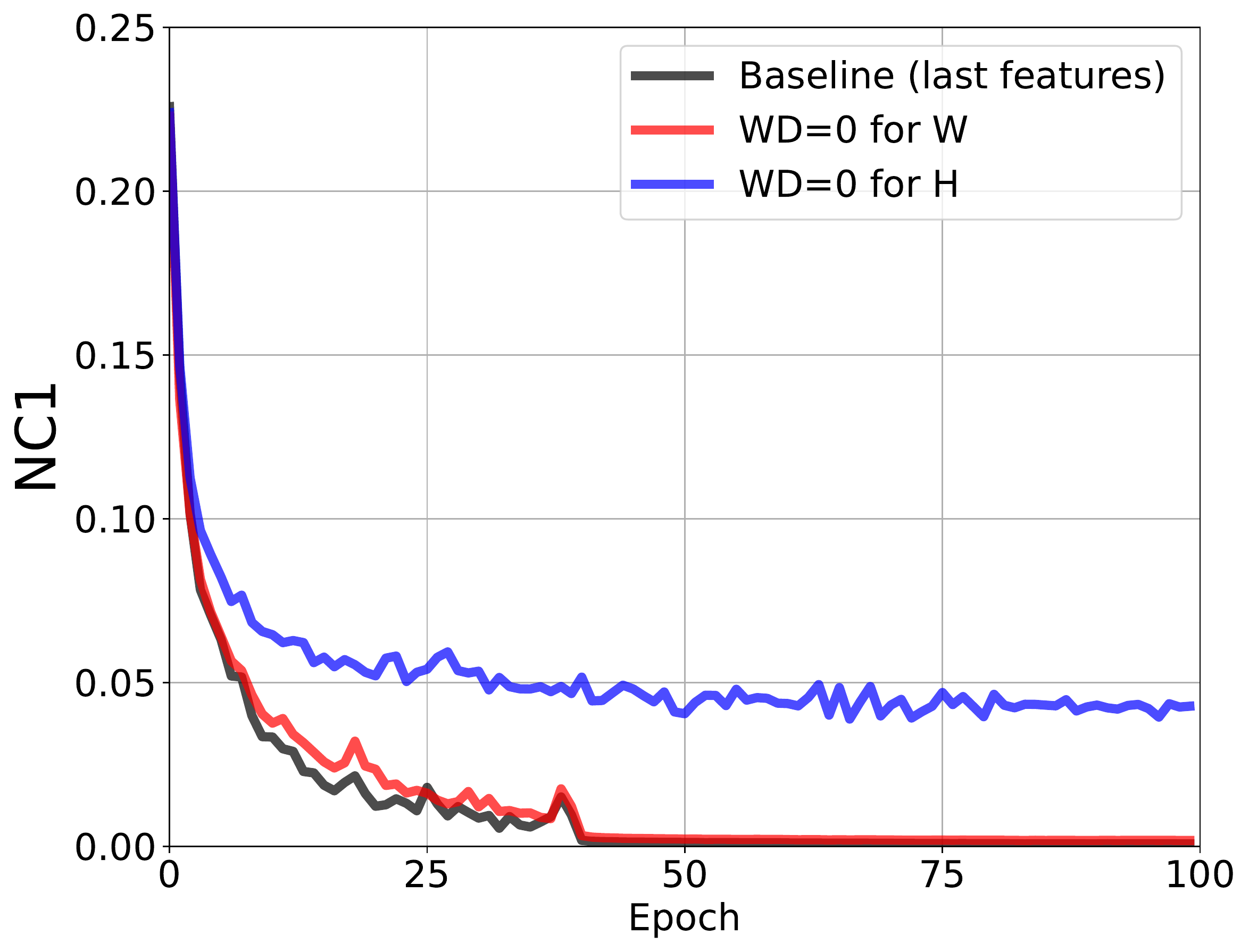} \hspace{3mm}
    \centering\includegraphics[width=96pt]{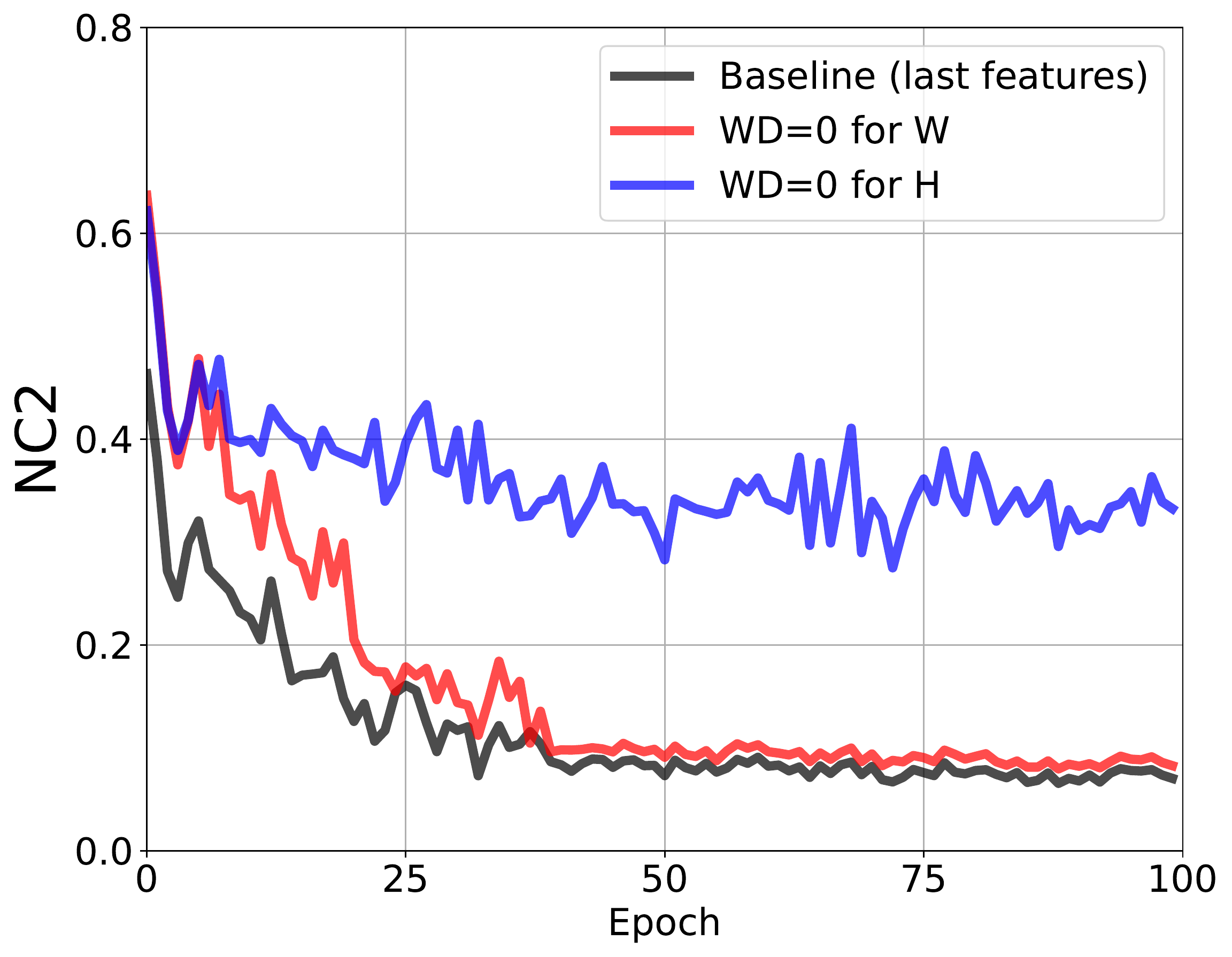} \hspace{3mm}
  \caption{MSE loss without bias. Deepest features.}
  \label{fig:mnist_5k_diff_wd_mse} 
   \end{subfigure}%
\\     
  \begin{subfigure}[b]{1\linewidth}%
    \centering\includegraphics[width=96pt]{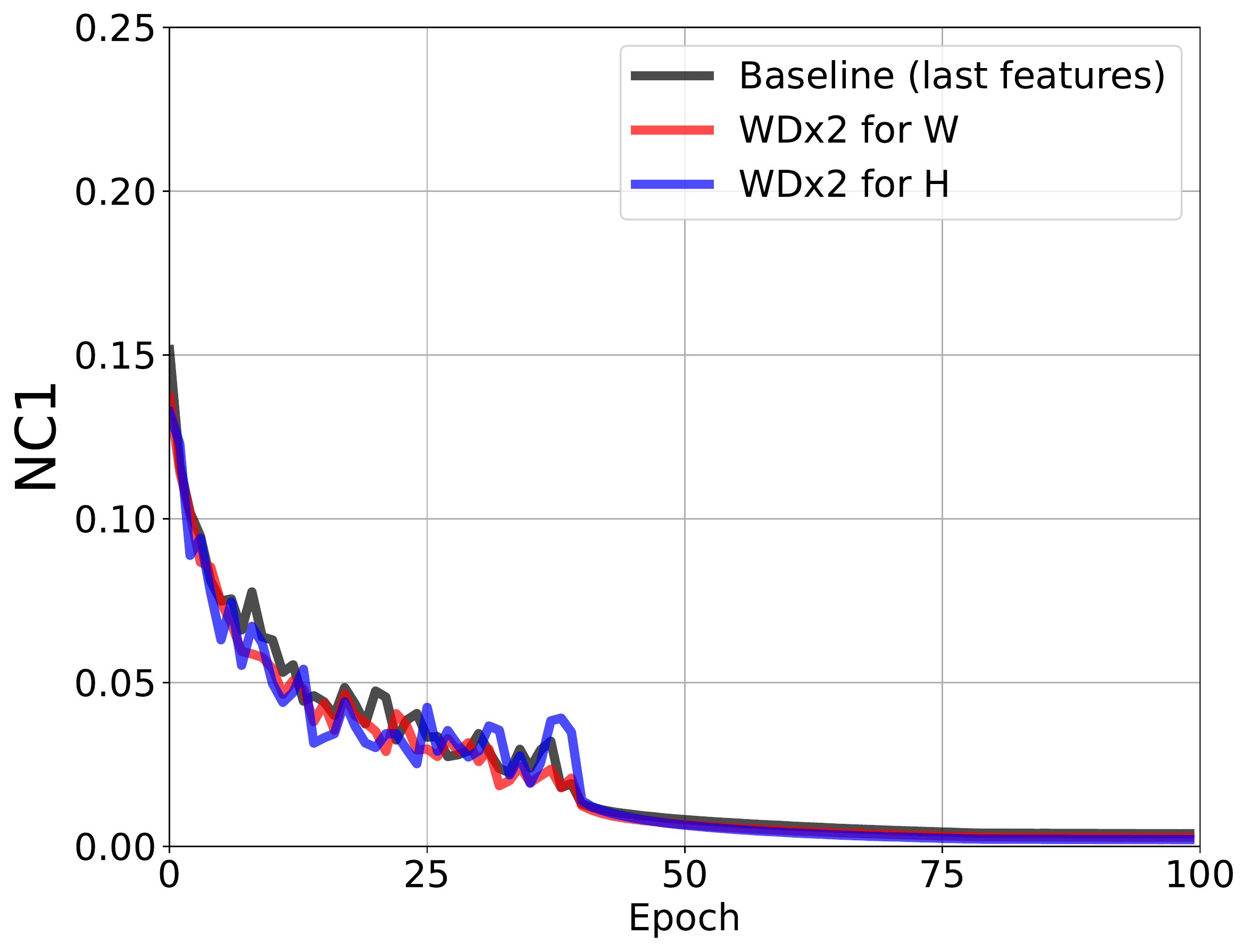} \hspace{3mm}
    \centering\includegraphics[width=96pt]{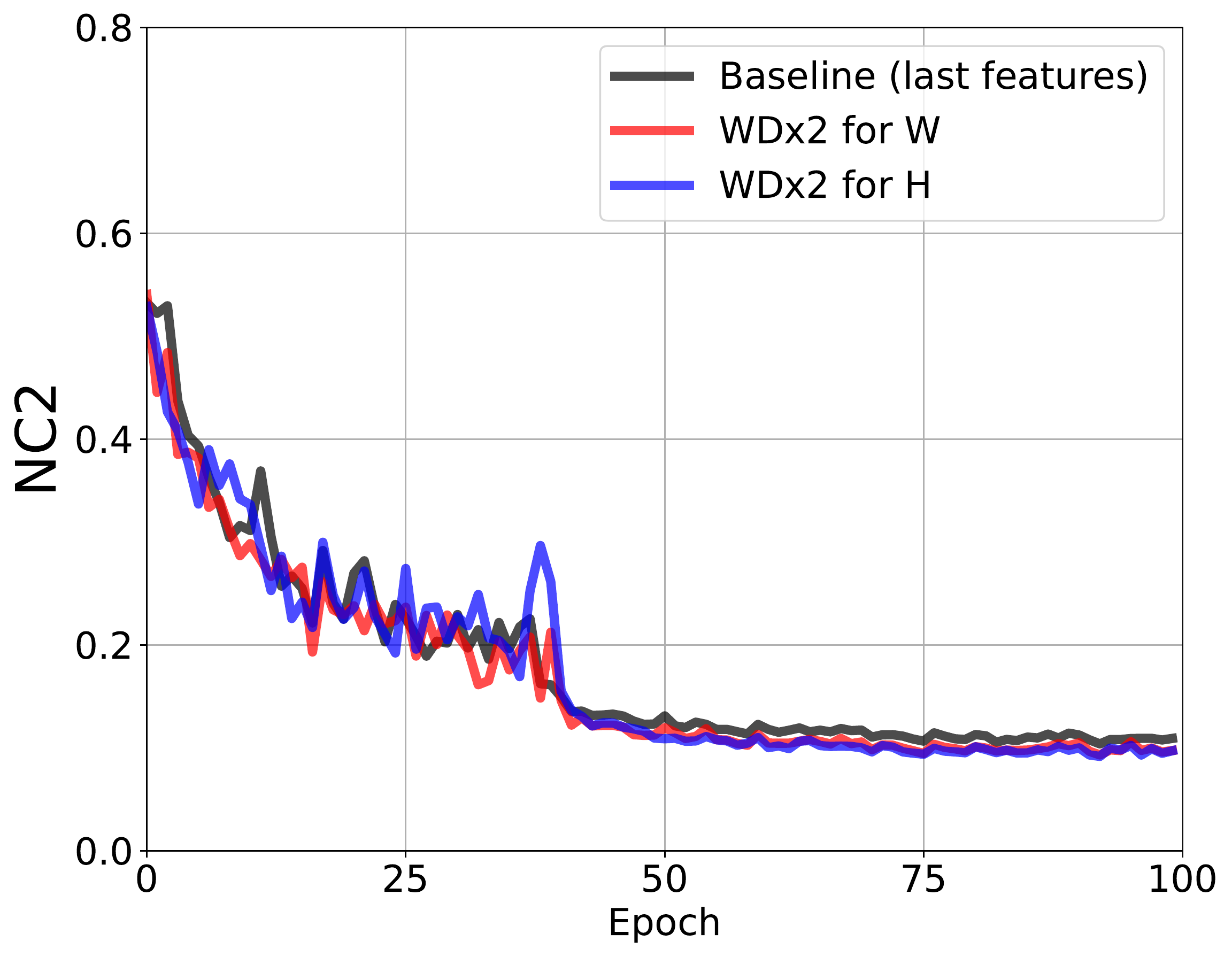} \hspace{3mm}
    \centering\includegraphics[width=96pt]{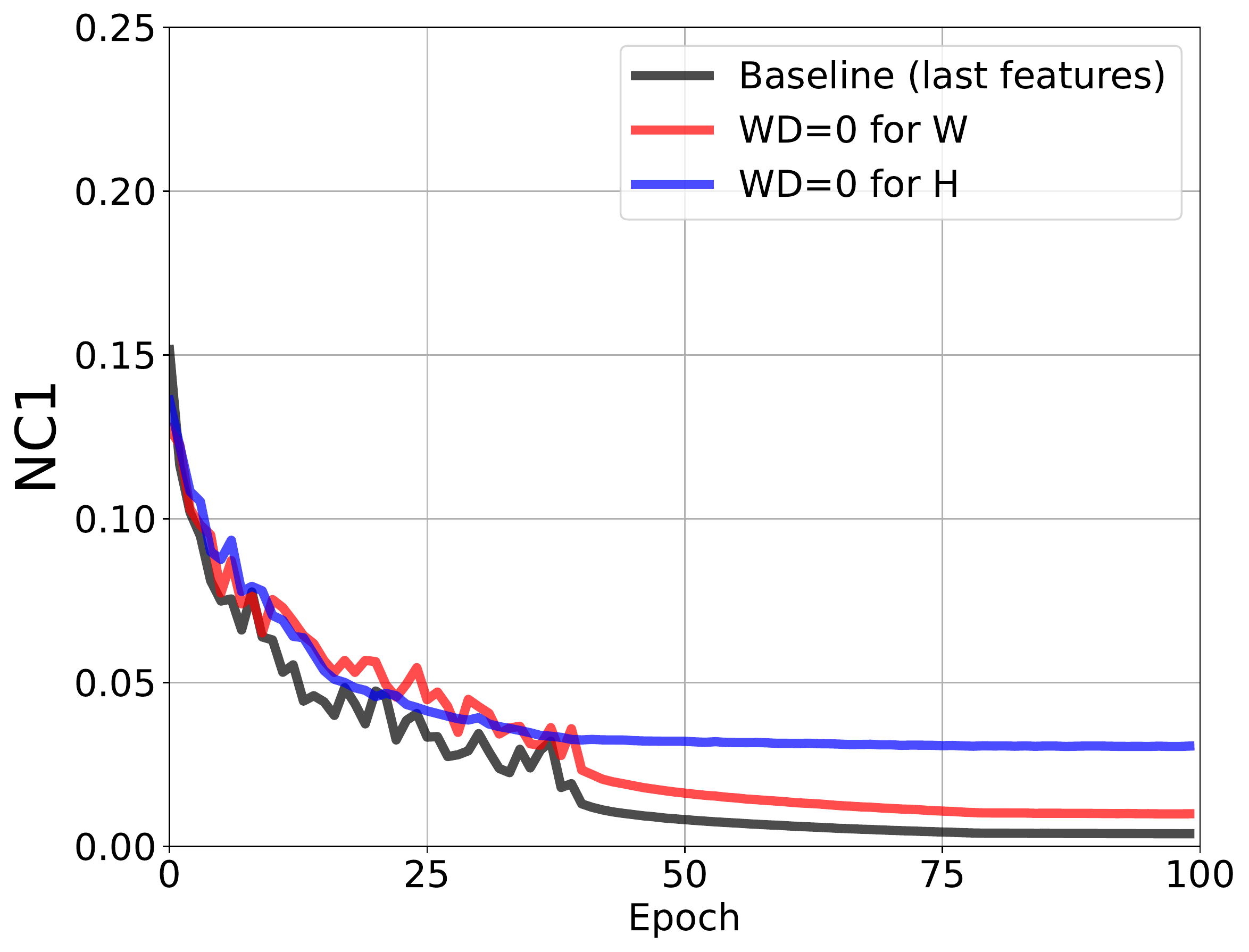} \hspace{3mm}
    \centering\includegraphics[width=96pt]{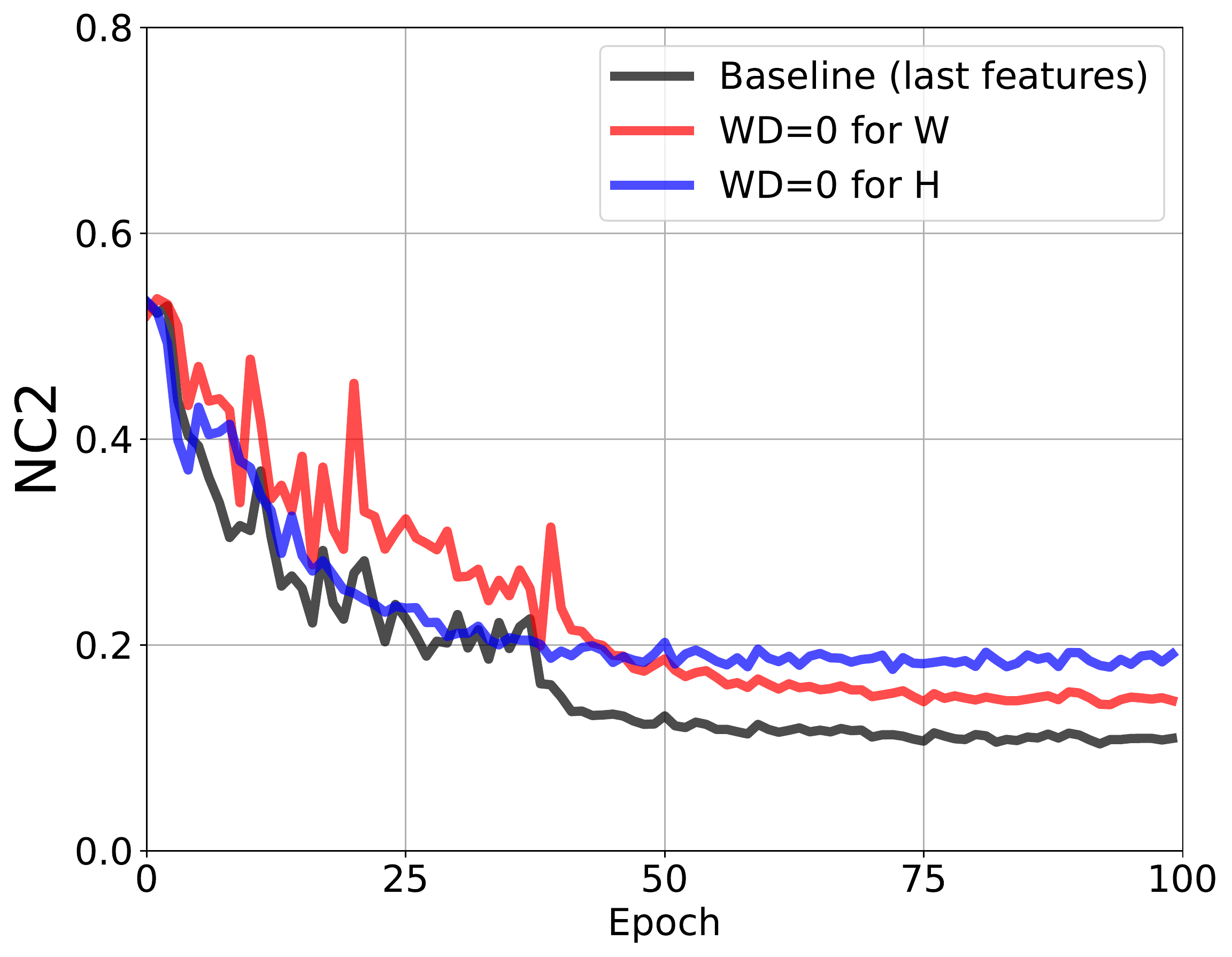} \hspace{3mm}
    \caption{CE loss with bias. Deepest features.}
   \label{fig:mnist_5k_diff_wd_ce} 
   \end{subfigure}%
\\  
  \begin{subfigure}[b]{1\linewidth}%
    \centering\includegraphics[width=96pt]{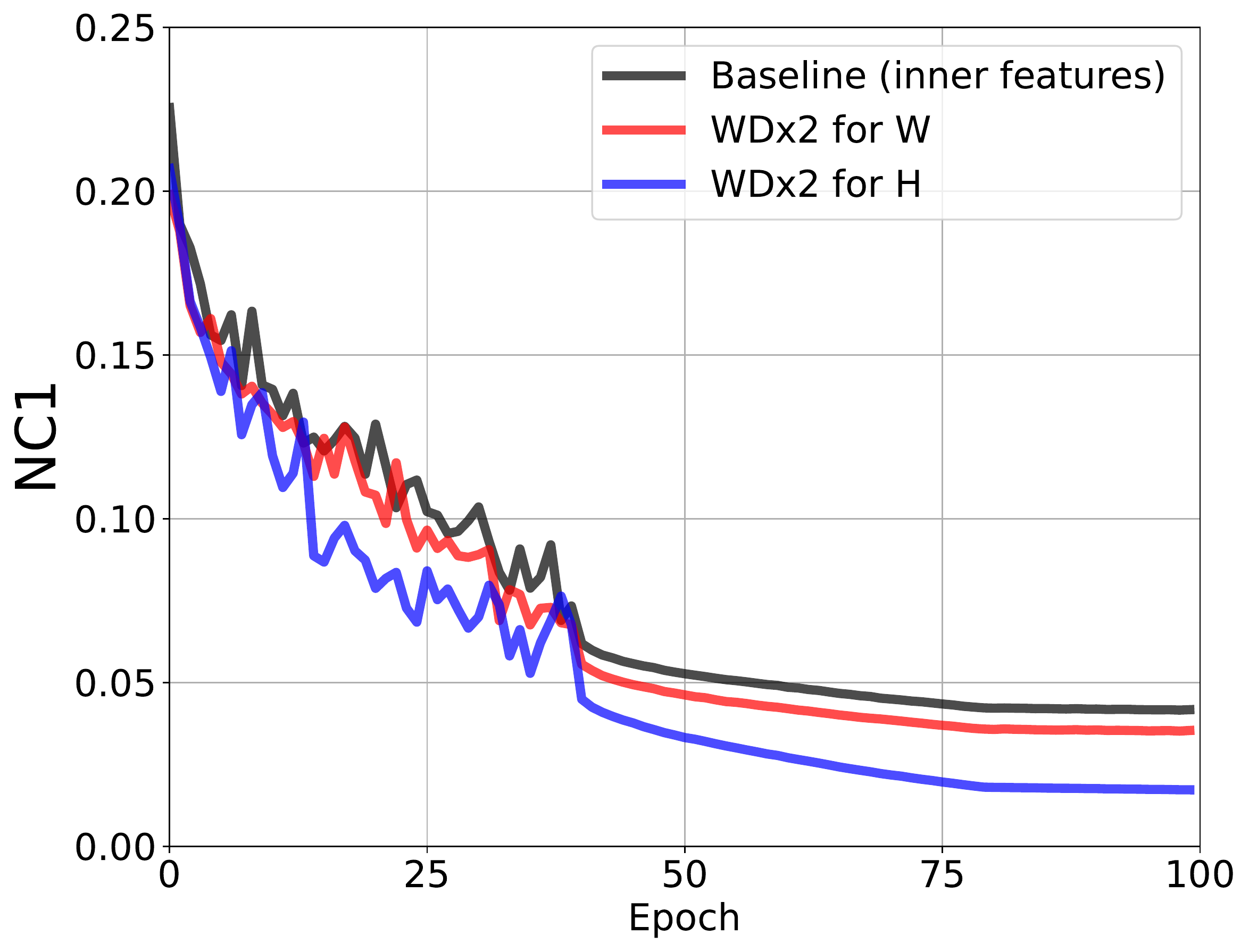} \hspace{3mm}
    \centering\includegraphics[width=96pt]{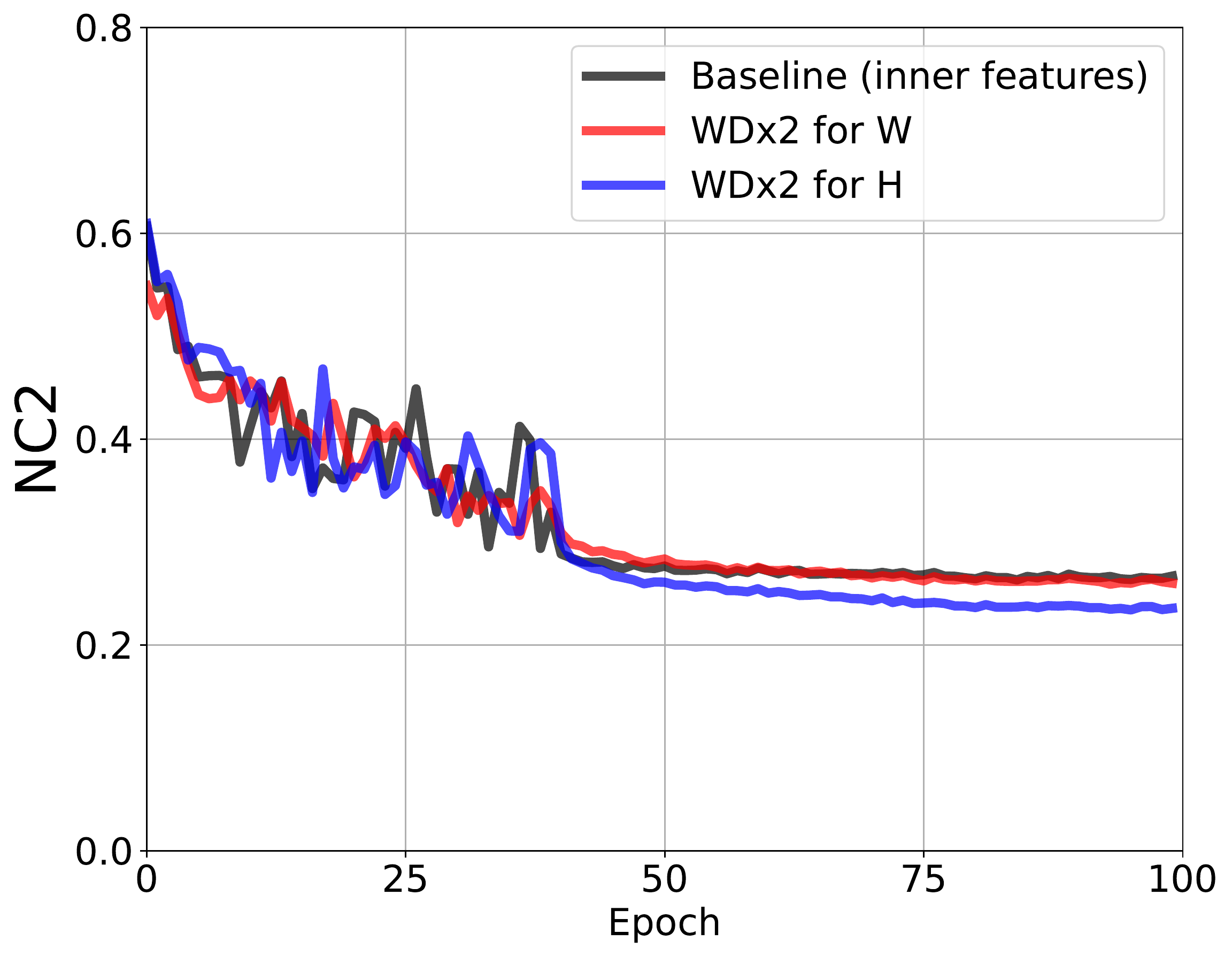} \hspace{3mm}
    \centering\includegraphics[width=96pt]{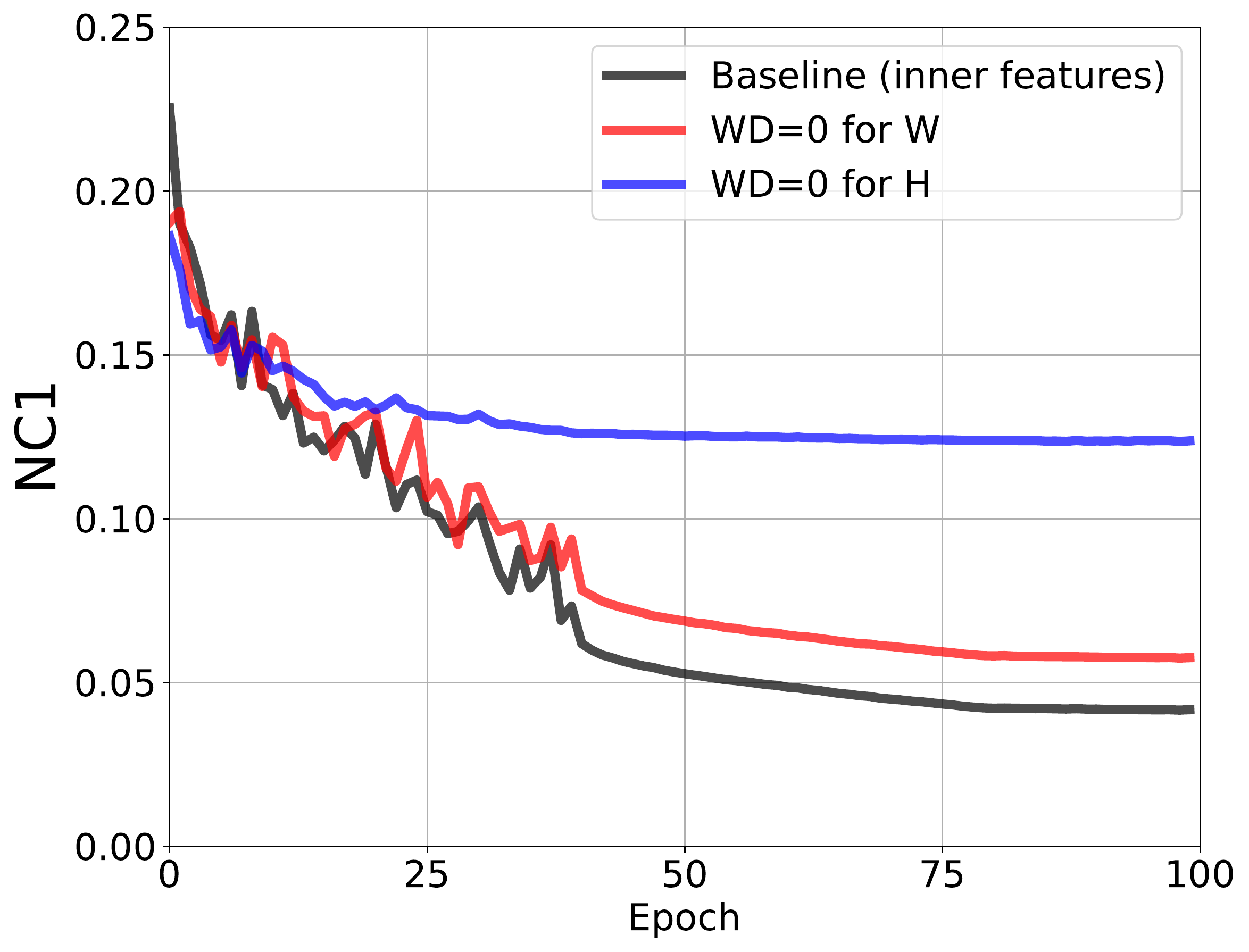} \hspace{3mm}
    \centering\includegraphics[width=96pt]{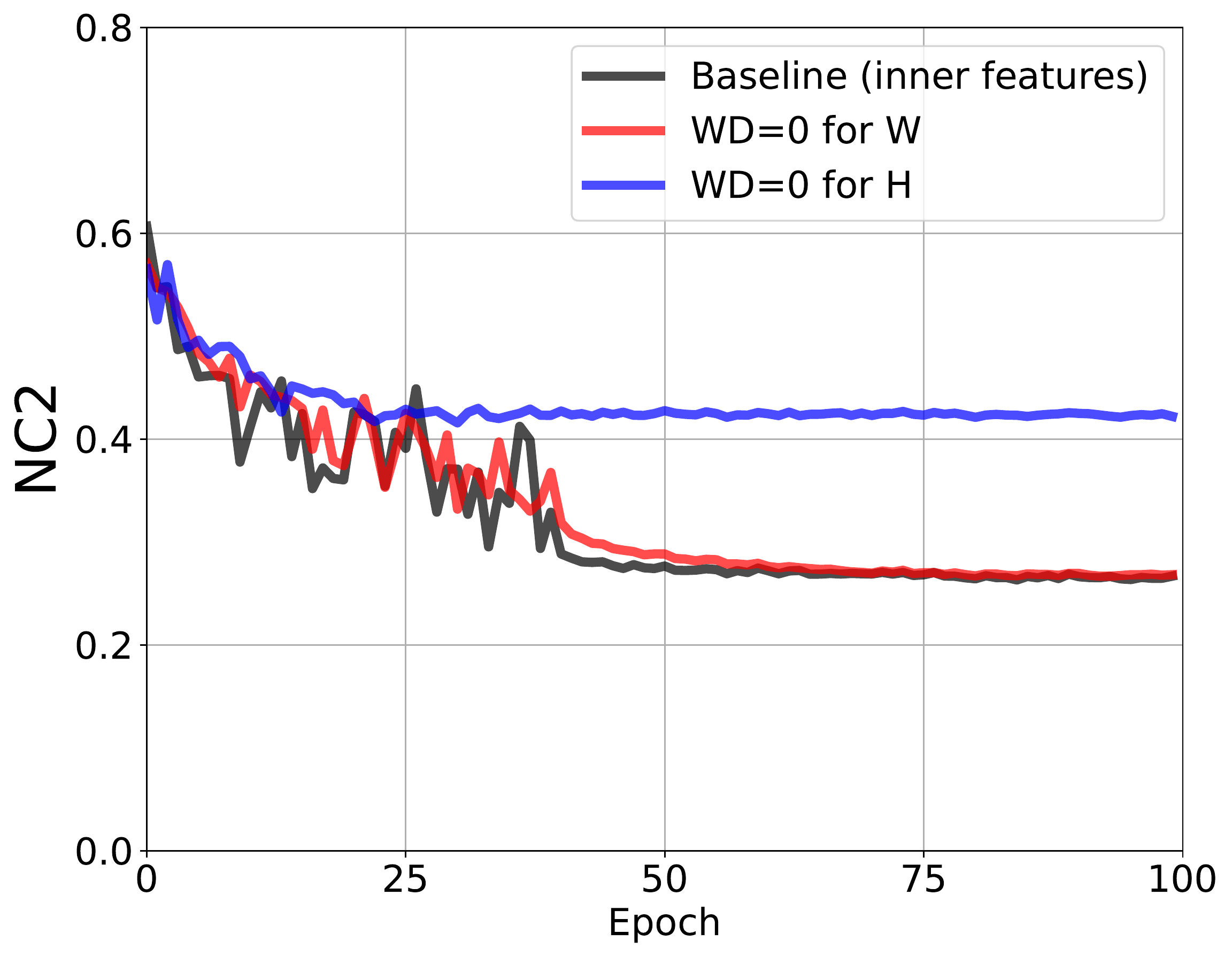} \hspace{3mm}
    \caption{CE loss with bias. Intermediate features.}
   \label{fig:mnist_5k_diff_wd_ce_inner}
   \end{subfigure}%
    \caption{
    The effect of modifying the weight decay (WD) on NC metrics for ResNet18 trained on MNIST with 5K samples per class.
    Observe that modifying the WD in the feature mapping increases the deviation from the baseline more than modifying the WD of the last layer.}
    \label{fig:mnist_5k_diff_wd} 
\end{figure}

\begin{figure}[t]
    \centering
  \begin{subfigure}[b]{1\linewidth}%
    \centering\includegraphics[width=96pt]{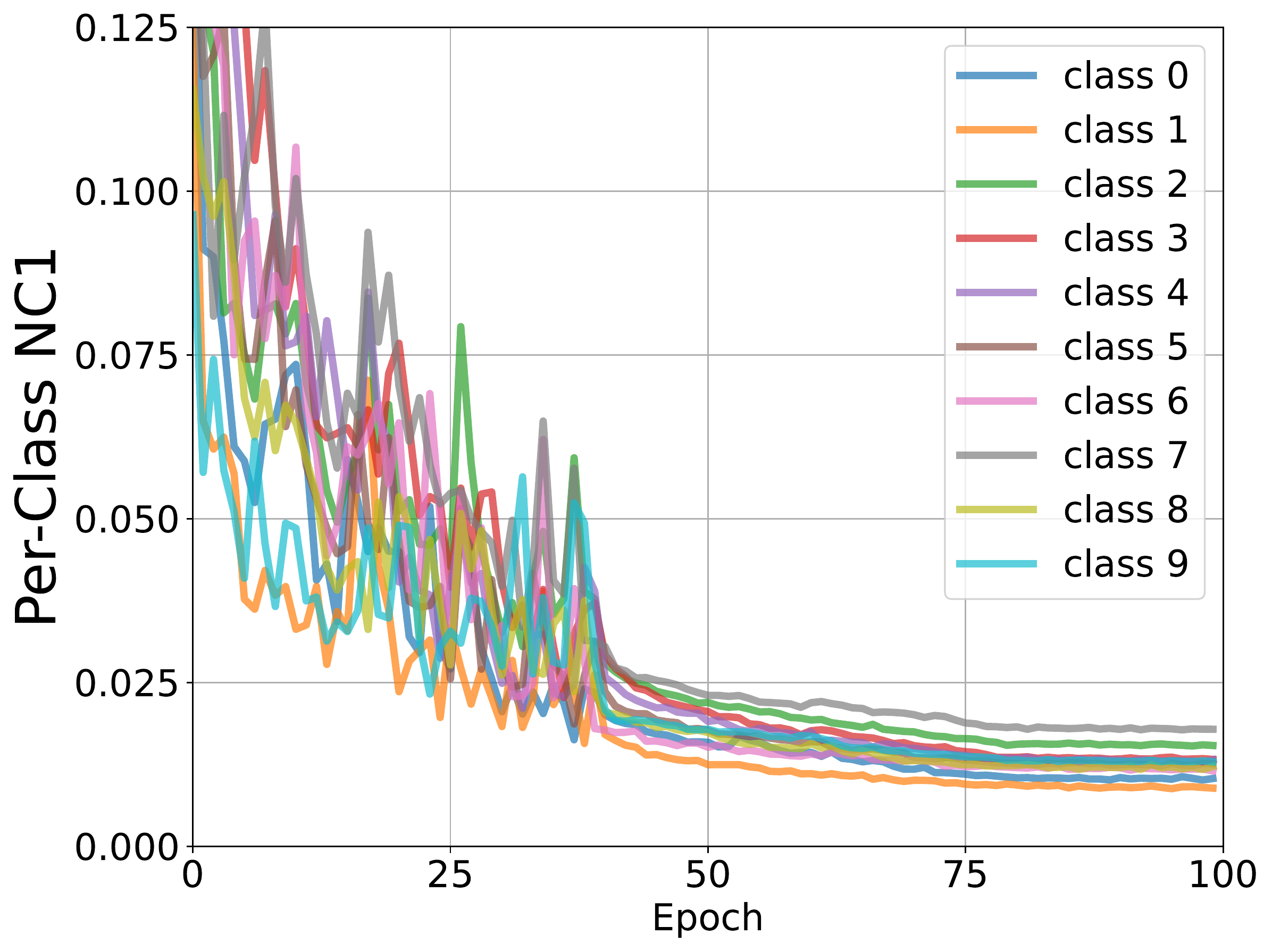} \hspace{3mm}
    \centering\includegraphics[width=96pt]{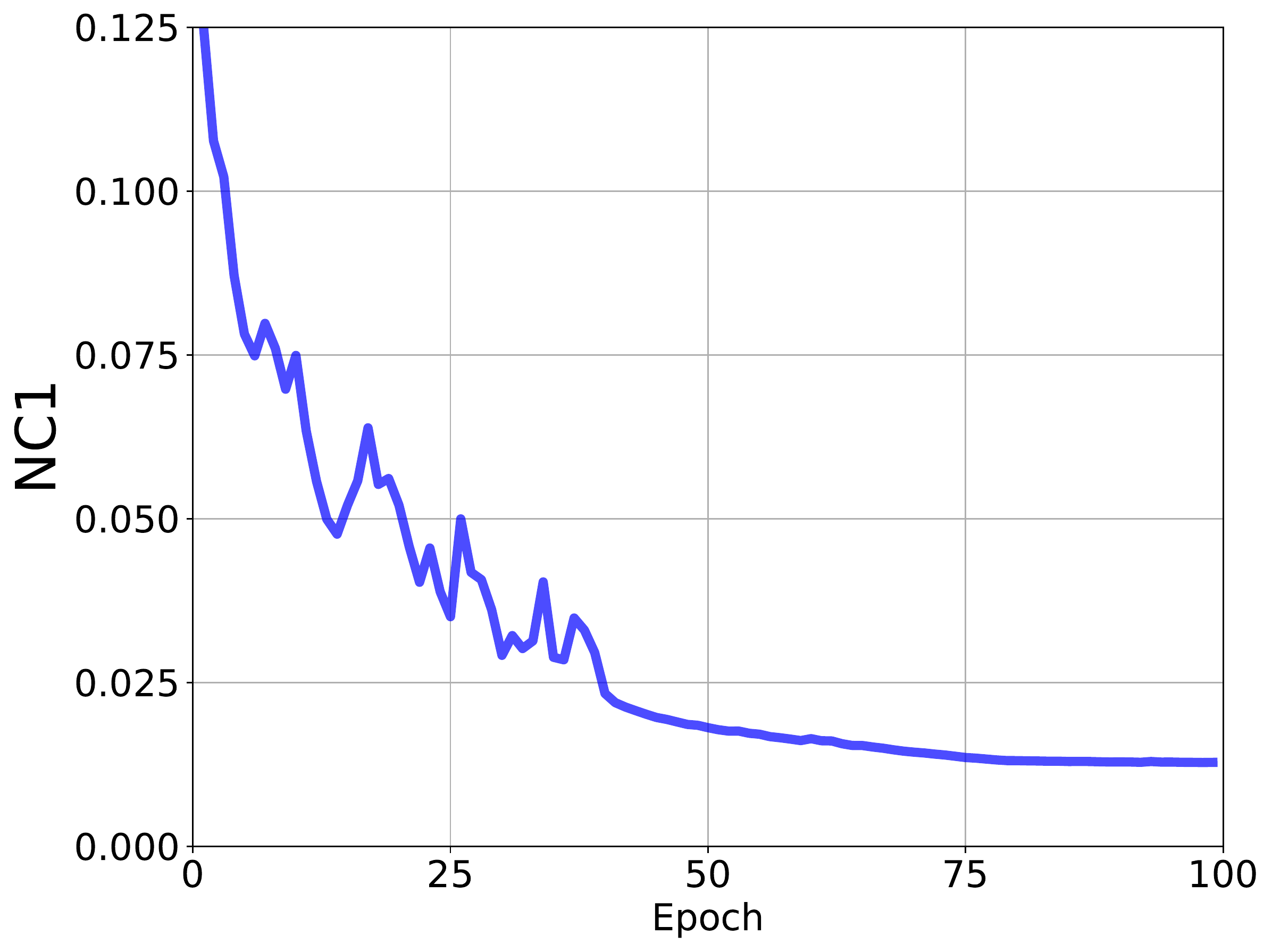} \hspace{3mm}
    \centering\includegraphics[width=96pt]{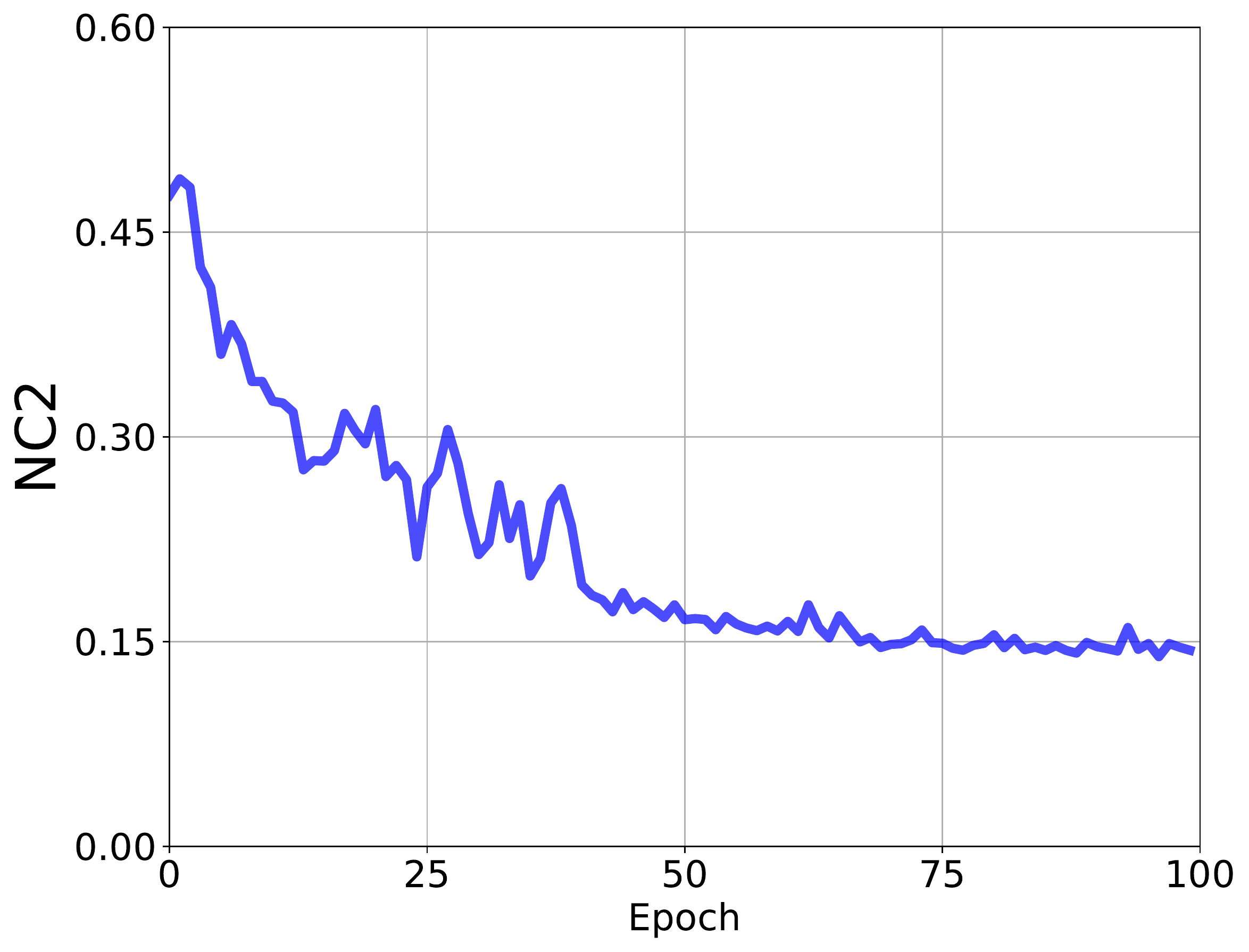} \hspace{3mm}
    \centering\includegraphics[width=96pt]{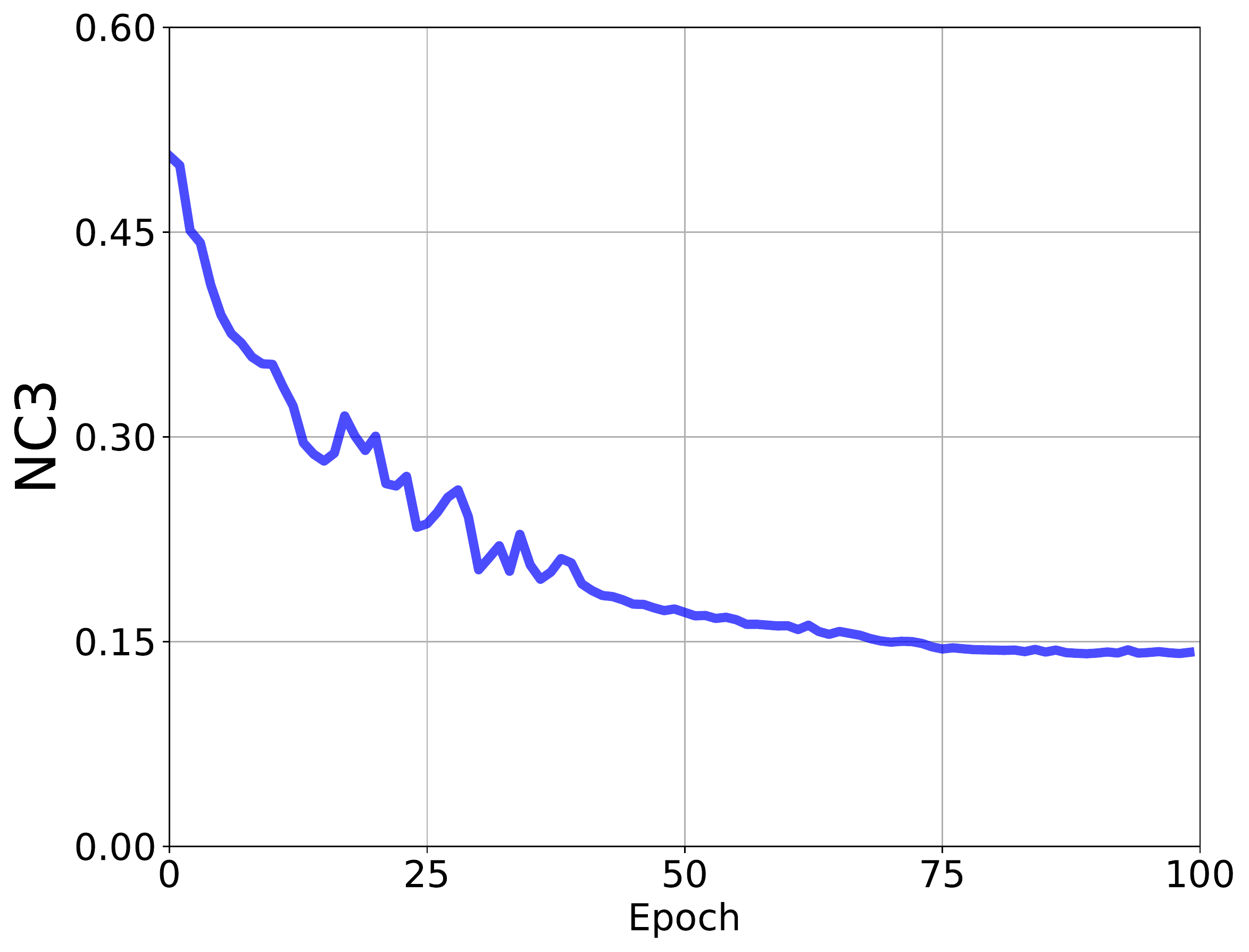} \hspace{3mm}
   \caption{Original samples, WD 5e-4 across layers.}
   \label{fig:mnist_3k_blurred_diff_wd_W__a}
   \end{subfigure}%
\vspace{1mm}   
\\  
  \begin{subfigure}[b]{1\linewidth} %
    \centering\includegraphics[width=96pt]{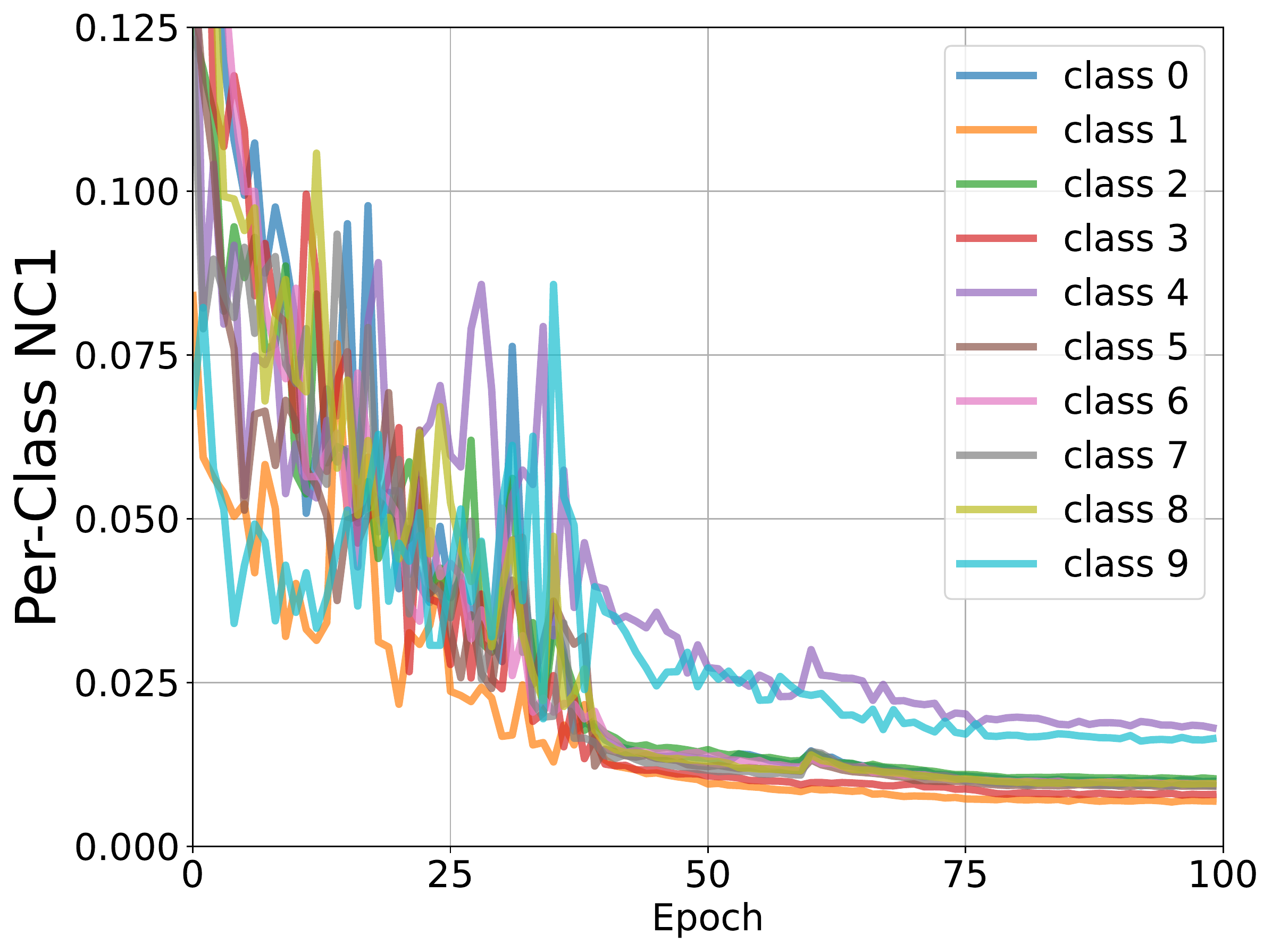} \hspace{3mm}
    \centering\includegraphics[width=96pt]{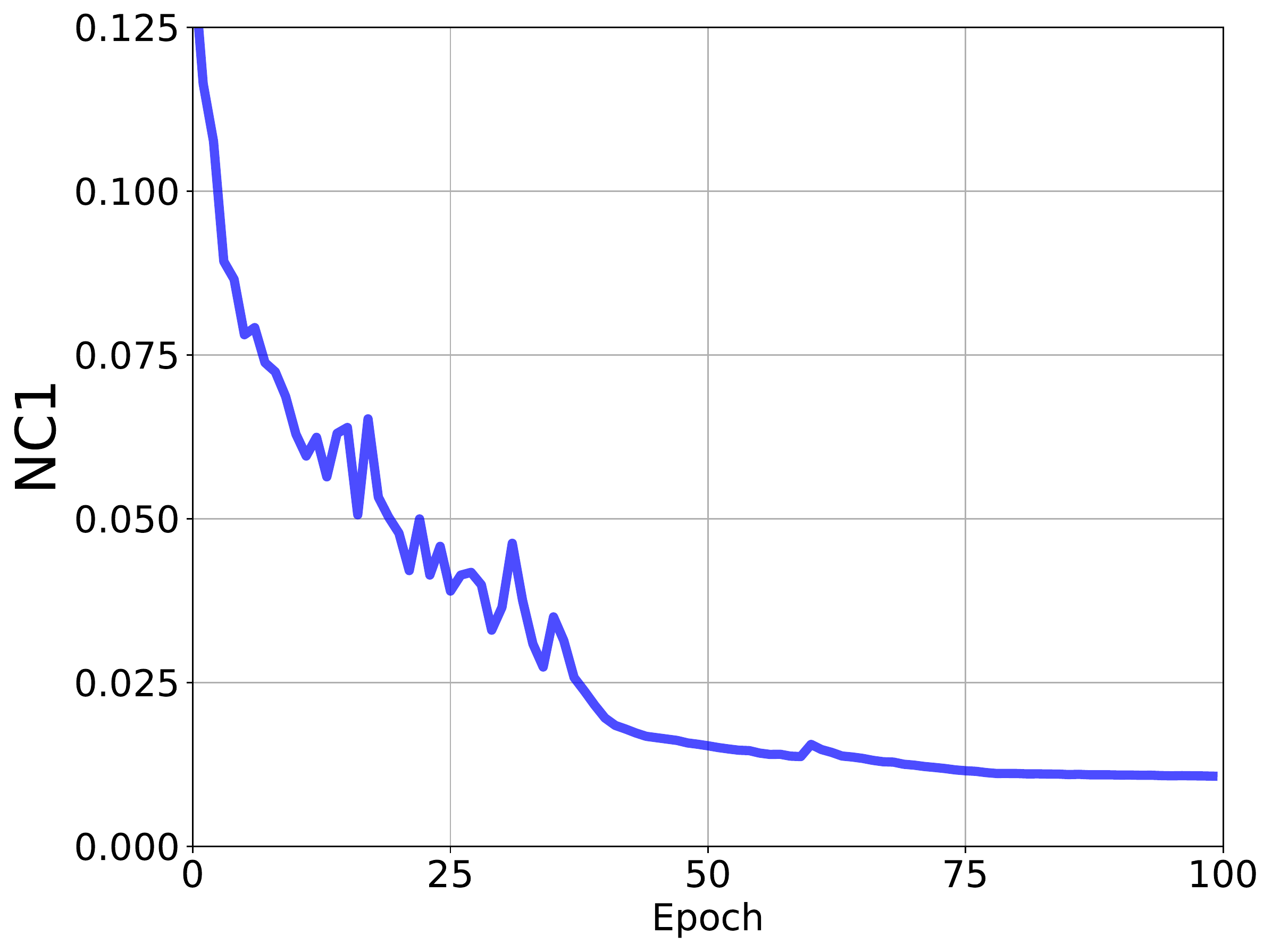} \hspace{3mm}
    \centering\includegraphics[width=96pt]{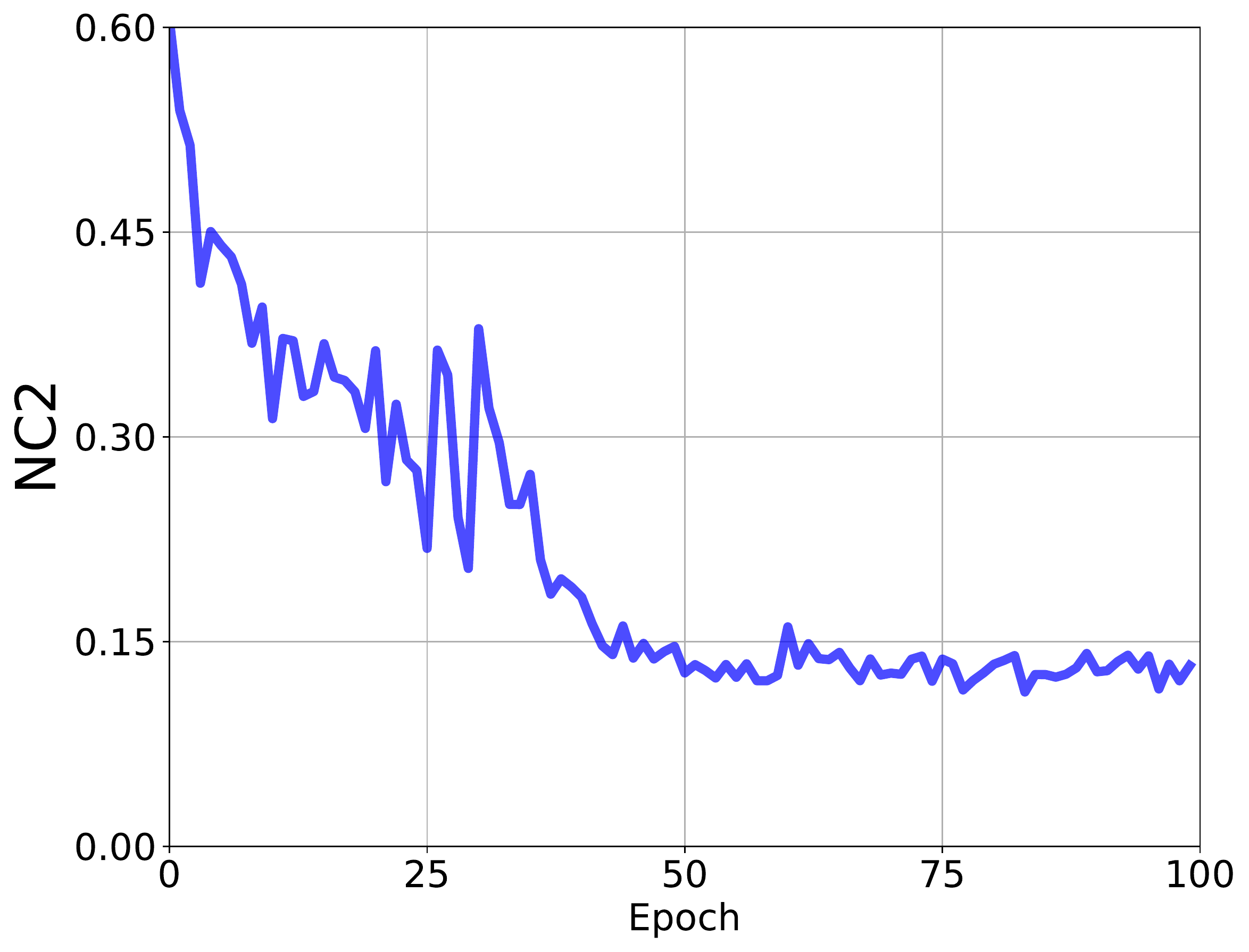} \hspace{3mm}
    \centering\includegraphics[width=96pt]{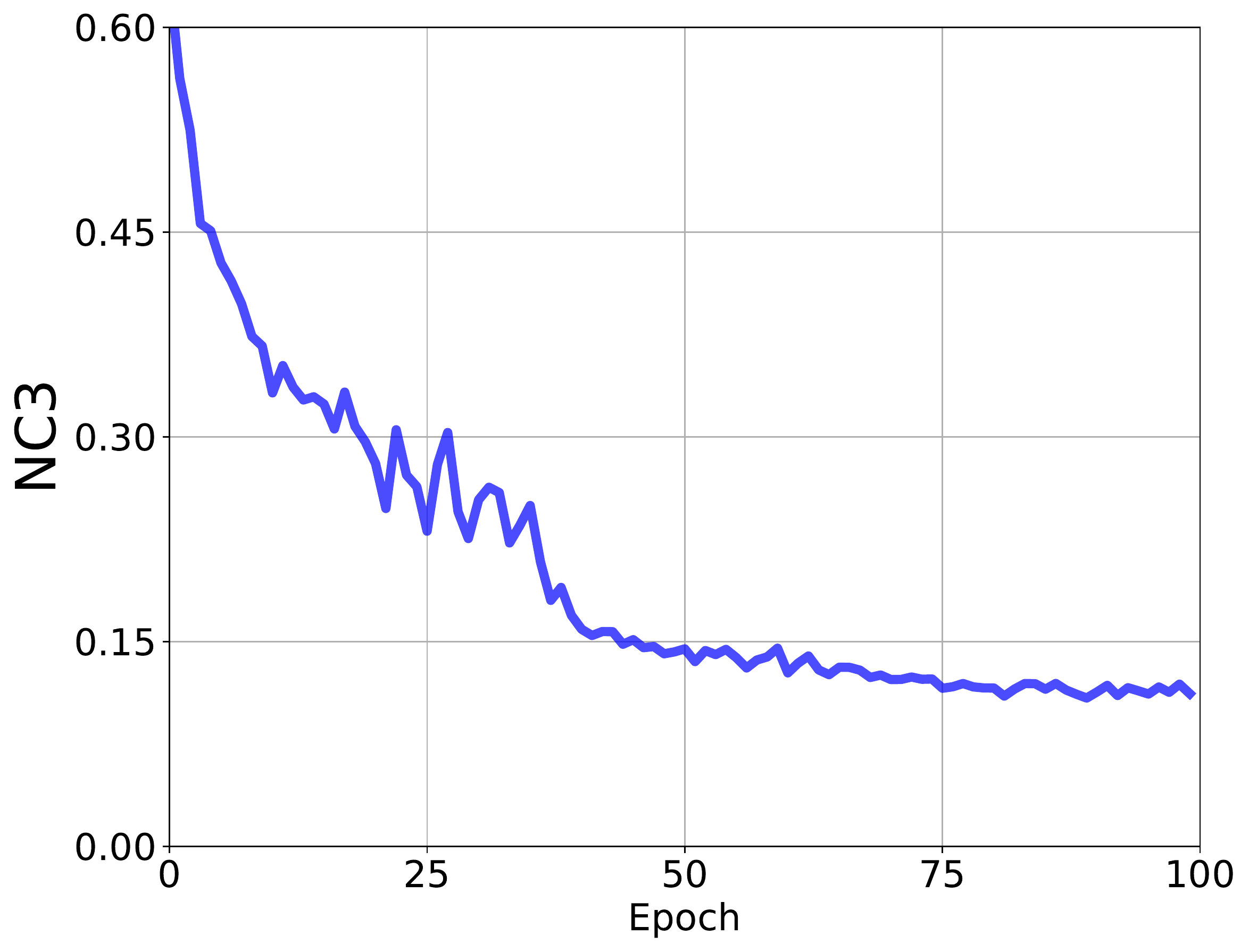} \hspace{3mm}
   \caption{Samples of classes 4 and 9 are blurred, last layer's WD remains 5e-4. The effect of the blurred classes on the NC metrics (avg.~and other classes) is minor.}%
   \label{fig:mnist_3k_blurred_diff_wd_W__b}
   \end{subfigure}%
\vspace{1mm}       
\\
  \begin{subfigure}[b]{1\linewidth} %
    \centering\includegraphics[width=96pt]{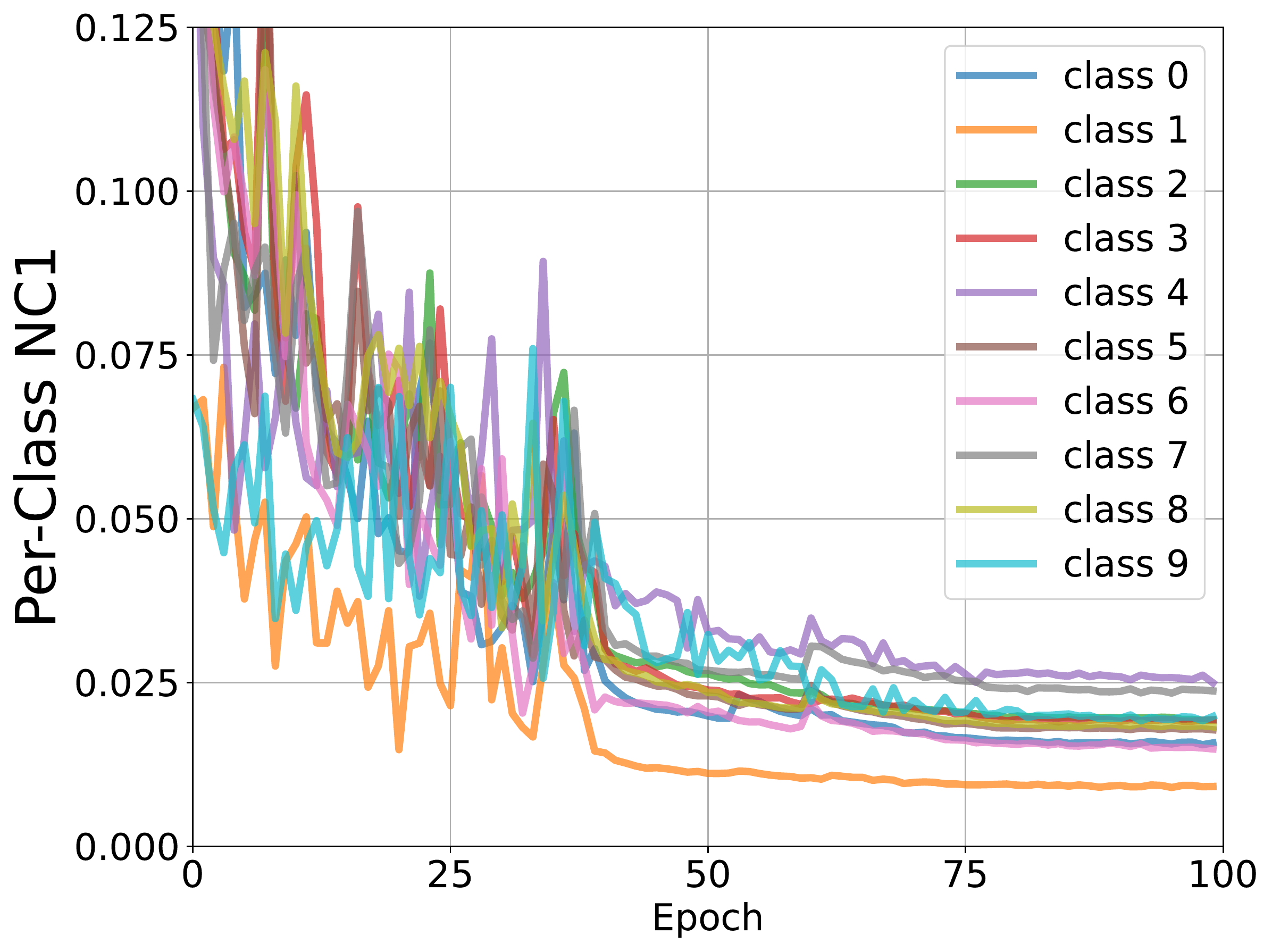} \hspace{3mm}
    \centering\includegraphics[width=96pt]{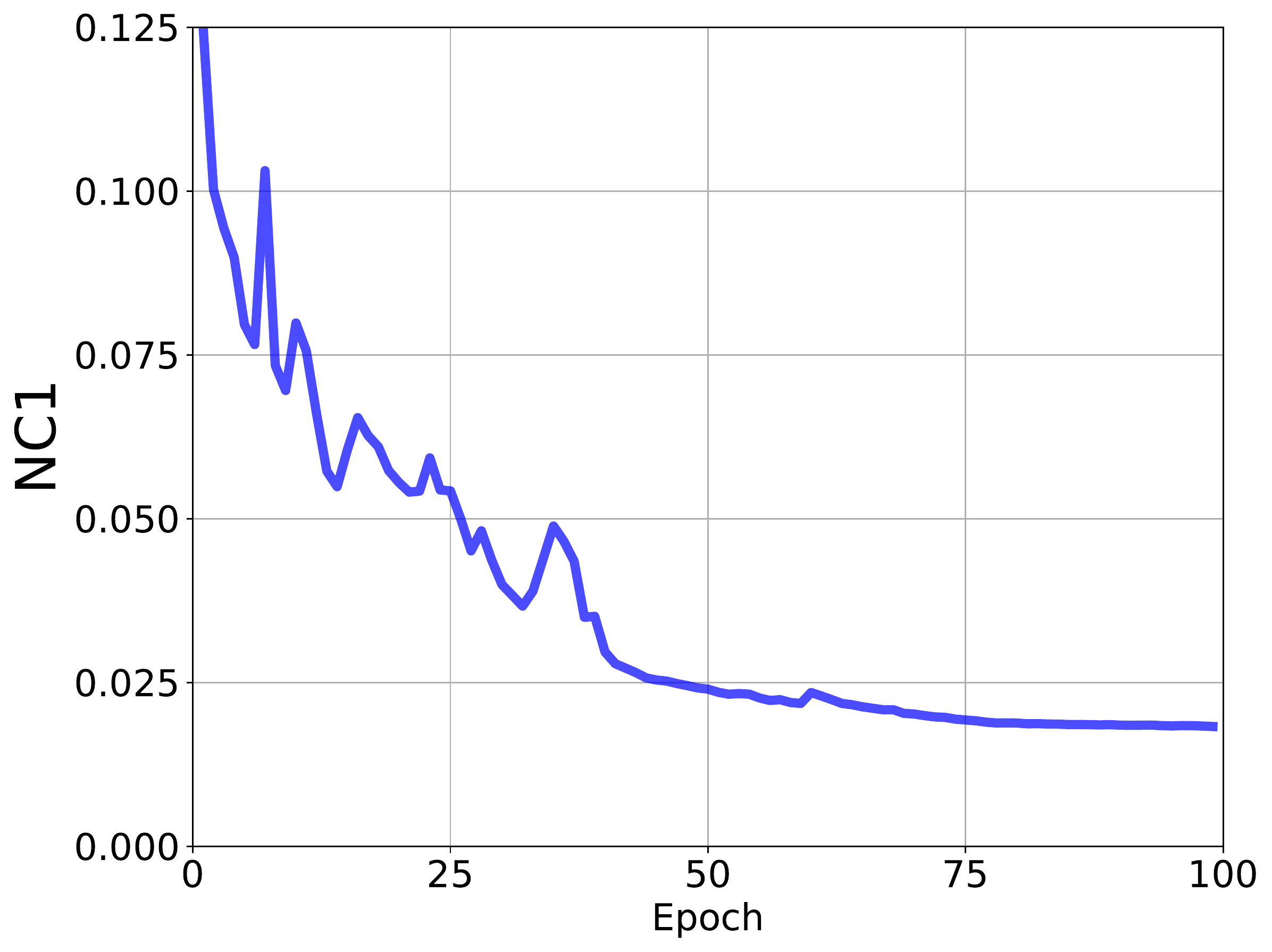} \hspace{3mm}
    \centering\includegraphics[width=96pt]{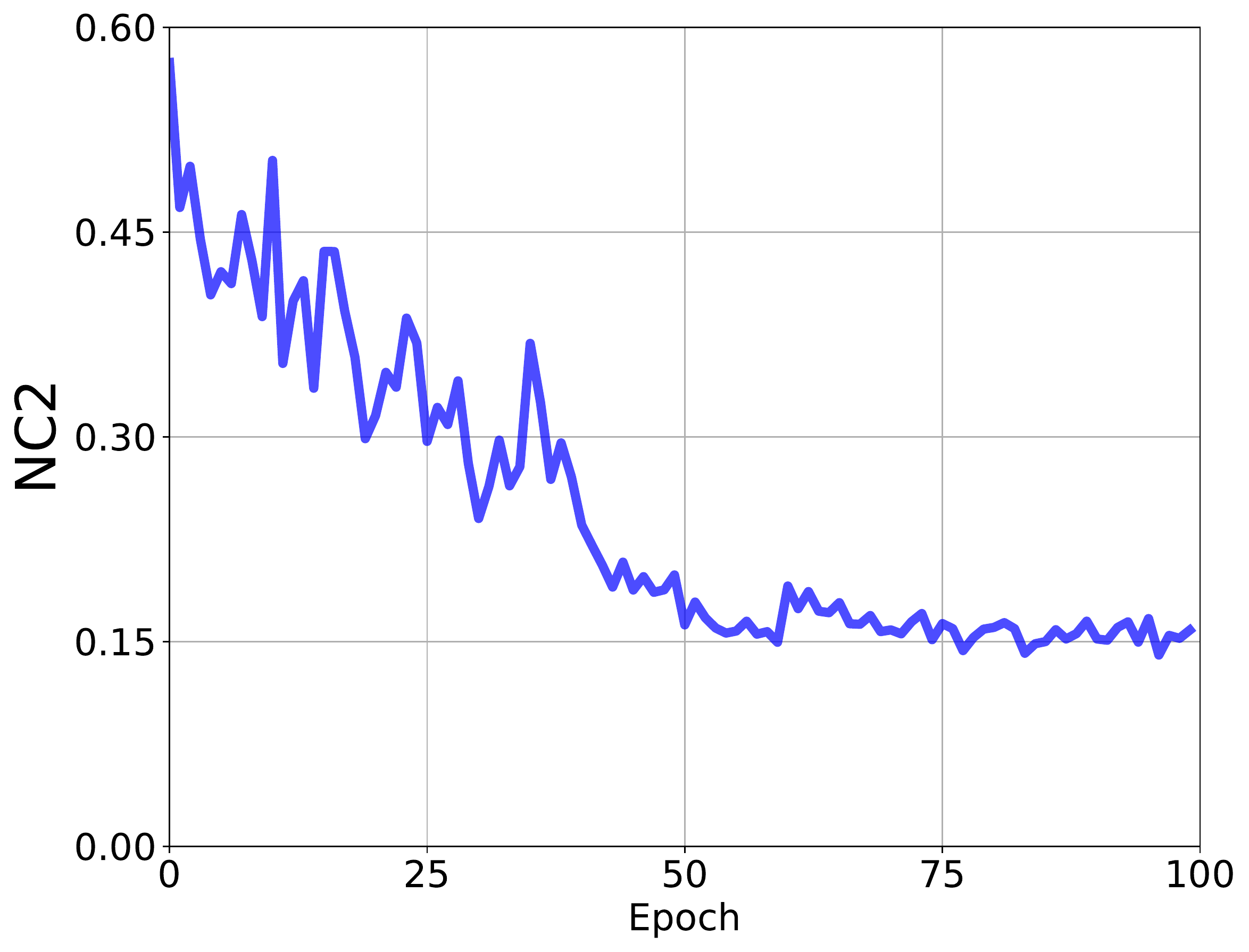} \hspace{3mm}
    \centering\includegraphics[width=96pt]{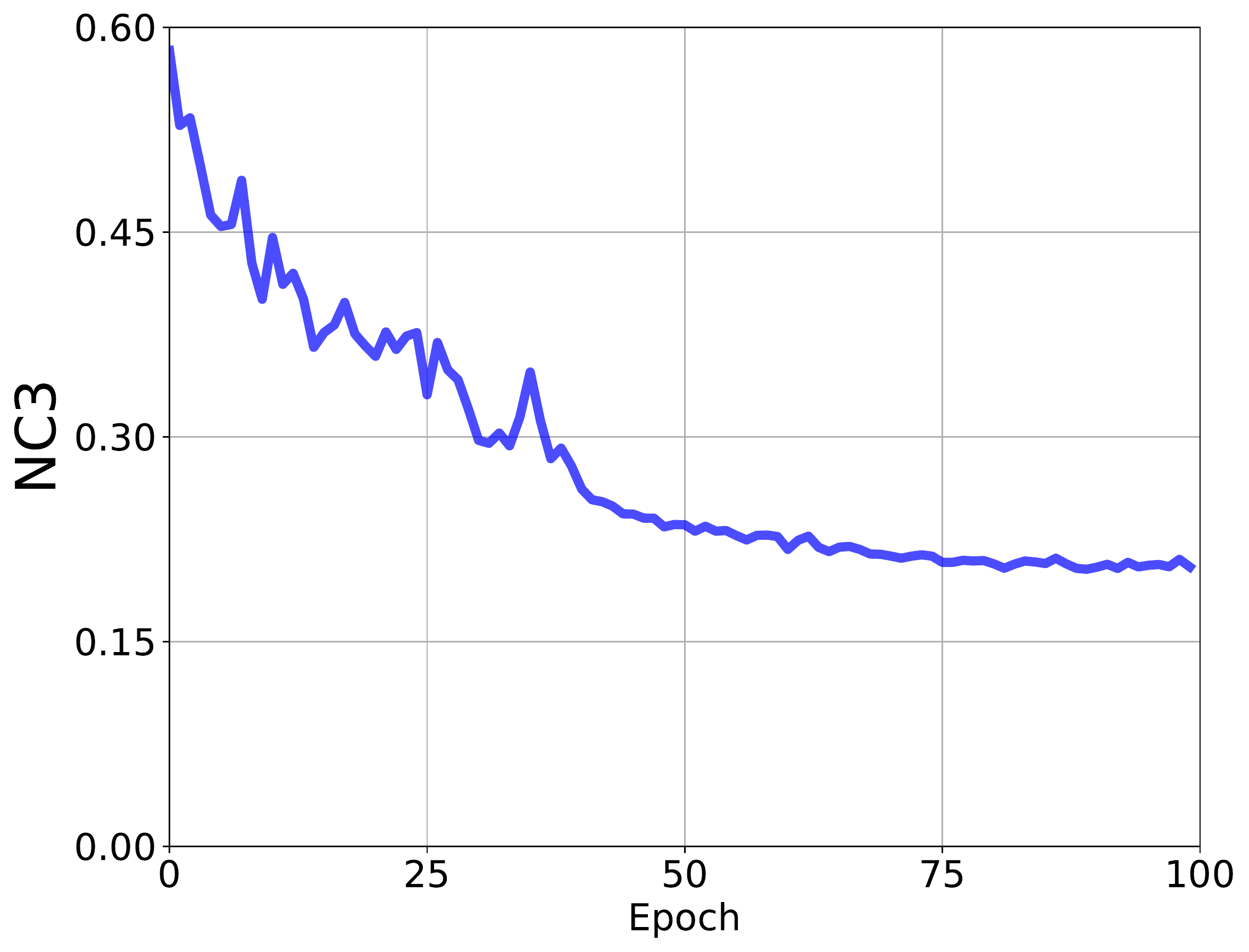} \hspace{3mm}
   \caption{Samples of classes 4 and 9 are blurred, last layer's WD reduced to 5e-5. The blurred classes affect the ``per-class NC1" of other classes and the NC metrics increase.}
   \label{fig:mnist_3k_blurred_diff_wd_W__c}
   \end{subfigure}%
\vspace{1mm}   
\\
  \begin{subfigure}[b]{1\linewidth} %
    \centering\includegraphics[width=96pt]{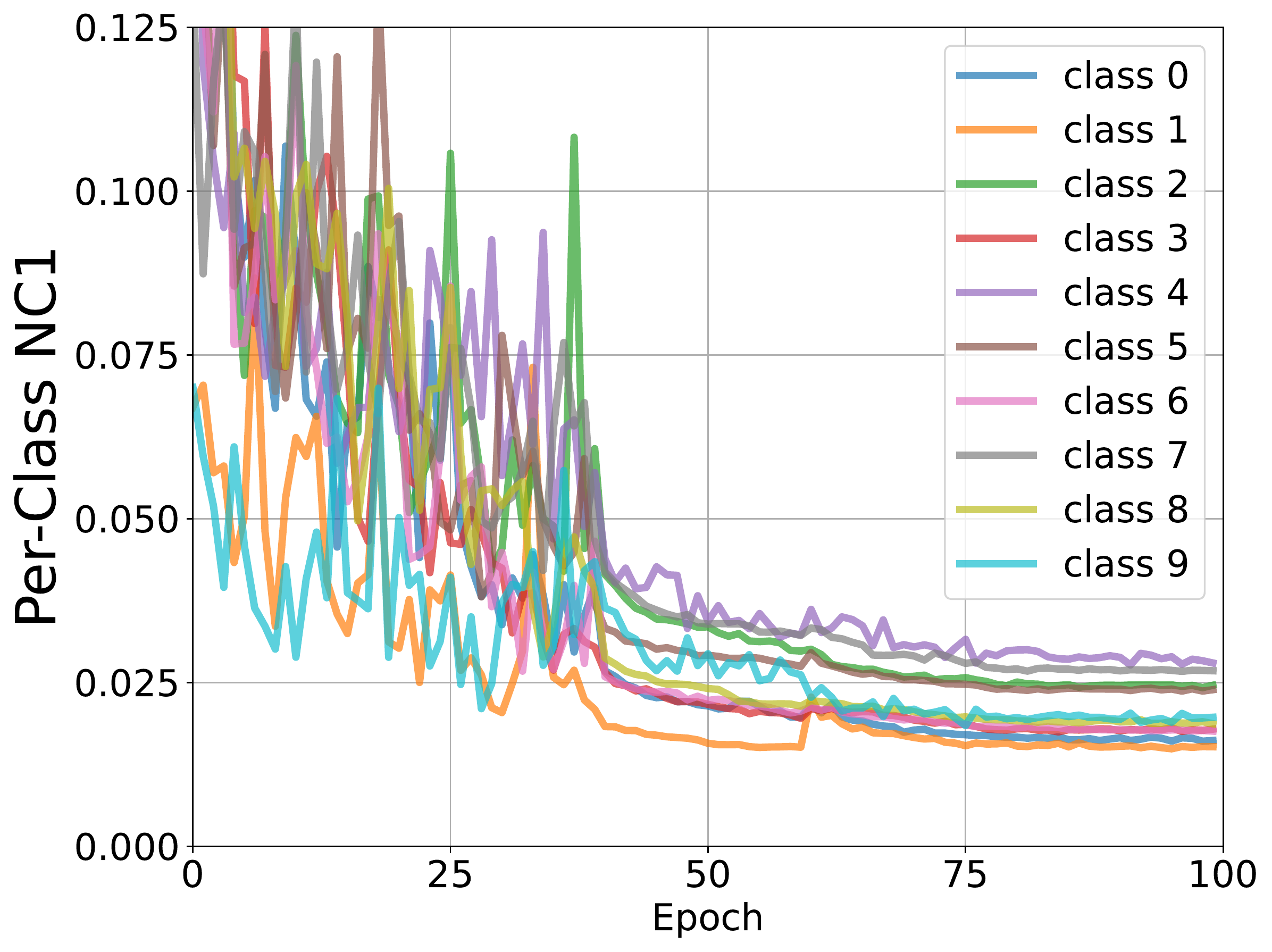} \hspace{3mm}
    \centering\includegraphics[width=96pt]{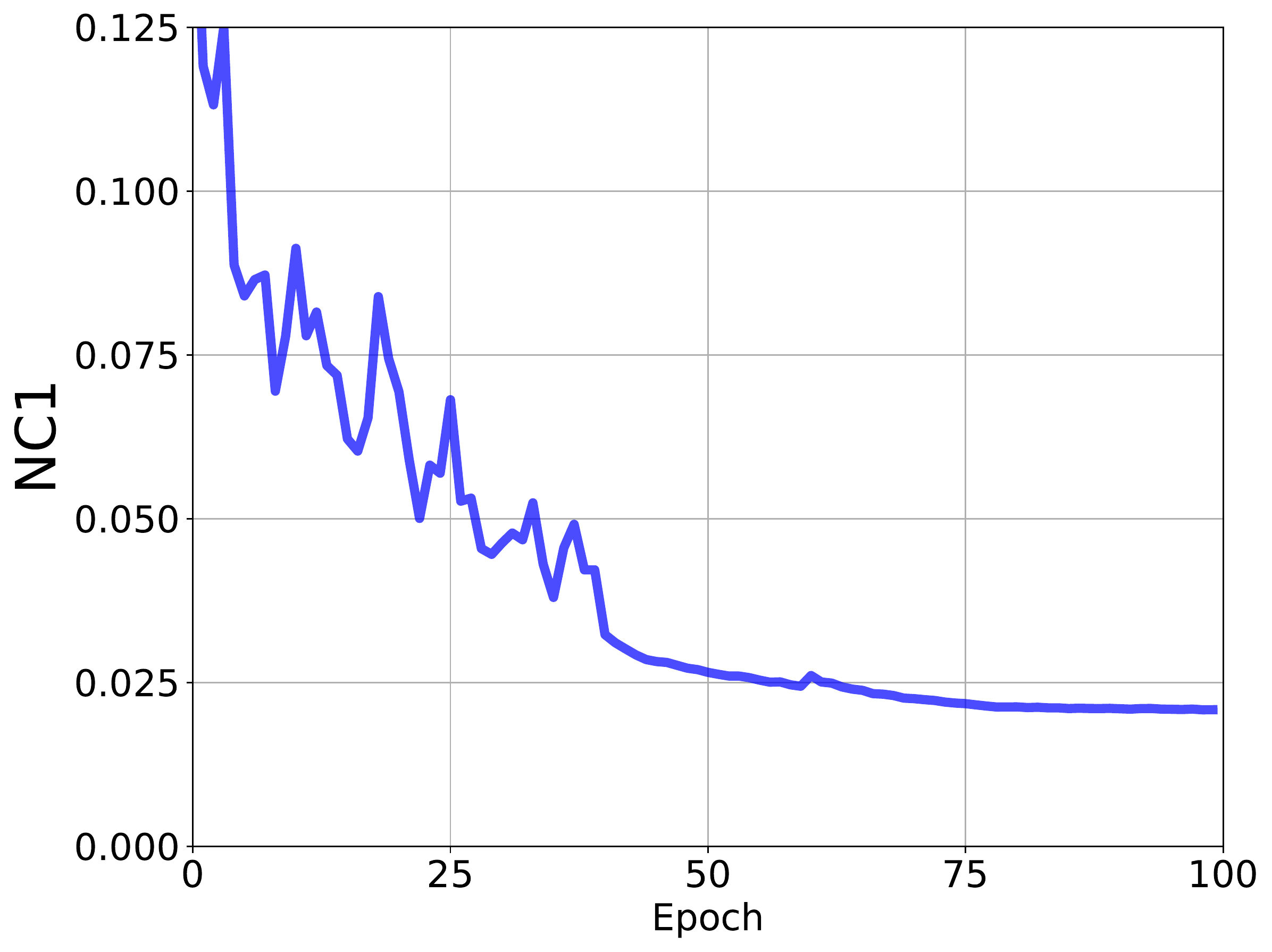} \hspace{3mm}
    \centering\includegraphics[width=96pt]{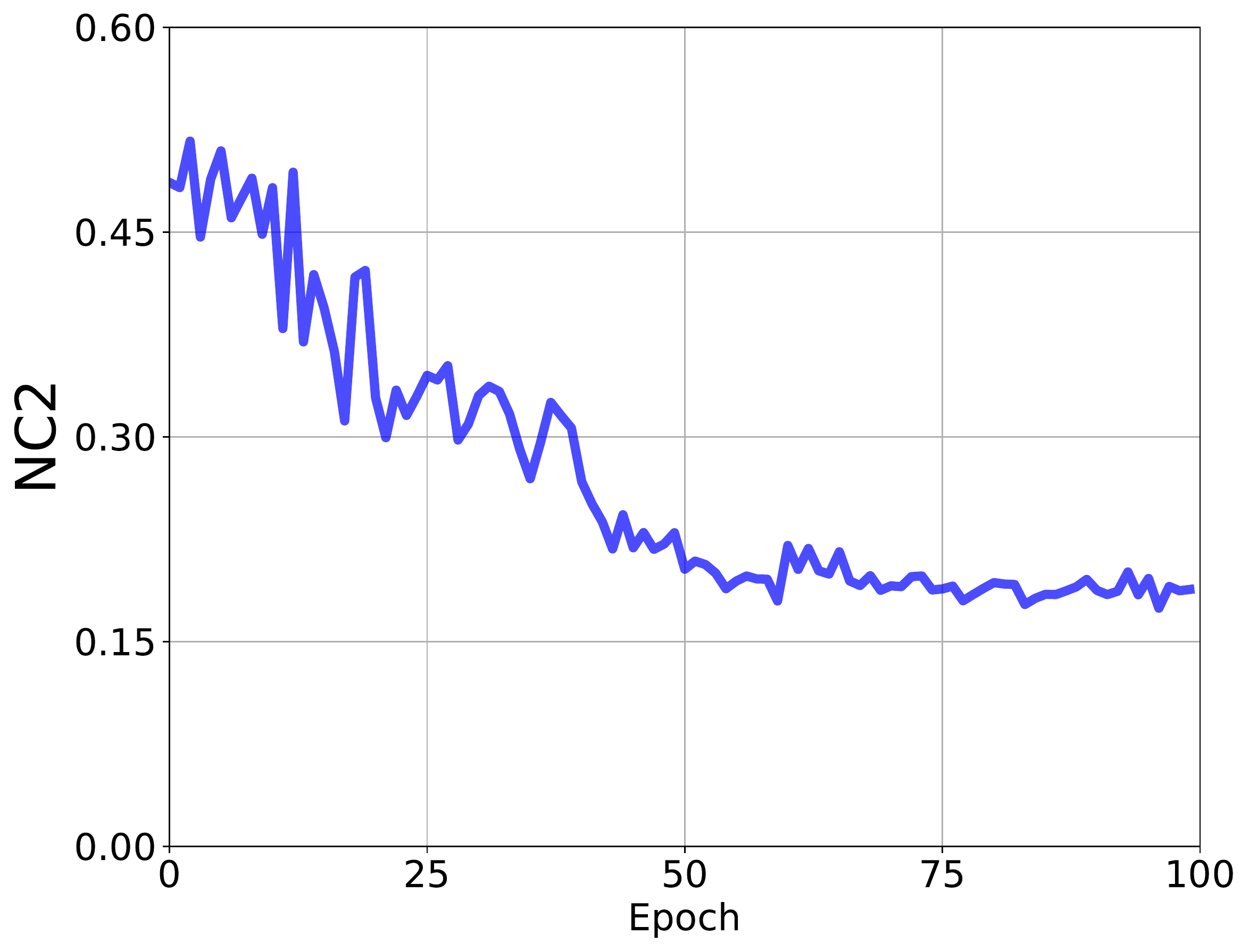} \hspace{3mm}
    \centering\includegraphics[width=96pt]{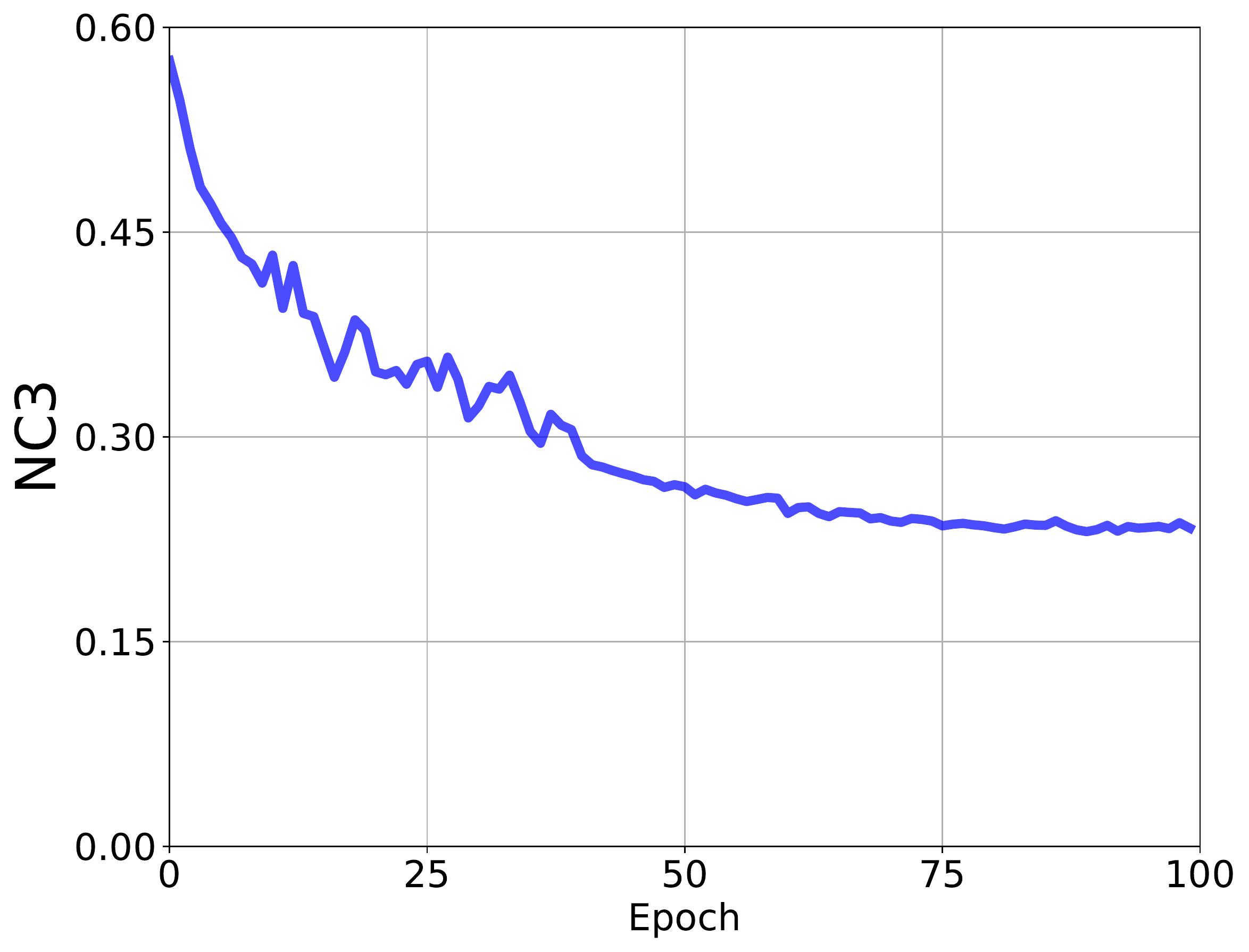} \hspace{3mm}
   \caption{Samples of classes 4 and 9 are blurred, last layer has no WD. The blurred classes further interfere with other classes and the NC metrics further increase.}
   \label{fig:mnist_3k_blurred_diff_wd_W__d}
   \end{subfigure}%
\vspace{1mm}    
    \caption{
    The effect of modifying the weight decay (WD) of the last layer's weights on NC metrics for ResNet18 trained on MNIST with 3K samples per class where {\em samples from classes 4 and 9 are blurred}.
    Observe that small WD in the last layer increases the effect of the ``pre-class NC1" curves of the blurred classes on the other classes, and increases also the other NC metrics.
    }
    \label{fig:mnist_3k_blurred_diff_wd_W} 
\end{figure}

\end{document}